\documentclass[]{bytedance_seed}

\usepackage[toc,page,header]{appendix}

\usepackage{minitoc}

\usepackage{amsthm}
\usepackage{amssymb}
\usepackage{algorithm}
\usepackage[noend]{algcompatible} 
\usepackage{algorithm} 
\usepackage{xcolor}
\usepackage{fix-cm}
\newtheorem{theorem}{Theorem}[section]

\theoremstyle{definition} 

\newtheorem{assumption}[theorem]{Assumption}

\crefname{figure}{Fig.}{Figs.}
\crefname{table}{Tab.}{Tabs.}
\crefname{equation}{Eq.}{Eqs.}
\crefname{section}{Sec.}{Secs.}
\crefname{theorem}{Theorem}{Theorem}
\crefname{algorithm}{Algorithm}{Algorithm}
\crefname{appendix}{Appendix}{Appendices}
\Crefname{appendix}{Appendix}{Appendices}

\newif\ifcomment
\commenttrue

\title{Nexus: Same Pretraining Loss, Better Downstream Generalization via Common Minima}

\author[1,2,\ddagger]{Huanran Chen}
\author[1]{Huaqing Zhang}
\author[2,\dagger]{Xiao Li}
\author[1, *]{Yinpeng Dong}
\author[2]{Ke Shen}
\author[1, *]{Jun Zhu}

\affiliation[1]{Tsinghua University}
\affiliation[2]{ByteDance Seed}

\contribution[\ddagger]{Work done at ByteDance Seed}
\contribution[\dagger]{Project Lead}
\contribution[*]{Corresponding authors}

\abstract{

The foundational capabilities of large language models are acquired during pretraining on internet-scale, highly heterogeneous data mixtures. In this work, we investigate an interesting geometric question regarding the converged state of pretraining: Does the model converge to a common minimizer across all data sources (e.g., \cref{fig:cwa_illustration:close}), or merely a minimizer of the summed loss (e.g., \cref{fig:cwa_illustration:distant})? We hypothesize that the geometric "closeness" of task-specific minima is intrinsically linked to downstream generalization. We reveal that standard optimizers (e.g., AdamW) often converge to points where task-specific minima are distant from each other. To address this, we propose the Nexus optimizer, which encourages the closeness of these minima by maximizing gradient similarity during optimization. Experiments across models ranging from 130M to 3B parameters, various data mixtures and hyperparameter schedules, show that Nexus \textit{significantly boosts downstream performance}, despite \textit{achieving the same pretraining loss} (see \cref{fig:demo:benchmark}). Notably, on the 3B model, Nexus reduces the out-of-distribution loss by 0.012 and yields up to a 15.0\% accuracy improvement on complex reasoning tasks (e.g., GSM8k). This finding challenges the reliance on pretraining loss as the sole proxy for model evaluation and demonstrates the importance of implicit biases in unlocking downstream generalization.

}

\date{\today}

\checkdata[Email]{Yingpeng Dong, Jun Zhu at \email{\{dongyinpeng, dcszj\}@tsinghua.edu.cn}; \\
Huanran Chen, Huaqing Zhang at \email{\{chenhr25, zhanghq22\}@mails.tsinghua.edu.cn}; \\
Xiao Li, Ke Shen at \email{\{lixiao.20, shenke\}@bytedance.com}.}

\begin{document}
\maketitle


\begin{figure}[t]
\vspace{-1ex}
    \centering
    \begin{subfigure}[b]{0.31\linewidth}
        \centering
        \includegraphics[width=\linewidth]{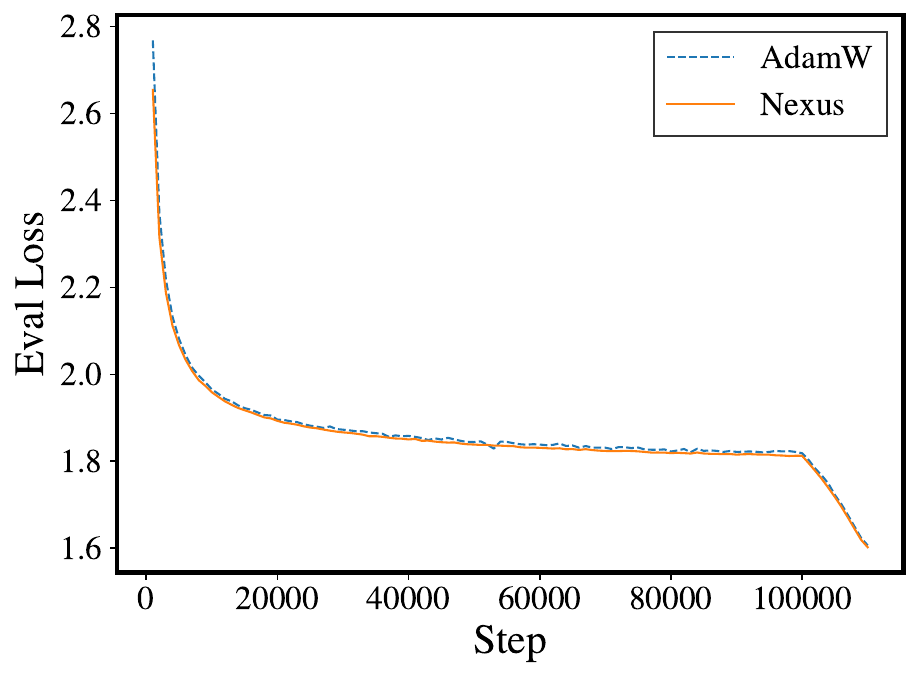}\vspace{-1ex}
        \caption{Pretraining Loss}
    \end{subfigure}
    \begin{subfigure}[b]{0.31\linewidth}
        \centering
        \includegraphics[width=\linewidth]{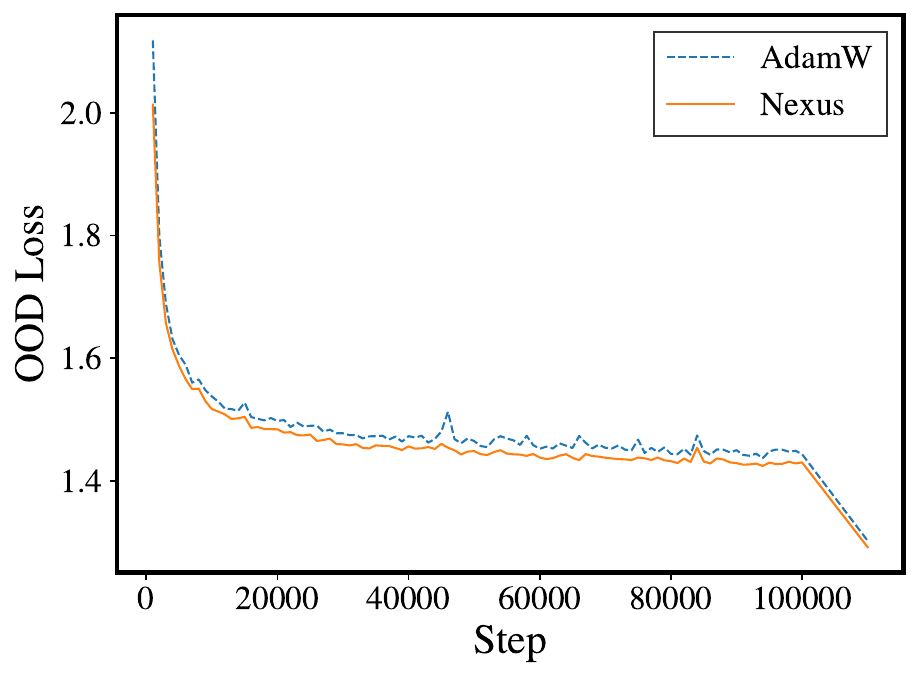}\vspace{-1ex}
        \caption{OOD Loss}
    \end{subfigure}
    \begin{subfigure}[b]{0.31\linewidth}
        \centering
        \includegraphics[width=\linewidth]{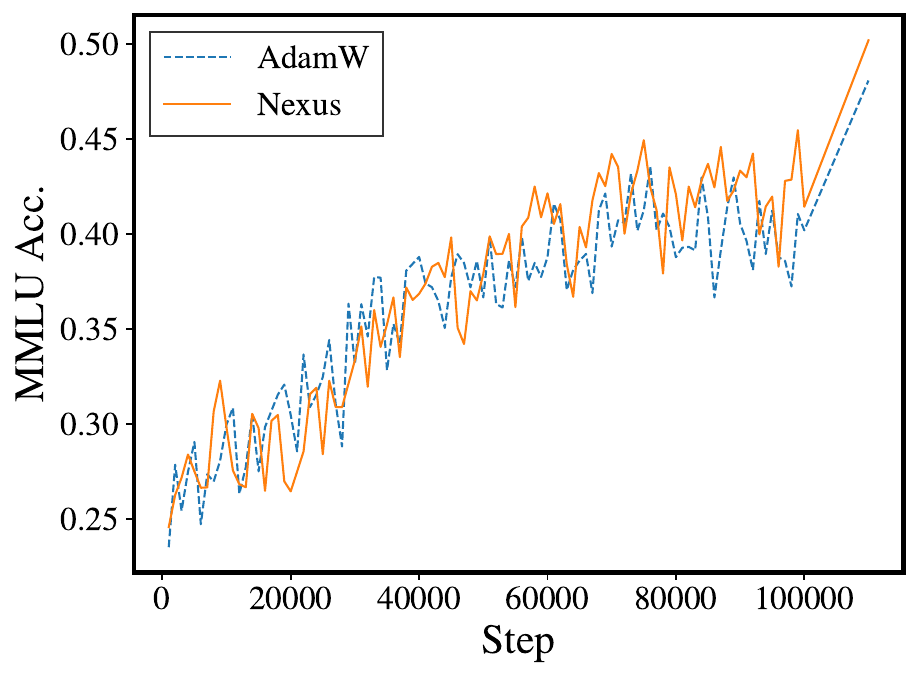}\vspace{-1ex}
        \caption{MMLU Acc.}
    \end{subfigure}\vspace{1ex}
    \begin{subfigure}[b]{0.31\linewidth}
        \centering
        \includegraphics[width=\linewidth]{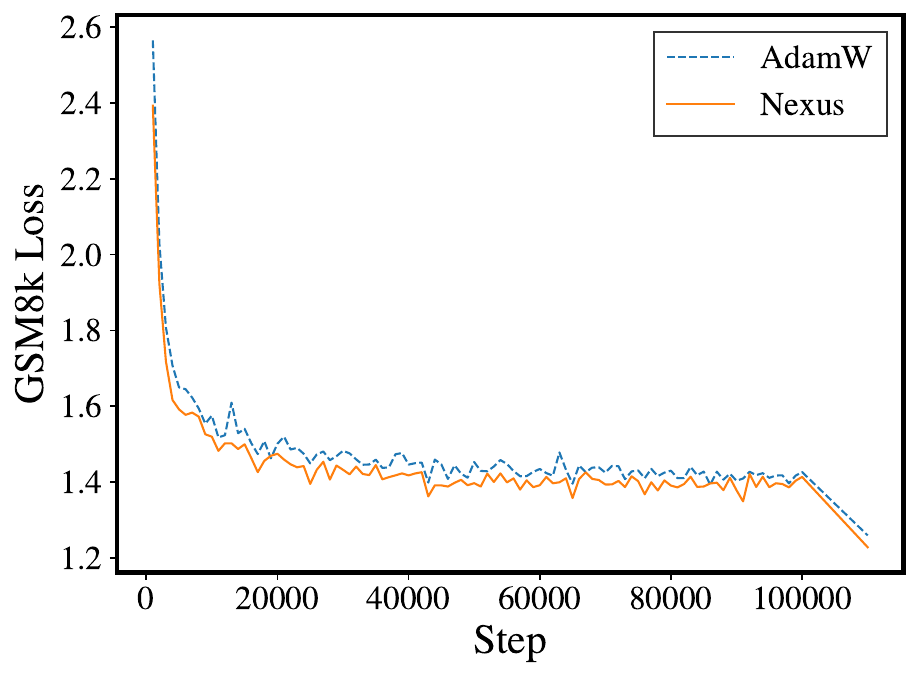}\vspace{-1ex}
        \caption{GSM8k Loss}
    \end{subfigure}
    \begin{subfigure}[b]{0.31\linewidth}
        \centering
        \includegraphics[width=\linewidth]{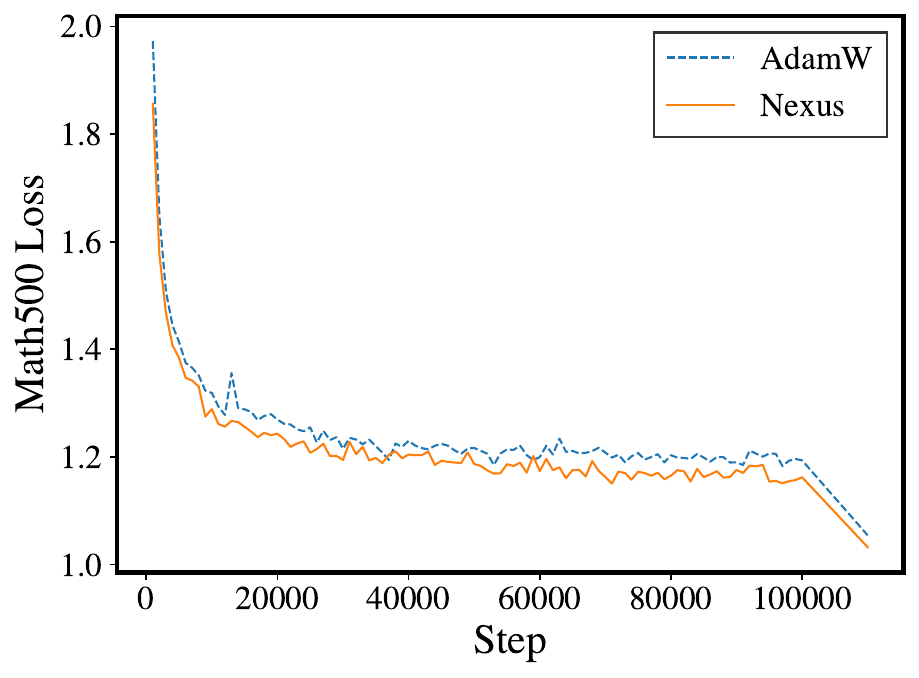}\vspace{-1ex}
        \caption{Math500 Loss}
    \end{subfigure}
    \begin{subfigure}[b]{0.31\linewidth}
        \centering
        \includegraphics[width=\linewidth]{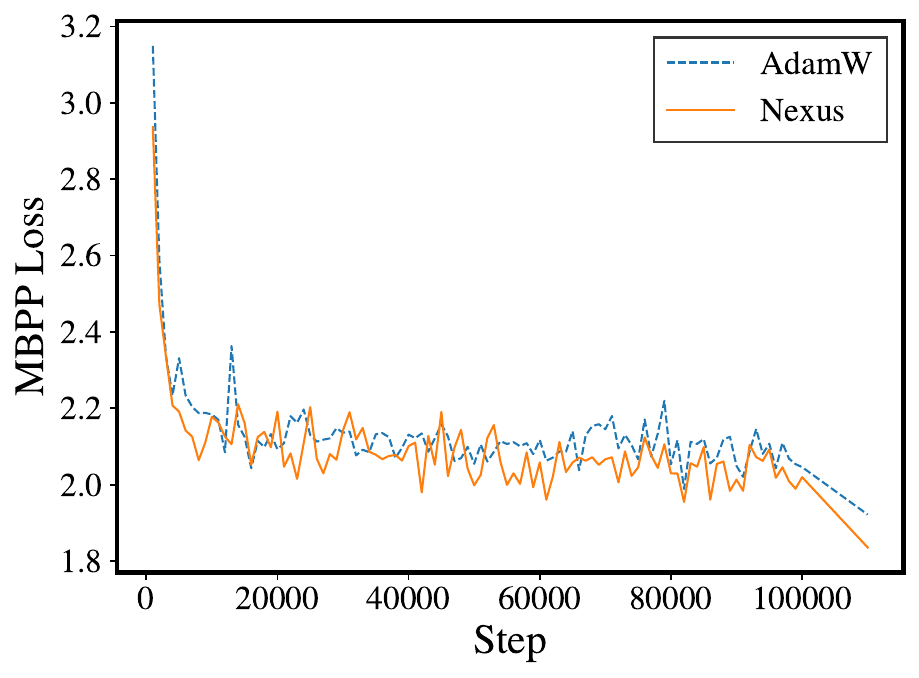}\vspace{-1ex}
        \caption{MBPP Loss}
    \end{subfigure}\vspace{1ex}
    \begin{subfigure}[b]{0.31\linewidth}
        \centering
        \includegraphics[width=\linewidth]{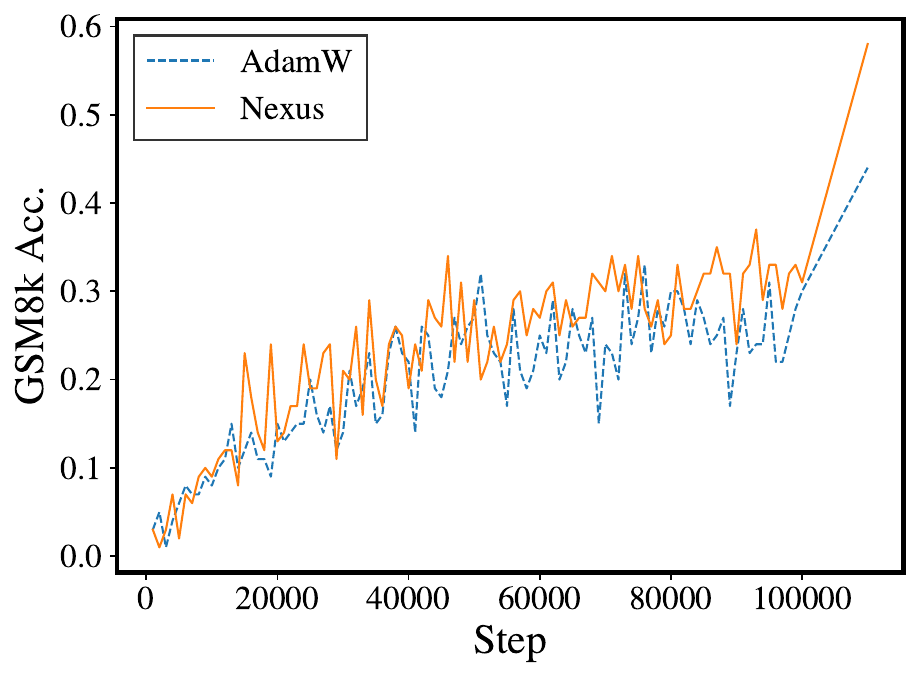}\vspace{-1ex}
        \caption{GSM8k Acc.}
    \end{subfigure}
    \begin{subfigure}[b]{0.31\linewidth}
        \centering
        \includegraphics[width=\linewidth]{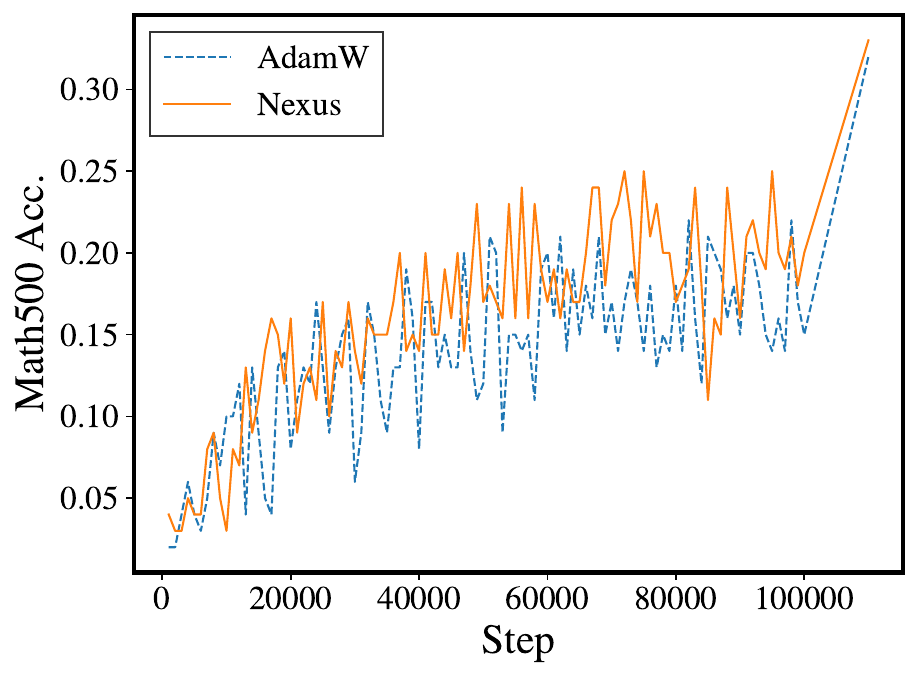}\vspace{-1ex}
        \caption{Math500 Acc.}
    \end{subfigure}
    \begin{subfigure}[b]{0.31\linewidth}
        \centering
        \includegraphics[width=\linewidth]{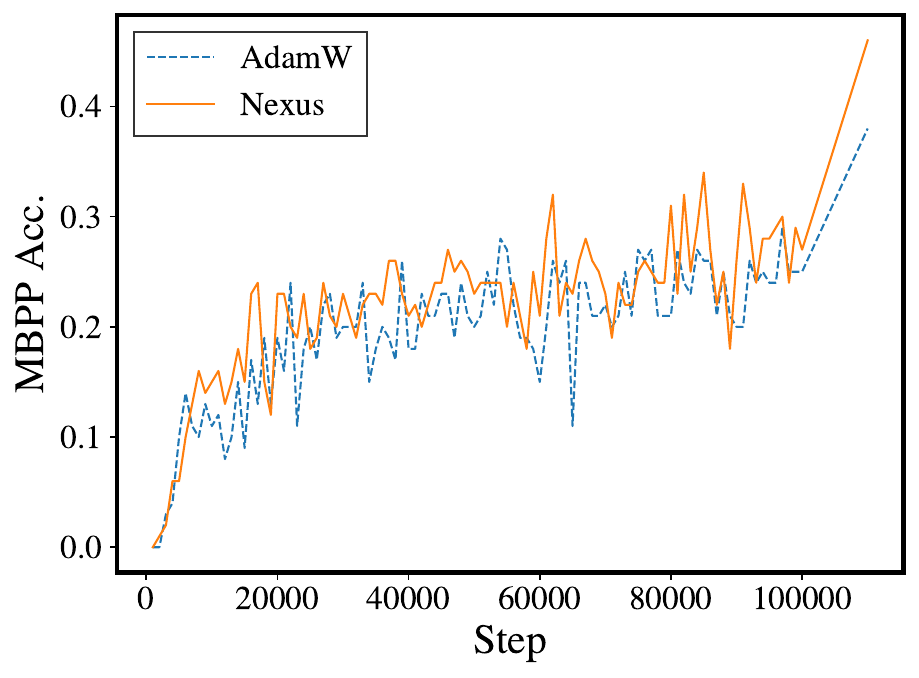}\vspace{-1ex}
        \caption{MBPP Acc.}
    \end{subfigure}
    \caption{Illustration of ``same pretraining loss, better downstream task''. The training loss of baseline and our Nexus are nearly the same. However, our methods achieves better downstream generalization.}
    \label{fig:demo:benchmark}
    \vspace{-4ex}
\end{figure}

\section{Introduction}

Pretraining is the cornerstone of Large Language Models (LLMs). Accounting for over 95\% of the total computational budget and data, it serves as the indispensable engine for their capabilities~\cite{liu2024deepseekv3, yang2024qwen25,dubey2024llama,olmo20252olmo2furious}. During pretraining, LLMs acquire foundational knowledge from an unprecedentedly massive and diverse data sources, encompassing a vast array of domains such as general language, mathematics, code, and complex reasoning~\cite{openai2023gpt,anthropic2024claude,team2023gemini}. To learn from such a heterogeneous corpus of $K$ distinct sources, the standard practice is to average the loss of each data source $\mathcal{L}_k(\bm{\theta})$ and minimize the averaged loss $\mathcal{L}_{\text{train}}(\bm{\theta}) = \frac{1}{K} \sum_{k=1}^K \mathcal{L}_k(\bm{\theta})$.

In this work, we investigate an interesting geometric question: \textit{Does the model converge to a common minimizer across all data sources $\mathcal{L}_k$, or does it merely find a minimizer of the summed loss $\mathcal{L}_{\text{train}}$?} To illustrate this, consider a simplified setting composed of two data sources ($\mathcal{L}_1$ and $\mathcal{L}_2$), yielding a training loss of $\mathcal{L}_{\text{train}}(\bm{\theta}) = \frac{1}{2}(\mathcal{L}_1(\bm{\theta}) + \mathcal{L}_2(\bm{\theta}))$. As depicted in \cref{fig:cwa_illustration}, there exist two distinct types of minimizers that achieve the exact same training loss $\mathcal{L}_{\text{train}}$. The first corresponds to the \textbf{Average-type Minimizer} (\cref{fig:cwa_illustration:distant}), where the converged parameter $\bm{\theta}_{\text{train}}^*$ successfully minimizes the total training loss $\mathcal{L}_{\text{train}}$ yet may remain geometrically distant from the minimizers of individual tasks $\mathcal{L}_k$. The second approaches the \textbf{Intersection-type Minimizer} (\cref{fig:cwa_illustration:close}), where $\bm{\theta}_{\text{train}}^*$ is not only a minimizer of $\mathcal{L}_{\text{train}}$, but is also geometrically close to the minimizer of each individual task $\mathcal{L}_k$.

We hypothesize that this geometric ``\textbf{closeness}''---the distance between task-specific minima---is strongly correlated with \textbf{downstream generalization}. Even when achieving the exact \textit{same pretraining loss}, these two types of minimizers yield drastically \textit{different downstream} losses $\mathcal{L}_{\mathcal{T}}$ (see the blue curve $\mathcal{L}_{\mathcal{T}}(\bm{\theta})$ in \cref{fig:cwa_illustration}). Intuitively, if the training losses $\mathcal{L}_k$ and downstream task $\mathcal{L}_{\mathcal{T}}$ are quadratic and i.i.d. distributed, the Intersection-type minimizer (\cref{fig:cwa_illustration:close}) will strictly outperform the Sum-type minimizer (\cref{fig:cwa_illustration:distant}) on the downstream task $\mathcal{L}_{\mathcal{T}}$, given the same pretraining loss (see \cref{theorem:closeness_generalization:simple}). Therefore, we posit that this intuition may generalize beyond quadratics to LLM pretraining, and steering the optimization toward the Intersection-type minimizer would achieve the ``same pretraining loss, better downstream task''.



\begin{figure}[t]
    \centering
    \begin{subfigure}[b]{0.45\linewidth}
        \centering
        \includegraphics[width=\linewidth]{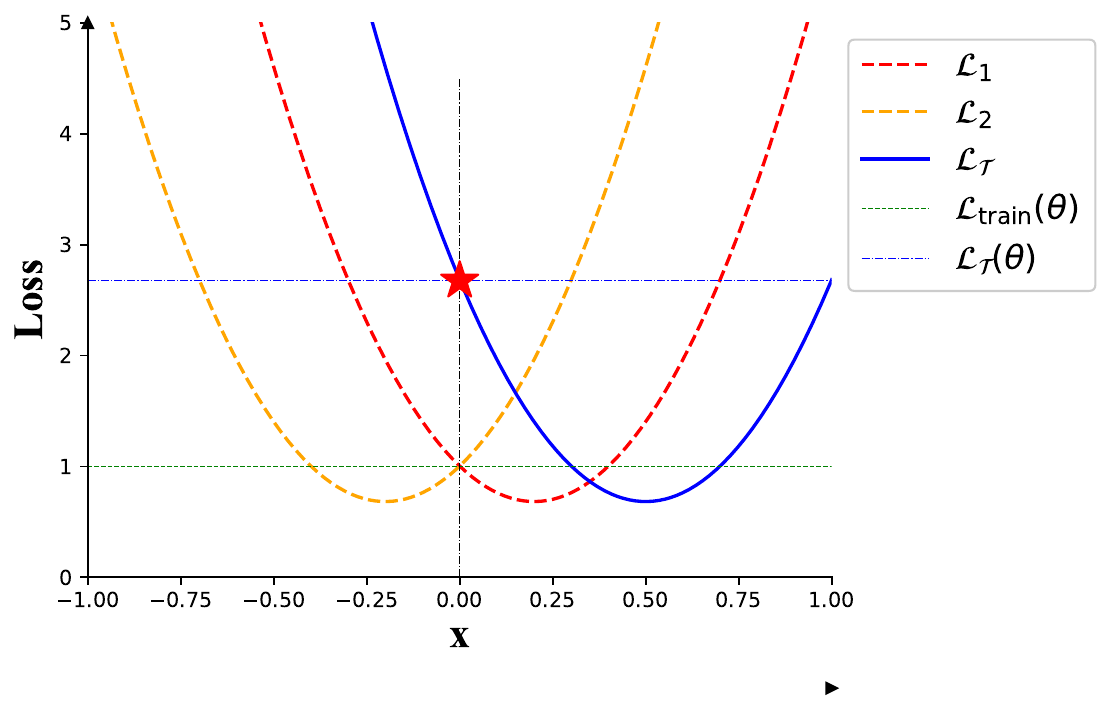}
        \caption{Distant (Average-type Minimizer)}
        \label{fig:cwa_illustration:distant}
    \end{subfigure}
    \hfill
    \begin{subfigure}[b]{0.45\linewidth}
        \centering
        \includegraphics[width=\linewidth]{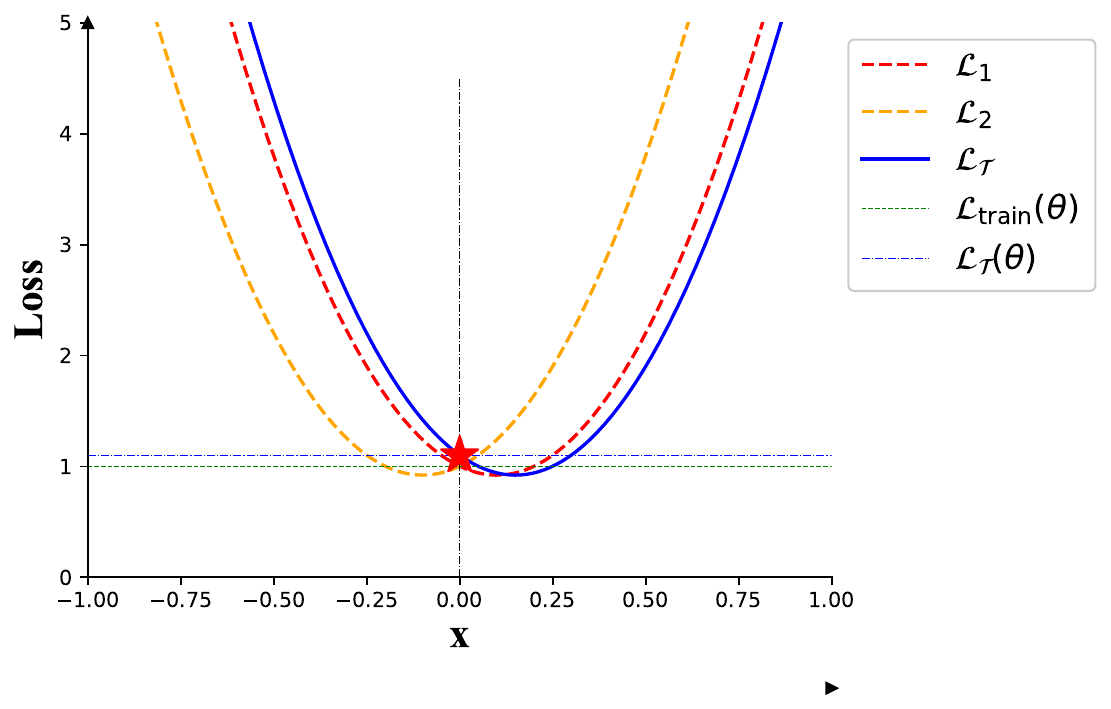}
        \caption{Close (Intersection-type Minimizer)}
        \label{fig:cwa_illustration:close}
    \end{subfigure}
    
    \caption{Illustration of two types of minimizer. (a) Distant: Minimizers of each source are distant from each other. (b) Close: Minimizers are geometrically close to each other. Although both configurations achieve the \textit{same total training loss}, they perform \textit{significantly differently on a new downstream task} $\mathcal{L}_{\mathcal{T}}$.}
    \label{fig:cwa_illustration}
\end{figure}


However, directly optimizing for this geometric ``closeness'' is computationally intractable, as it requires knowing the exact minimizer of each $\mathcal{L}_k$ at every training step. To overcome this, we prove that the gradient similarity between tasks, $\text{CosSim}(\nabla \mathcal{L}_i, \nabla \mathcal{L}_j) \triangleq \frac{\nabla \mathcal{L}_i^T \nabla \mathcal{L}_j}{\|\nabla \mathcal{L}_i\| \|\nabla \mathcal{L}_j\|} $, upper bounds the geometric closeness. The rationale is straightforward: if the gradient directions of each loss $\nabla \mathcal{L}_k$ are always exactly the same throughout optimization, their respective minimizers $\bm{\theta}_k^*$ must be exactly the same. Based on this insight, we propose the Nexus algorithm, which approximates the gradient of gradient similarity $\nabla \text{CosSim}(\nabla \mathcal{L}_i, \nabla \mathcal{L}_j)$. Combining Nexus with pretraining optimizer~\cite{wen2025fantastic,kingma2014adam,jordan2024muon} effectively maximizes $\text{CosSim}(\nabla \mathcal{L}_i, \nabla \mathcal{L}_j)$. In \cref{sec:exp_validate_theory}, we show that both gradient similarity and geometric closeness generalize to downstream tasks, thus leading to lower downstream loss and better downstream performance, even when achieving the same pretraining loss.

We empirically validate Nexus across various settings, including model scales ranging from 130M to 3B parameters~\cite{yang2024qwen25,wen2025fantastic,touvron2023llama}, diverse pretraining data and mixtures~\cite{seed2025seed-oss,basant2025nvidia_nemotron}, learning rate schedules~\cite{wen2025understanding,loshchilov2017sgdr,hu2024minicpm}, and training compute allocations~\cite{kaplan2020scaling}. Experimental results demonstrate that, across nearly all settings, Nexus consistently enhances downstream capabilities by reducing downstream losses while achieving nearly the same pretraining loss. For instance, on the 3B model, Nexus improves GSM8K accuracy by 15\%, MATH500 by 8\%, and HumanEval by 4\%. These consistent and substantial downstream gains demonstrate the importance of implicit biases in unlocking downstream generalization~\cite{liu2023same}, particularly as the current pretraining paradigm transitions from being compute-bound to data-bound~\cite{springer2025overtrained,kim2025pre,prabhudesai2025diffusion,ni2025diffusion}.

\section{Closeness: A Second-Order Property Related to Generalization}

\subsection{Problem Formulation}

Formally, let the pretraining corpus be the union of $K$ distinct data sources, denoted as $\mathcal{D}_{\text{train}} = \cup_{k=1}^{K} \mathcal{D}_k$. Let $\alpha_k$ represent the sampling probability (data mixing ratio) for the $k$-th source. We define the \textit{weighted} empirical loss function for the $k$-th source as:
\begin{equation}
    \mathcal{L}_k(\bm{\theta}) = - \frac{\alpha_k }{|\mathcal{D}_k|} \sum_{j=1}^{|\mathcal{D}_k|} \log p(x_j|\bm{\theta}).
\end{equation}
Consequently, the total pretraining objective is simply the average of these weighted losses:
\begin{equation}
    \mathcal{L}_{\text{train}}(\bm{\theta}) = \frac{1}{K}\sum_{k=1}^K \mathcal{L}_k(\bm{\theta}).
\end{equation}

\subsection{Flatness and Closeness are both Second Order Generalization Biases}

Our primary interest lies in how well our pretraining minimizer $\bm{\theta}_{\text{train}}^* \in \arg\min_{\bm{\theta}} \mathcal{L}_{\text{train}}(\bm{\theta})$ performs on the downstream task $\mathcal{T}$, i.e., the downstream loss $\mathcal{L}_{\mathcal{T}}(\bm{\theta}_{\text{train}}^*)$.

Let $\mathcal{S}_{\mathcal{T}} = \{ \bm{\theta}\mid \exists \epsilon > 0, \forall \bm{\theta}' \in B_\epsilon(\bm{\theta}), \mathcal{L}_{\mathcal{T}}(\bm{\theta}) \leq \mathcal{L}_{\mathcal{T}}(\bm{\theta}') \}$ be the set of local minimizers for the downstream task. We define $\bm{\theta}^*_{\mathcal{T}}$ as the closest minimizer of downstream loss:
\begin{equation}
    \bm{\theta}^*_{\mathcal{T}} = \arg\min_{\bm{\theta} \in \mathcal{S}_{\mathcal{T}}} \| \bm{\theta} - \bm{\theta}_{\text{train}}^* \|_2.
\end{equation}
By applying a second-order Taylor expansion with the Lagrange remainder around the optimal point $\bm{\theta}^*_{\mathcal{T}}$, we can characterize the downstream loss at the pretraining minimizer $\bm{\theta}_{\text{train}}^*$:
\begin{equation}
    \begin{aligned}
        \mathcal{L}_{\mathcal{T}}(\bm{\theta}_{\text{train}}^*) &= \mathcal{L}_{\mathcal{T}}(\bm{\theta}^*_{\mathcal{T}}) + (\bm{\theta}_{\text{train}}^* - \bm{\theta}^*_{\mathcal{T}})^\top \nabla \mathcal{L}_{\mathcal{T}}(\bm{\theta}^*_{\mathcal{T}}) + \frac{1}{2} (\bm{\theta}_{\text{train}}^* - \bm{\theta}^*_{\mathcal{T}})^\top \nabla^2 \mathcal{L}_{\mathcal{T}}(\bm{\xi}) (\bm{\theta}_{\text{train}}^* - \bm{\theta}^*_{\mathcal{T}})  \\
        &= \mathcal{L}_{\mathcal{T}}(\bm{\theta}^*_{\mathcal{T}}) + \frac{1}{2} \underbrace{\| \bm{\theta}_{\text{train}}^* - \bm{\theta}^*_{\mathcal{T}} \|^2_2}_{\text{Closeness}} \cdot \underbrace{ \bm{u}^\top \nabla^2 \mathcal{L}_{\mathcal{T}}(\bm{\xi}) \bm{u}}_{\text{Directional Sharpness}},
    \end{aligned}
\label{eq:generalization:flatness_closeness_expansion}
\end{equation}
where $\bm{\xi} \in [\bm{\theta}^*_{\mathcal{T}}, \bm{\theta}_{\text{train}}^*]$ denotes an intermediate point on the line segment connecting $\bm{\theta}^*_{\mathcal{T}}$ and $\bm{\theta}_{\text{train}}^*$, and $\bm{u} = \frac{\bm{\theta}_{\text{train}}^* - \bm{\theta}^*_{\mathcal{T}}}{\|\bm{\theta}_{\text{train}}^* - \bm{\theta}^*_{\mathcal{T}}\|_2}$ represents the unit directional vector. 
Note that the first-order term vanishes because $\bm{\theta}^*_{\mathcal{T}}$ is a local minimizer (i.e., $\nabla \mathcal{L}_{\mathcal{T}}(\bm{\theta}^*_{\mathcal{T}}) = \mathbf{0}$). The remaining term is controlled by two scalars: the directional \textit{Flatness} $\bm{u}^\top \nabla^2 \mathcal{L}_{\mathcal{T}}(\bm{\xi}) \bm{u} \leq \|\nabla^2 \mathcal{L}_{\mathcal{T}}(\bm{\xi})\|_2 \leq \|\nabla^2 \mathcal{L}_{\mathcal{T}}(\bm{\xi})\|_F$, and crucially, the \textit{Closeness} $\| \bm{\theta}_{\text{train}}^* - \bm{\theta}^*_{\mathcal{T}} \|^2_2$.

Therefore, the flatter the local loss landscape of $\mathcal{L}_{\mathcal{T}}$ and the closer the converged parameter $\bm{\theta}_{\text{train}}^*$ is to the task minimizer $\bm{\theta}^*_{\mathcal{T}}$, the better the generalization. Together, flatness and closeness encapsulate all second-order information for downstream generalization. While flatness has been well-studied in prior literature \cite{SAM,srivastava2014dropout,chen2025understanding,kwon2021asam,zhang2024duality}, in this work, we focus solely on our new implicit bias: closeness.

\subsection{Closeness Improves Out-of-Distribution Generalization}

\cref{eq:generalization:flatness_closeness_expansion} reveals that the closeness between the trained parameters and the downstream task minimizers directly correlates with downstream generalization. In other words, if one could minimize $\| \bm{\theta}_{\text{train}}^* - \bm{\theta}^*_{\mathcal{T}}\|^2_2$ without compromising the intrinsic loss $\mathcal{L}_{\mathcal{T}}(\bm{\theta}^*_{\mathcal{T}})$ and the flatness, one would directly boost downstream generalization. 

However, in practice, minimizing closeness typically comes at a cost: either (1) an increase in intrinsic loss $\mathcal{L}_{\mathcal{T}}(\bm{\theta}^*_{\mathcal{T}})$ or (2) an increase in sharpness (see \cref{sec:exp_validate_theory}). This trade-off is expected; if one were to minimize the closeness even among training tasks (i.e., $\frac{1}{K}\sum_{k=1}^K\|\bm{\theta}_{\text{train}}^* - \bm{\theta}^*_{k}\|^2_2$) without penalty, it would imply achieving significantly smaller training error and faster optimization rates. This contradicts the prevailing assumption and empirical observations regarding the inherent hardness of discovering significantly faster optimizers~\cite{wen2025fantastic,semenov2025benchmarking}.


In this paper, we specifically focus on the "same training loss" regime. We demonstrate that a "close" minimizer (\cref{fig:cwa_illustration:close}) yields significantly better out-of-distribution generalization compared to a "distant" minimizer (\cref{fig:cwa_illustration:distant}), even at the \textit{same pretraining loss} $\mathcal{L}_{\text{train}}(\bm{\theta}) = \frac{1}{K} \sum_{k=1}^K \mathcal{L}_{k}(\bm{\theta})$. We analyze the specific scenario where improved closeness is achieved solely at the cost of increasing the intrinsic task loss $\mathcal{L}_{k}(\bm{\theta}^*_{k})$. This assumption decouples our analysis from the flatness bias (thereby eliminating flatness as a confounding factor) and aligns with the actual behavior observed in our experiments (see \cref{sec:exp_validate_theory}).

The core intuition is illustrated in \cref{fig:cwa_illustration}: as long as the loss landscape is quadratic-like along the directions of interest (i.e., locally and directionally strongly convex), and the pretraining and downstream tasks share a common task distribution, improved closeness will inherently lead to a lower generalization gap. We begin with a simplified analysis assuming strictly quadratic loss functions to mathematically substantiate this intuition.

\begin{theorem}[Generalization of Closeness in the Quadratic Case]
\label{theorem:closeness_generalization:simple}
To model the non-convex landscape, assume the parameter space $\mathbb{R}^d$ is partitioned into a set of disjoint basins of attraction $\{\mathcal{B}\}$. Within any specific basin $\mathcal{B}$, assume that any task $\mathcal{L}$ sampled from a distribution $\mathcal{P}$ is locally a quadratic function: $\mathcal{L}(\bm{\theta}) = \frac{a}{2}\|\bm{\theta}- \bm{\theta}_{\mathcal{L}}^* \|^2_2 + c_{\mathcal{B}}$, where the local task minimizers are distributed as $\bm{\theta}_{\mathcal{L}}^* \sim \mathcal{P}(\bm{\mu}_{\mathcal{B}}, \sigma_{\mathcal{B}}^2 \mathbf{I})$ with mean $\bm{\mu}_{\mathcal{B}}$ and variance $\sigma_{\mathcal{B}}^2$, and $c_{\mathcal{B}}$ is the intrinsic loss (depth) of basin $\mathcal{B}$.

Let the pretraining tasks $\{\mathcal{L}_k\}_{k=1}^K$ and the downstream task $\mathcal{L}_{\mathcal{T}}$ be i.i.d. samples from $\mathcal{P}$. Let $\Theta = \{\bm{\theta}_{\text{train}, \mathcal{B}}^* \mid \mathcal{L}_{\text{train}}(\bm{\theta}_{\text{train}, \mathcal{B}}^*) = C_{\text{train}}\}$ be the set of converged minimizers across different basins that achieve the \textit{exact same} training loss $C_{\text{train}}$. For any candidate $\bm{\theta}_{\text{train}, \mathcal{B}}^* \in \Theta$, the expected downstream error on an unseen task $\mathcal{T} \sim \mathcal{P}$ is strictly proportional to the task variance $\sigma_{\mathcal{B}}^2$:
\begin{equation}
    \mathbb{E}_{\mathcal{T} \sim \mathcal{P}}[\mathcal{L}_{\mathcal{T}}(\bm{\theta}_{\text{train}, \mathcal{B}}^*)] = C_{\text{train}} + \frac{a}{K} \sigma_{\mathcal{B}}^2.
\end{equation}
\end{theorem}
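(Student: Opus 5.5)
The plan is to compute both the training loss at the pretraining minimizer and the expected downstream loss in closed form inside a fixed basin $\mathcal{B}$, and then take their difference. The basin depth $c_{\mathcal{B}}$ should cancel, leaving only a variance term.

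\textbf{Step 1: the training loss at its minimizer.} Within $\mathcal{B}$, each pretraining task has the form $\mathcal{L}_k(\bm{\theta}) = \frac{a}{2}\|\bm{\theta}-\bm{\theta}_k^*\|_2^2 + c_{\mathcal{B}}$. Hence $\mathcal{L}_{\text{train}}$ is itself an isotropic quadratic. Setting its gradient to zero gives the minimizer as the empirical mean, $\bm{\theta}_{\text{train},\mathcal{B}}^* = \bar{\bm{\theta}} := \frac{1}{K}\sum_k \bm{\theta}_k^*$. Applying the bias--variance (parallel-axis) identity $\frac1K\sum_k\|\bm{\theta}-\bm{\theta}_k^*\|^2 = \|\bm{\theta}-\bar{\bm{\theta}}\|^2 + \frac1K\sum_k\|\bm{\theta}_k^*-\bar{\bm{\theta}}\|^2$ then yields
\[
C_{\text{train}} = c_{\mathcal{B}} + \frac{a}{2}\cdot\frac{1}{K}\sum_{k=1}^K\|\bm{\theta}_k^*-\bar{\bm{\theta}}\|_2^2 .
\]

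\textbf{Step 2: the expected downstream loss.} Since $\bm{\theta}_{\mathcal{T}}^*$ is independent of the $\bm{\theta}_k^*$, we have
\[
\mathbb{E}\big[\mathcal{L}_{\mathcal{T}}(\bar{\bm{\theta}})\big] = c_{\mathcal{B}} + \frac{a}{2}\,\mathbb{E}\|\bar{\bm{\theta}}-\bm{\theta}_{\mathcal{T}}^*\|^2 .
\]
Decomposing around $\bm{\mu}_{\mathcal{B}}$, the cross term vanishes by independence. This gives $\mathbb{E}\|\bar{\bm{\theta}}-\bm{\mu}_{\mathcal{B}}\|^2 + \mathbb{E}\|\bm{\theta}_{\mathcal{T}}^*-\bm{\mu}_{\mathcal{B}}\|^2 = \frac{\sigma_{\mathcal{B}}^2}{K} + \sigma_{\mathcal{B}}^2$.

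Here I would adopt the convention that $\sigma_{\mathcal{B}}^2$ denotes the total variance $\mathbb{E}\|\bm{\theta}_{\mathcal{L}}^*-\bm{\mu}_{\mathcal{B}}\|^2$. With covariance $\sigma^2_{\mathcal{B}}\mathbf{I}$ taken literally, every term below simply picks up a factor $d$, which I would note explicitly.

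\textbf{Step 3: matching the two and the main obstacle.} The key subtlety is that $C_{\text{train}}$ in Step 1 is a random quantity depending on the sampled tasks. The identity in the statement can therefore only hold in expectation over the pretraining tasks, or, more precisely, with $C_{\text{train}}$ replaced by its expected value. This is the step I expect to be the main obstacle, because the statement must be read correctly for the claim to be true.

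I would make this reading explicit. Using the unbiased sample-variance fact $\mathbb{E}\,\frac1K\sum_k\|\bm{\theta}_k^*-\bar{\bm{\theta}}\|^2 = \frac{K-1}{K}\sigma_{\mathcal{B}}^2$, we get
\[
\mathbb{E}\,C_{\text{train}} = c_{\mathcal{B}} + \frac{a}{2}\cdot\frac{K-1}{K}\sigma_{\mathcal{B}}^2 .
\]
Subtracting this from Step 2 gives
\[
\mathbb{E}\big[\mathcal{L}_{\mathcal{T}}(\bar{\bm{\theta}})\big] - \mathbb{E}\,C_{\text{train}} = \frac{a}{2}\,\sigma_{\mathcal{B}}^2\Big(\frac{K+1}{K}-\frac{K-1}{K}\Big) = \frac{a}{K}\sigma_{\mathcal{B}}^2 .
\]
The unknown depth $c_{\mathcal{B}}$ cancels, so the result holds uniformly over all basins in $\Theta$, as claimed.

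Finally, I would remark on the interpretation. Fixing the training loss across basins means that a basin with smaller $\sigma_{\mathcal{B}}^2$ (closer task minima) must be compensated by a larger depth $c_{\mathcal{B}}$. The generalization gap is nevertheless strictly smaller for such a basin, which is exactly the trade-off discussed before the theorem.
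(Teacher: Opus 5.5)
Your proposal is correct and follows essentially the same route as the paper's proof: stationarity gives $\bm{\theta}_{\text{train},\mathcal{B}}^*=\bar{\bm{\theta}}$, the depth $c_{\mathcal{B}}$ is eliminated through the training-loss identity, and the two facts $\mathbb{E}\|\bar{\bm{\theta}}-\bm{\theta}_{\mathcal{T}}^*\|_2^2=(1+\frac{1}{K})\sigma_{\mathcal{B}}^2$ and $\mathbb{E}\,\frac{1}{K}\sum_k\|\bar{\bm{\theta}}-\bm{\theta}_k^*\|_2^2=\frac{K-1}{K}\sigma_{\mathcal{B}}^2$ yield $\frac{a}{K}\sigma_{\mathcal{B}}^2$. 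Both of your caveats are well founded, and the paper's proof silently relies on the same reading: it holds $C_{\text{train}}$ fixed while averaging the random sample-variance term, so the identity really holds with $C_{\text{train}}$ replaced by its expectation, and it treats $\sigma_{\mathcal{B}}^2$ as the total variance $\mathbb{E}\|\bm{\theta}_{\mathcal{L}}^*-\bm{\mu}_{\mathcal{B}}\|_2^2$ rather than the per-coordinate variance implied by $\sigma_{\mathcal{B}}^2\mathbf{I}$.
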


\begin{proof}
    By stationarity, the converged parameter is the mean of local minimizers: $\bm{\theta}_{\text{train}, \mathcal{B}}^* = \frac{1}{K}\sum_{k=1}^K \bm{\theta}_{k, \mathcal{B}}^*$. Constraining the training loss to $C_{\text{train}}$ and closeness to $\sigma_{\mathcal{B}}^2$ explicitly determines the basin's intrinsic depth: $c_{\mathcal{B}} = C_{\text{train}} - \frac{a}{2K}\sum_{k=1}^K \| \bm{\theta}_{\text{train}, \mathcal{B}}^* - \bm{\theta}_{k, \mathcal{B}}^*\|_2^2$. This enforces the core trade-off: to achieve the identical $C_{\text{train}}$, a basin with tightly clustered minimizers inherently requires a higher intrinsic loss $c_{\mathcal{B}}$ to compensate.
    
    For an unseen task $\mathcal{T} \sim \mathcal{P}$, the expected downstream loss is $\mathbb{E}[\mathcal{L}_{\mathcal{T}}] = \mathbb{E}[\frac{a}{2}\|\bm{\theta}_{\text{train}, \mathcal{B}}^* - \bm{\theta}_{\mathcal{T}, \mathcal{B}}^* \|^2_2] + c_{\mathcal{B}}$. Substituting $c_{\mathcal{B}}$ perfectly cancels out the intrinsic depth, leaving the generalization gap entirely dependent on the variance of the distributions:
    $\mathbb{E}[\mathcal{L}_{\mathcal{T}}(\bm{\theta}_{\text{train}, \mathcal{B}}^*)] - C_{\text{train}} = \frac{a}{2} \left( \left(1 + \frac{1}{K}\right) - \frac{K-1}{K} \right)\sigma_{\mathcal{B}}^2 =\frac{a}{K} \sigma_{\mathcal{B}}^2$.
\end{proof}

Consequently, as long as downstream tasks and pretraining tasks follow the same distribution, by trading intrinsic loss $c_{\mathcal{B}}$ for improved closeness (i.e., smaller $\sigma_{\mathcal{B}}^2$), one obtains better out-of-distribution generalization due to the reduction in task variance.

We can also extend \cref{theorem:closeness_generalization:simple} beyond purely quadratic loss functions to the broader class of general loss landscapes exhibiting local and directional strong convexity, as demonstrated in the following theorem.

\begin{theorem}[Generalization of Closeness beyond Quadratics, Proof in \cref{appendix:proof:theorem:closeness_generalization:general}]
\label{theorem:closeness_generalization:general}
    Let $\bm{\theta}^*$ be a specific local minimizer of the population loss $\mathbb{E}_{\mathcal{L} \sim \mathcal{P}}[\mathcal{L}(\bm{\theta})]$. For any task $\mathcal{L}$ sampled from $\mathcal{P}$, let $\bm{\theta}_{\mathcal{L}}^* = \arg\min_{\bm{\theta} \in \mathcal{S}_{\mathcal{L}}} \|\bm{\theta}^* - \bm{\theta} \|_2$ be its corresponding local minimizer. Assume that for any task $\mathcal{L} \sim \mathcal{P}$, the loss function is locally and directionally strongly convex along the segments $[\bm{\theta}_{\mathcal{L}}^*, \bm{\theta}^*]$, i.e., $\lambda_{\max} \ge \bm{u}^\top \nabla^2 \mathcal{L}(\bm{\xi})\bm{u} \ge \lambda_{\min} > 0$ for any $\bm{\xi} \in [\bm{\theta}_{\mathcal{L}}^*, \bm{\theta}^*]$ and any unit vector $\bm{u} \in \text{span}\{\bm{\theta}^* - \bm{\theta}_{\mathcal{L}}^* \mid \mathcal{L} \sim \mathcal{P}\}$. Let $\bm{\mu} = \mathbb{E}[\bm{\theta}_{\mathcal{L}}^*]$ and $\sigma^2 = \mathbb{E}[\|\bm{\theta}_{\mathcal{L}}^* - \bm{\mu}\|_2^2]$. Assuming the statistical independence between the task flatness $\nabla^2 \mathcal{L}_{\mathcal{L}}(\bm{\xi})$ and the task closeness $\bm{\theta}_{\mathcal{L}}^*$ across the distribution $\mathcal{P}$. Conditioned on achieving a fixed training loss $C_{\text{train}}$, the expected out-of-distribution generalization error of the converged training parameter $\bm{\theta}_{\text{train}}^*$ is bounded by: 
    \begin{equation}
        \mathbb{E}_{\mathcal{T} \sim \mathcal{P}}[\mathcal{L}_{\mathcal{T}}(\bm{\theta}_{\text{train}}^*)] - C_{\text{train}} \leq \frac{\lambda_{\max} \left( \left(\frac{\lambda_{\max}}{\lambda_{\min}}\right)^2 + 1 \right)}{2K} \sigma^2.
    \end{equation} 
\end{theorem}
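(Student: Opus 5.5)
We follow the template of the proof of \cref{theorem:closeness_generalization:simple}, replacing exact quadratic identities with two-sided quadratic bounds from the directional strong convexity assumption.

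\textbf{Two-sided quadratic bounds.} For each task $\mathcal{L}$ and any point $\bm{\theta}$ with $\bm{\theta}-\bm{\theta}_{\mathcal{L}}^*$ in the admissible span, expand $\mathcal{L}$ around $\bm{\theta}_{\mathcal{L}}^*$ with Lagrange remainder, exactly as in \cref{eq:generalization:flatness_closeness_expansion}. The first-order term vanishes, and the directional Hessian bounds give
\begin{equation}
c_{\mathcal{L}} + \frac{\lambda_{\min}}{2}\|\bm{\theta}-\bm{\theta}_{\mathcal{L}}^*\|_2^2 \le \mathcal{L}(\bm{\theta}) \le c_{\mathcal{L}} + \frac{\lambda_{\max}}{2}\|\bm{\theta}-\bm{\theta}_{\mathcal{L}}^*\|_2^2,
\end{equation}
where $c_{\mathcal{L}}=\mathcal{L}(\bm{\theta}_{\mathcal{L}}^*)$ is the intrinsic depth. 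We apply the lower bound to the training tasks $\mathcal{L}_1,\dots,\mathcal{L}_K$ and the upper bound to the unseen task $\mathcal{T}$. Summing the lower bounds over the training tasks yields
\begin{equation}
\frac{1}{K}\sum_k c_k \le C_{\text{train}} - \frac{\lambda_{\min}}{2K}\sum_k \|\bm{\theta}_{\text{train}}^*-\bm{\theta}_k^*\|^2 .
\end{equation}
This is the analogue of the depth-compensation step in the quadratic proof. Using independence between flatness and closeness, we identify $\mathbb{E}[c_{\mathcal{T}}]$ with $\mathbb{E}[\frac{1}{K}\sum_k c_k]$, so the intrinsic depth cancels as before.

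\textbf{Locating $\bm{\theta}_{\text{train}}^*$.} This is the main obstacle. In the quadratic case stationarity gave the exact mean; here we need a perturbation bound. Stationarity gives $\sum_k \nabla\mathcal{L}_k(\bm{\theta}_{\text{train}}^*)=0$. Write each gradient as $\nabla\mathcal{L}_k(\bm{\theta}_{\text{train}}^*)=H_k(\bm{\theta}_{\text{train}}^*-\bm{\theta}_k^*)$, where $H_k=\int_0^1\nabla^2\mathcal{L}_k$ along the segment is the averaged Hessian. Its restriction to the span has spectrum in $[\lambda_{\min},\lambda_{\max}]$. Hence
\begin{equation}
\bm{\theta}_{\text{train}}^*=\Bigl(\sum_k H_k\Bigr)^{-1}\sum_k H_k\bm{\theta}_k^*,
\end{equation}
a matrix-weighted mean of the task minimizers. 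Subtracting $\bar{\bm{\theta}}=\frac1K\sum_k\bm{\theta}_k^*$, we get $\bm{\theta}_{\text{train}}^*-\bar{\bm{\theta}}=(\sum H_k)^{-1}\sum_k H_k(\bm{\theta}_k^*-\bar{\bm{\theta}})$. Independence of the $H_k$ from the $\bm{\theta}_k^*$, together with the condition number bound $\|(\sum H_k)^{-1}H_k\|\le \lambda_{\max}/(K\lambda_{\min})$, should give
\begin{equation}
\mathbb{E}\|\bm{\theta}_{\text{train}}^*-\bm{\mu}\|^2 \lesssim \Bigl(\frac{\lambda_{\max}}{\lambda_{\min}}\Bigr)^2\frac{\sigma^2}{K}.
\end{equation}
To obtain the $1/K$ variance reduction rather than a crude bound, first condition on the Hessians and then use zero-mean independent increments $\bm{\theta}_k^*-\bm{\mu}$; cross terms vanish after conditioning. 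This is the step I expect to need the most care, since the $H_k$ depend on $\bm{\theta}_{\text{train}}^*$ itself. I would handle this by invoking the stated independence assumption as if the Hessians were fixed.

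\textbf{Assembling the bound.} For the unseen task, independence of $\bm{\theta}_{\mathcal{T}}^*$ from the training sample gives
\begin{equation}
\mathbb{E}\|\bm{\theta}_{\text{train}}^*-\bm{\theta}_{\mathcal{T}}^*\|^2=\sigma^2+\mathbb{E}\|\bm{\theta}_{\text{train}}^*-\bm{\mu}\|^2 .
\end{equation}
For the training tasks, the standard decomposition around $\bm{\theta}_{\text{train}}^*$ gives $\frac1K\sum_k\mathbb{E}\|\bm{\theta}_{\text{train}}^*-\bm{\theta}_k^*\|^2\ge\frac{K-1}{K}\sigma^2$. This is because $\bar{\bm{\theta}}$ minimizes the sum of squared distances to the $\bm{\theta}_k^*$, and $\mathbb{E}\frac1K\sum_k\|\bm{\theta}_k^*-\bar{\bm{\theta}}\|^2=\frac{K-1}{K}\sigma^2$.

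Combining these, the generalization gap is at most
\begin{equation}
\frac{\lambda_{\max}}{2}\Bigl(\sigma^2+\frac{(\lambda_{\max}/\lambda_{\min})^2}{K}\sigma^2\Bigr)-\frac{\lambda_{\min}}{2}\cdot\frac{K-1}{K}\sigma^2 .
\end{equation}
The mismatch between $\lambda_{\max}$ and $\lambda_{\min}$ on the $\sigma^2$ terms does not cancel cleanly. To reach the stated constant $\frac{\lambda_{\max}((\lambda_{\max}/\lambda_{\min})^2+1)}{2K}\sigma^2$, I would pair the upper and lower quadratic bounds with the same Hessian realization, using the flatness--closeness independence so that both the training and test terms carry a common expected curvature $\bar\lambda\le\lambda_{\max}$. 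The leading $\sigma^2$ terms then cancel as in \cref{theorem:closeness_generalization:simple}, leaving $\frac{\bar\lambda}{2K}\sigma^2$ from the variance and $\frac{\lambda_{\max}}{2K}(\lambda_{\max}/\lambda_{\min})^2\sigma^2$ from the weighted-mean displacement. Bounding $\bar\lambda\le\lambda_{\max}$ gives the claim.
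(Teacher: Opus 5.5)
After you drop the two-sided bounds (which, as you saw, cannot reconcile $\lambda_{\max}$ on the test side with $\lambda_{\min}$ on the training side), your argument is essentially the paper's. It uses exact second-order expansions for the training and test losses, and replaces every Hessian by the expected Hessian $\bar{\mathbf{H}}$ via the flatness--closeness independence, so that the test term $\frac{1}{2}\text{Tr}(\bar{\mathbf{H}}\mathbf{\Sigma})$ is cancelled by the centroid lower bound $\frac{1}{2}(1-\frac{1}{K})\text{Tr}(\bar{\mathbf{H}}\mathbf{\Sigma})$ up to $\frac{1}{2K}\text{Tr}(\bar{\mathbf{H}}\mathbf{\Sigma})$. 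It then adds the matrix-weighted-mean estimate $\mathbb{E}\|\bm{\theta}_{\text{train}}^*-\bm{\mu}\|_2^2\le\kappa^2\sigma^2/K$. Carry out that cancellation with the matrix $\bar{\mathbf{H}}$ and traces rather than a scalar $\bar\lambda$; a single scalar would tacitly assume isotropic curvature.

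Your location step is actually more careful than the paper's. You condition on the Hessians so that the vectors $(\sum_j H_j)^{-1}H_k(\bm{\theta}_k^*-\bm{\mu})$ are independent and zero-mean before taking norms. The paper instead bounds $\|\bm{\theta}_{\text{train}}^*-\bm{\mu}\|_2$ by a sum of norms and then discards the cross terms $\mathbb{E}[\|\bm{\theta}_k^*-\bm{\mu}\|_2\|\bm{\theta}_j^*-\bm{\mu}\|_2]$; these are products of norms and do not vanish, so the paper's argument does not actually deliver the $1/K$ factor that yours does.
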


Therefore, as long as the loss landscape exhibits quadratic-like behavior (i.e., local and directional strong convexity) along these typical directions $[\bm{\theta}_k^*, \bm{\theta}^*]$, explicitly optimizing for closeness $\sigma^2=\mathbb{E}[\|\bm{\theta}_{k}^* - \bm{\mu}\|_2^2]$ would be beneficial for a lower downstream loss.

\section{Nexus Optimizer: Enhancing Closeness via Second-Order Approximation}
\label{sec:cwa}

Both geometric intuition and our analysis of quadratic functions support the conclusion that a ``close'' minimizer (\cref{fig:cwa_illustration:close}) generalizes to out-of-distribution data significantly better than a ``distant'' minimizer (\cref{fig:cwa_illustration:distant}), even when achieving the same training loss. Consequently, we aim to explicitly optimize this closeness during LLM pretraining. In this section, we introduce a second-order gradient approximator named ``Nexus'', which effectively optimizes parameter closeness on the training tasks (i.e., $\frac{1}{K}\sum_{k=1}^K \|\bm{\theta}-\bm{\theta}_k^*\|_2^2$), and successfully generalizes to the closeness of unseen downstream tasks (i.e., $\|\bm{\theta}-\bm{\theta}_{\mathcal{T}}^*\|_2^2$).

\subsection{Gradient Similarity Upper Bounds Closeness}

Directly optimizing the closeness metric $\|\bm{\theta}-\bm{\theta}_k^*\|$ involves finding the specific minimizer $\bm{\theta}_k^*$ for each task, which is itself a minimization problem and computationally prohibitive. Fortunately, we observe that the gradient similarity between different source tasks, given by $\sum_{i \neq j} - \nabla \mathcal{L}_i(\bm{\theta})^\top \nabla \mathcal{L}_j(\bm{\theta})$, provides a tractable upper bound for closeness. Intuitively, if the gradients of distinct tasks $\mathcal{L}_k$ consistently align in direction, their respective minimizers be exactly the same. Theoretically, both the gradient dot product and cosine similarity serve as tight bounds for closeness:

\begin{theorem}[Gradient Similarity Upper Bounds Closeness]
\label{thm:gradient_similarity_bound_closeness}
    Let $\bm{\theta}$ be the converged parameter satisfying $\nabla \mathcal{L}_{train}(\bm{\theta}) = \frac{1}{K}\sum_{k=1}^K \nabla \mathcal{L}_k(\bm{\theta}) = \mathbf{0}$. Let $\mathcal{S}_k =\{ \bm{\vartheta}\mid \exists \epsilon > 0, \forall \bm{\vartheta}' \in B_\epsilon(\bm{\vartheta}), \mathcal{L}_{k}(\bm{\vartheta}) \leq \mathcal{L}_{k}(\bm{\vartheta}') \}$ be the set of local minimizers for task $k$, and $\bm{\theta}^*_k = \arg\min_{\bm{\vartheta} \in \mathcal{S}_k} \| \bm{\vartheta} - \bm{\theta} \|_2$. Let $\lambda_{\min} = \min_{k} \inf_{\bm{\xi} \in [\bm{\theta}, \bm{\theta}_k^*]} \left( \frac{(\bm{\theta} - \bm{\theta}_k^*)^\top}{\|\bm{\theta} - \bm{\theta}_k^*\|_2} \nabla^2 \mathcal{L}_k(\bm{\xi}) \frac{\bm{\theta} - \bm{\theta}_k^*}{\|\bm{\theta} - \bm{\theta}_k^*\|_2} \right) > 0$, and $G = \sup_k \|\nabla \mathcal{L}_k(\bm{\theta})\|_2$. Then, the closeness between the minimizers is bounded by:
    \begin{equation}
        \frac{1}{K} \sum_{k=1}^K \|\bm{\theta} - \bm{\theta}_k^*\|_2^2 
        \le \frac{1}{K \lambda_{\min}^2} \sum_{i \neq j} \left( - \nabla \mathcal{L}_i(\bm{\theta})^\top \nabla \mathcal{L}_j(\bm{\theta}) \right) \le \frac{G^2}{K \lambda_{\min}^2} \sum_{i \neq j} \left( 1 - \text{CosSim}(\nabla \mathcal{L}_i(\bm{\theta}), \nabla \mathcal{L}_j(\bm{\theta})) \right).
    \end{equation}
\end{theorem}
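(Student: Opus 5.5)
The plan is to bound each task's distance $\|\bm{\theta}-\bm{\theta}_k^*\|_2$ by the norm of its gradient at $\bm{\theta}$, using directional strong convexity along the segment $[\bm{\theta}_k^*,\bm{\theta}]$. Summing these bounds then yields $\sum_k\|\nabla\mathcal{L}_k(\bm{\theta})\|_2^2$, and the stationarity condition $\sum_k \nabla\mathcal{L}_k(\bm{\theta})=\mathbf{0}$ turns this sum into the negated cross terms.

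Fix $k$ and write $\bm{d}_k=\bm{\theta}-\bm{\theta}_k^*$ and $\bm{g}_k=\nabla\mathcal{L}_k(\bm{\theta})$. If $\bm{d}_k=\mathbf{0}$ there is nothing to prove, so assume $\bm{d}_k\neq\mathbf{0}$. Since $\bm{\theta}_k^*$ is a local minimizer, $\nabla\mathcal{L}_k(\bm{\theta}_k^*)=\mathbf{0}$.

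A vector-valued mean value theorem is not available, so I will work with the scalar function $\phi(t)=\bm{d}_k^\top\nabla\mathcal{L}_k(\bm{\theta}_k^*+t\bm{d}_k)$ instead. By the fundamental theorem of calculus,
\begin{equation*}
\bm{d}_k^\top\bm{g}_k=\phi(1)-\phi(0)=\int_0^1 \bm{d}_k^\top\nabla^2\mathcal{L}_k(\bm{\theta}_k^*+t\bm{d}_k)\,\bm{d}_k\,dt\ \ge\ \lambda_{\min}\|\bm{d}_k\|_2^2 .
\end{equation*}
The inequality uses the definition of $\lambda_{\min}$ in the statement, which bounds exactly this directional curvature along the segment. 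One could instead apply the scalar mean value theorem to $\phi$ to get a single point $\bm{\xi}$ on the segment, as in \cref{eq:generalization:flatness_closeness_expansion}; either route works.

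Cauchy--Schwarz gives $\lambda_{\min}\|\bm{d}_k\|_2^2\le\|\bm{d}_k\|_2\|\bm{g}_k\|_2$. Dividing by $\|\bm{d}_k\|_2$ yields $\|\bm{d}_k\|_2\le\|\bm{g}_k\|_2/\lambda_{\min}$. Squaring and averaging over $k$ then gives
\begin{equation*}
\frac1K\sum_k\|\bm{d}_k\|_2^2\le\frac{1}{K\lambda_{\min}^2}\sum_k\|\bm{g}_k\|_2^2 .
\end{equation*}

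Next I use $\sum_k\bm{g}_k=\mathbf{0}$. Expanding $0=\|\sum_k\bm{g}_k\|_2^2=\sum_k\|\bm{g}_k\|_2^2+\sum_{i\ne j}\bm{g}_i^\top\bm{g}_j$ gives the identity $\sum_k\|\bm{g}_k\|_2^2=\sum_{i\ne j}(-\bm{g}_i^\top\bm{g}_j)$. Substituting this proves the first inequality of the statement.

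For the second inequality I compare the two sums term by term. Write $-\bm{g}_i^\top\bm{g}_j=-\|\bm{g}_i\|_2\|\bm{g}_j\|_2\,c_{ij}$ with $c_{ij}=\text{CosSim}(\bm{g}_i,\bm{g}_j)$, and split into two cases.
\begin{itemize}
\item If $c_{ij}\ge0$, the left side is $\le 0\le G^2(1-c_{ij})$.
\item If $c_{ij}<0$, the left side is $\|\bm{g}_i\|_2\|\bm{g}_j\|_2|c_{ij}|\le G^2|c_{ij}|\le G^2(1-c_{ij})$.
\end{itemize}
Summing over $i\ne j$ finishes the proof. If some $\bm{g}_k=\mathbf{0}$, the cosine is undefined, and I adopt the convention that such terms are zero on the left and nonnegative on the right.

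The main obstacle is the strong-convexity step. The statement's $\lambda_{\min}$ controls curvature only in the direction $\bm{d}_k$, so the argument must be reduced to the scalar function $\phi$ along the segment rather than bounding a Hessian operator norm. Once that reduction is made, everything else is algebra.
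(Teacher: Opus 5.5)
Your proof is correct and follows the same three steps as the paper: a directional-curvature bound $\|\bm{\theta}-\bm{\theta}_k^*\|_2\le\|\nabla\mathcal{L}_k(\bm{\theta})\|_2/\lambda_{\min}$, the stationarity identity $\sum_k\|\nabla\mathcal{L}_k(\bm{\theta})\|_2^2=\sum_{i\neq j}(-\nabla\mathcal{L}_i(\bm{\theta})^\top\nabla\mathcal{L}_j(\bm{\theta}))$, and a termwise comparison with $G^2(1-\text{CosSim})$. The differences are minor and in your favour. Your reduction to the scalar function $\phi$ avoids the paper's appeal to a vector-valued mean value theorem, which does not hold in general; the integral form you use is exactly what rescues it. Your two-case argument in the last step replaces the paper's device of adding the nonnegative quantity $(G^2-\|\nabla\mathcal{L}_i(\bm{\theta})\|_2\|\nabla\mathcal{L}_j(\bm{\theta})\|_2)(1-\text{CosSim})$, with the same effect.
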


In other words, optimizing the training trajectory towards a regime where $\text{CosSim}(\nabla \mathcal{L}_i(\bm{\theta}), \nabla \mathcal{L}_j(\bm{\theta}))$ remains consistently high guarantees high closeness (i.e., a small distance $\|\bm{\theta} - \bm{\theta}_k^*\|_2$).
This "gradient similarity upper bound" also provides a more intuitive understanding of why closeness improves downstream generalization. Suppose that the high gradient similarity achieved among training tasks (i.e., high $\text{Sim}(\nabla \mathcal{L}_i(\bm{\theta}), \nabla \mathcal{L}_j(\bm{\theta}))$) successfully generalizes to the similarity between the training objective and the downstream task (i.e., high $\text{Sim}(\nabla \mathcal{L}_{\text{train}}(\bm{\theta}), \nabla \mathcal{L}_{\mathcal{T}}(\bm{\theta}))$). This similarity directly represents the reduction in downstream loss after a single Gradient Descent (GD) step on the training set (in the first-order sense):
\begin{equation}
    \underbrace{\mathcal{L}_{\mathcal{T}}(\bm{\theta}) - \mathcal{L}_{\mathcal{T}}(\bm{\theta}- \gamma \nabla \mathcal{L}_{\text{train}}(\bm{\theta})) }_{\text{decrease of downstream loss after one GD step on training set}} = \gamma \nabla \mathcal{L}_{\text{train}}(\bm{\theta})^\top \nabla \mathcal{L}_{\mathcal{T}}(\bm{\theta}) + O(\gamma^2).
\label{eq:cwa:gradient_similarity_implies_b_decrease_when_optimizing_on_a}
\end{equation}

Therefore, we view gradient similarity $\text{CosSim}(\nabla \mathcal{L}_i(\bm{\theta}), \nabla \mathcal{L}_j(\bm{\theta}))$ as a strong proxy for parameter closeness: it not only provides a tight upper bound on parameter distance (thereby enforcing closeness), but also leads to the same beneficial effects on downstream generalization. Given this strong connection, in the remainder of this paper, we use the term "closeness" to refer to both parameter closeness and gradient closeness.

\subsection{Optimizing Gradient Similarity via Nexus}
\label{sec:cwa:optimizing_using_cwa}

\begin{algorithm}[t]
\caption{Standard Nexus Algorithm}
\label{alg:cwa_standard}
\begin{algorithmic}[1]
    \REQUIRE Initial params $\bm{\theta}_0$, losses $\{\mathcal{L}_i\}_{i=1}^K$, total iterations $T$.
    \REQUIRE Optimizers: $\text{Opt}_{\text{inner}}$ (Normalized SGD), $\text{Opt}_{\text{outer}}$ (e.g., AdamW). Inner learning rate $\gamma$.
    \FOR{$t=1$ {\bfseries to} $T$}
        \STATE $\bm{\theta}_{t, 0} \leftarrow \bm{\theta}_{t-1}$ \COMMENT{Initialize inner loop}
        \FOR{$m=1$ {\bfseries to} $K$}
            \STATE Sample task index $s_m \sim \text{Uniform}(\{1, \dots, K\})$
            \STATE $\bm{g} \leftarrow \nabla \mathcal{L}_{s_m}(\bm{\theta}_{t, m-1})$
            \STATE $\bm{\theta}_{t, m} \leftarrow \bm{\theta}_{t, m-1} - \gamma \cdot \frac{\bm{g}}{\|\bm{g}\|_2}$ \COMMENT{Update inner trajectory}
        \ENDFOR
        \STATE $\hat{\bm{g}}_t \leftarrow \bm{\theta}_{t, 0} - \bm{\theta}_{t, K}$ \COMMENT{Compute Nexus pseudo-gradient}
        \STATE $\bm{\theta}_{t} \leftarrow \text{Opt}_{\text{outer}}(\bm{\theta}_{t-1}, \hat{\bm{g}}_t)$ \COMMENT{Outer-update}
    \ENDFOR
    \textbf{Return}:  $\bm{\theta}_T$
\end{algorithmic}
\end{algorithm}

Therefore, to encourage parameter closeness, it suffices to maximize the gradient similarity. However, directly optimizing this objective is computationally intractable because the gradient of the cosine similarity involves the Jacobian-vector product:
\begin{equation}
    \nabla_{\bm{\theta}} \text{CosSim}(\nabla_{\bm{\theta}} \mathcal{L}_i(\bm{\theta}), \nabla_{\bm{\theta}} \mathcal{L}_j(\bm{\theta})) = \left( \nabla_{\bm{\theta}} \frac{\nabla_{\bm{\theta}} \mathcal{L}_i(\bm{\theta})}{\|\nabla_{\bm{\theta}} \mathcal{L}_i(\bm{\theta})\|_2} \right)^\top \frac{\nabla_{\bm{\theta}} \mathcal{L}_j(\bm{\theta})}{\|\nabla_{\bm{\theta}} \mathcal{L}_j(\bm{\theta})\|_2} 
    + \left( \nabla_{\bm{\theta}} \frac{\nabla_{\bm{\theta}} \mathcal{L}_j(\bm{\theta})}{\|\nabla_{\bm{\theta}} \mathcal{L}_j(\bm{\theta})\|_2} \right)^\top \frac{\nabla_{\bm{\theta}} \mathcal{L}_i(\bm{\theta})}{\|\nabla_{\bm{\theta}} \mathcal{L}_i(\bm{\theta})\|_2}.
    \label{eq:gradient_of_gradient_similarity}
\end{equation}
To address this challenge, we propose the \textbf{Nexus optimizer}, which approximates the gradient in \cref{eq:gradient_of_gradient_similarity} through a dual-loop mechanism. The complete procedure is outlined in Algorithm \ref{alg:cwa_standard}. Conceptually, one should view each step in the outer loop as a standard parameter update, while the $K$ steps in the inner loop serve as a \textit{gradient approximator} for \cref{eq:gradient_of_gradient_similarity}. Specifically, for each outer iteration, we perform $K$ normalized SGD steps (inner loop) to accumulate the approximated gradient $\hat{\bm{g}}_t$. This $\hat{\bm{g}}_t$ is then passed to the outer optimizer (e.g., AdamW \citep{kingma2014adam,loshchilov2017decoupled}, Muon \citep{jordan2024muon}) to perform the actual update. The following theorem demonstrates that Nexus algorithm effectively maximizes gradient similarity.

\begin{figure}[t]
    \centering
    \includegraphics[width=0.95\linewidth]{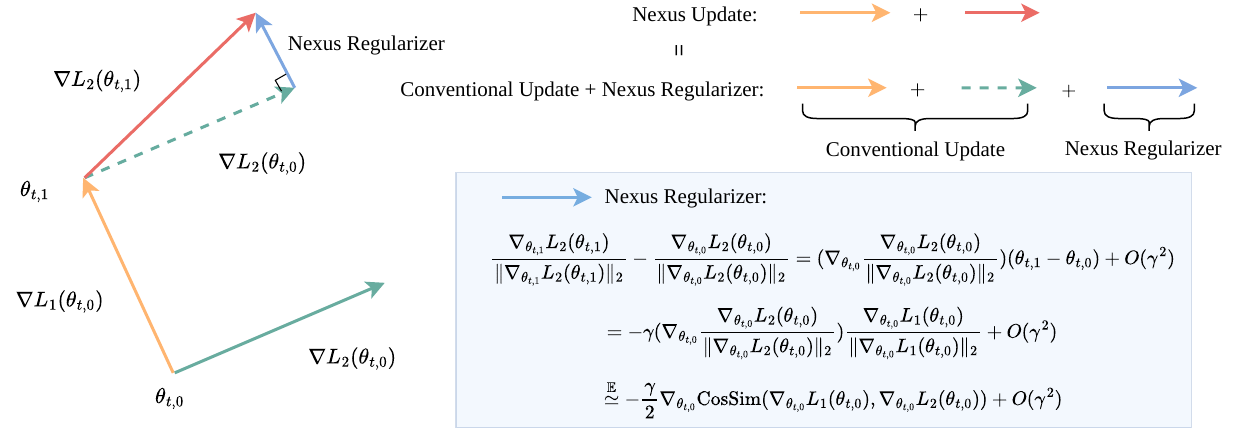}
    \caption{Intuitive illustration of Nexus Algorithm. $\overset{\mathbb{E}}{=}$ denotes equality in expectation over the task permutations.}
    \label{fig:algorithm:cwa}
\end{figure}

\begin{theorem}[Nexus Maximizes Gradient Similarity]
\label{thm:cwa_optimize_gradient_similarity}
Assume there exist constants $G_{\min}, L, \rho > 0$ such that for any $t \in [1, T]$ and $m \in [1, K]$:
\begin{equation}
    \|\nabla \mathcal{L}_i(\bm{\theta}_{t, m})\|_2 \geq G_{\min}; \quad \|\nabla^2 \mathcal{L}_i(\bm{\theta})\|_2 \leq L; \quad \|\nabla^2 \mathcal{L}_i(\bm{x}) - \nabla^2 \mathcal{L}_i(\bm{y})\|_2 \le \rho \|\bm{x} - \bm{y}\|_2.
\end{equation}
    Then, the sequence $\{\bm{\theta}_t\}$ generated by \cref{alg:cwa_standard} minimizes the following second-order objective:
        \begin{equation}
        \mathcal{J}_{\text{2nd}}(\bm{\theta}) = \gamma \sum_{i=1}^K \mathcal{L}_i(\bm{\theta}) - \gamma^2 \frac{K-1}{4K} \sum_{i \neq j} \text{CosSim}\Big(\nabla \mathcal{L}_i(\bm{\theta}), \nabla \mathcal{L}_j(\bm{\theta})\Big).
    \label{eq:cwa_objective}
    \end{equation}
This holds because the expected update direction satisfies:
    \begin{equation}
        \mathbb{E}[\hat{\bm{g}}_t] = \gamma \sum_{i=1}^K \frac{\nabla \mathcal{L}_i(\bm{\theta}_t)}{\|\nabla \mathcal{L}_i(\bm{\theta}_t)\|_2} - \gamma^2 \frac{K-1}{4K} \Big( \nabla_{\bm{\theta}} \sum_{i \neq j} \text{CosSim}(\nabla \mathcal{L}_i, \nabla \mathcal{L}_j) + \bm{\mathcal{E}}_{\text{sym},i,j} \Big) + \bm{\mathcal{E}}_{\text{2nd}},
    \end{equation}
    where $\bm{\mathcal{E}}_{\text{sym},i,j} = \left( \nabla \frac{\nabla \mathcal{L}_i}{\|\nabla \mathcal{L}_i\|_2} - \left( \nabla \frac{\nabla \mathcal{L}_i}{\|\nabla \mathcal{L}_i\|_2} \right)^\top \right) \frac{\nabla \mathcal{L}_j}{\|\nabla \mathcal{L}_j\|_2} \perp \nabla \mathcal{L}_j$ is the Jacobian non-symmetric error, which becomes zero when the gradients align with the top eigenvectors of their respective Hessians~\cite{gur2018gradient,song2024does,cohen2021gradient,damian2022self}. The Taylor approximation error is bounded by $\|\bm{\mathcal{E}}_{\text{2nd}}\|_2 \le \frac{1}{6} \left( \frac{4L^2 + \rho G_{\min}}{G_{\min}^2} \right) K^3 \gamma^3 = \mathcal{O}(\gamma^3).$
\end{theorem}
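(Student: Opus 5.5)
The plan is to Taylor-expand the inner trajectory of \cref{alg:cwa_standard} to second order in $\gamma$, take the expectation over the uniformly sampled indices $s_1,\dots,s_K$, and identify the $\gamma^2$ term with $\nabla_{\bm{\theta}}\sum_{i\neq j}\text{CosSim}$. Write $\bm{n}_i(\bm{\theta}) = \nabla\mathcal{L}_i(\bm{\theta})/\|\nabla\mathcal{L}_i(\bm{\theta})\|_2$. This map is well defined along the trajectory because $\|\nabla\mathcal{L}_i\|_2\ge G_{\min}$. Let $\bm{J}_i = \nabla_{\bm{\theta}}\bm{n}_i = \frac{1}{\|\nabla\mathcal{L}_i\|_2}(\mathbf{I}-\bm{n}_i\bm{n}_i^\top)\nabla^2\mathcal{L}_i$ be its Jacobian. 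Telescoping the inner loop gives exactly
\[
\hat{\bm{g}}_t = \gamma\sum_{m=1}^K \bm{n}_{s_m}(\bm{\theta}_{t,m-1}).
\]

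First, I expand each term around $\bm{\theta}_{t,0}=\bm{\theta}_{t-1}$. This gives $\bm{n}_{s_m}(\bm{\theta}_{t,m-1}) = \bm{n}_{s_m} + \bm{J}_{s_m}(\bm{\theta}_{t,m-1}-\bm{\theta}_{t,0}) + \bm{R}_m$, with all quantities on the right evaluated at $\bm{\theta}_{t,0}$. Next, I substitute $\bm{\theta}_{t,m-1}-\bm{\theta}_{t,0} = -\gamma\sum_{l<m}\bm{n}_{s_l}(\bm{\theta}_{t,l-1})$, and in that sum replace $\bm{n}_{s_l}(\bm{\theta}_{t,l-1})$ by $\bm{n}_{s_l}(\bm{\theta}_{t,0})$, which costs another $O(\gamma)$ per term. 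The result is
\[
\hat{\bm{g}}_t = \gamma\sum_m \bm{n}_{s_m} - \gamma^2\sum_{l<m}\bm{J}_{s_m}\bm{n}_{s_l} + \bm{\mathcal{E}},
\qquad \bm{\mathcal{E}}=O(\gamma^3).
\]

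Taking expectations, the first term gives $\gamma\sum_i \bm{n}_i$. For the second term there are $K(K-1)/2$ ordered pairs $l<m$, and each contributes $\frac{1}{K^2}\sum_{i,j}\bm{J}_i\bm{n}_j$. The off-diagonal part is therefore $\frac{K-1}{2K}\sum_{i\neq j}\bm{J}_i\bm{n}_j$.

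Since $\nabla\,\text{CosSim}(\nabla\mathcal{L}_i,\nabla\mathcal{L}_j) = \bm{J}_i^\top\bm{n}_j + \bm{J}_j^\top\bm{n}_i$, we have $\nabla\sum_{i\neq j}\text{CosSim} = 2\sum_{i\neq j}\bm{J}_i^\top\bm{n}_j$. Writing $\bm{J}_i = \bm{J}_i^\top + (\bm{J}_i-\bm{J}_i^\top)$ then turns the off-diagonal part into
\[
\frac{K-1}{4K}\Big(\nabla\sum_{i\neq j}\text{CosSim} + \sum_{i\neq j}\bm{\mathcal{E}}_{\text{sym},i,j}\Big),
\]
which is exactly the stated form. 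The diagonal terms are $\bm{J}_i\bm{n}_i = (\mathbf{I}-\bm{n}_i\bm{n}_i^\top)\nabla^2\mathcal{L}_i\bm{n}_i/\|\nabla\mathcal{L}_i\|_2$. These also vanish when $\bm{n}_i$ is a Hessian eigenvector, and $\bm{J}_i^\top\bm{n}_i=\mathbf{0}$ identically, so I absorb them into the same non-symmetric error term.

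For the last claim, that the iterates minimize $\mathcal{J}_{\text{2nd}}$, I argue as follows. The $O(\gamma)$ term $\gamma\sum_i\bm{n}_i$ is a descent direction for $\gamma\sum_i\mathcal{L}_i$: it has nonnegative inner product with each $\nabla\mathcal{L}_i$ after normalization, and in the paper's sense it replaces $\nabla\mathcal{L}_i$ by $\bm{n}_i$. The $\gamma^2$ term is exactly $-\nabla$ of the cosine part of $\mathcal{J}_{\text{2nd}}$. Hence $\mathbb{E}[\hat{\bm{g}}_t]$ matches $\nabla\mathcal{J}_{\text{2nd}}$ up to the normalization of the first-order term, $\bm{\mathcal{E}}_{\text{sym}}$, and $O(\gamma^3)$.

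The main obstacle is the explicit $O(\gamma^3)$ bound $\frac16\frac{4L^2+\rho G_{\min}}{G_{\min}^2}K^3\gamma^3$. I would first bound $\|\bm{J}_i\|_2\le L/G_{\min}$. Then I would bound the second derivative of $\bm{n}_i$ by differentiating $\bm{J}_i$, using the Hessian Lipschitz constant $\rho$ and the derivative of $1/\|\nabla\mathcal{L}_i\|_2$ and of the projector. This gives a constant of order $(4L^2+\rho G_{\min})/G_{\min}^2$, which controls both $\|\bm{R}_m\|$ and the Lipschitz error from freezing $\bm{n}_{s_l}$ at $\bm{\theta}_{t,0}$. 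Since $\|\bm{\theta}_{t,m}-\bm{\theta}_{t,0}\|_2\le m\gamma$, the remainders scale like $\gamma\cdot(m\gamma)^2$ summed over $m\le K$. This yields $K^3\gamma^3$ up to the $\frac16$ factor coming from the integral form of the Taylor remainder. I expect the constant bookkeeping to need care, but the order $O(K^3\gamma^3)$ follows directly.
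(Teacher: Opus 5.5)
Your proposal is essentially the paper's proof: the same first-order expansion of $\bm{n}_{s_m}$ about $\bm{\theta}_{t,0}$, the same freezing of earlier normalized gradients at $\bm{\theta}_{t,0}$, and the same expectation over independent index pairs followed by the split $\bm{J}_i=\bm{J}_i^\top+(\bm{J}_i-\bm{J}_i^\top)$; for the bookkeeping you flagged, the paper's constant is $L_2+L_1^2$, where $L_2=(3L^2+\rho G_{\min})/G_{\min}^2$ is the Lipschitz constant of $\bm{J}_i$ (giving $\|\bm{R}_m\|_2\le\frac{L_2}{2}(m-1)^2\gamma^2$), $L_1^2=L^2/G_{\min}^2$ comes from $\|\bm{J}_i\|_2\le L_1$ times the freezing error $\frac{L_1}{2}(m-1)^2\gamma^2$, and the $\frac16$ is this $\frac12$ times the $\frac13$ from $\sum_{m=1}^K(m-1)^2\le K^3/3$. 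Your handling of the diagonal terms via $\bm{J}_i^\top\bm{n}_i=\mathbf{0}$, i.e.\ $\bm{J}_i\bm{n}_i=\bm{\mathcal{E}}_{\text{sym},i,i}$, is more careful than the paper, which silently drops these terms when it passes from $\frac{K-1}{2K}\sum_{i,j}$ to $\sum_{i\neq j}$. Two small corrections, though. First, the off-diagonal antisymmetric part is $\frac{K-1}{2K}\sum_{i\neq j}\bm{\mathcal{E}}_{\text{sym},i,j}=\frac{K-1}{4K}\sum_{i\neq j}(\bm{\mathcal{E}}_{\text{sym},i,j}+\bm{\mathcal{E}}_{\text{sym},j,i})$, so you lost a factor $2$. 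Second, $\sum_i\bm{n}_i$ need not be a descent direction for $\sum_i\mathcal{L}_i$ once $K\ge 3$ and the gradient norms differ, so the ``minimizes $\mathcal{J}_{\text{2nd}}$'' conclusion is only the heuristic matching up to normalization that both you and the paper end with.
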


\textbf{Intuitive Understanding of the Inner Loop.} To intuitively understand why Nexus's inner loop optimizes \cref{eq:gradient_of_gradient_similarity}, consider a simplified scenario with two loss functions, $\mathcal{L}_1$ and $\mathcal{L}_2$, as illustrated in \cref{fig:algorithm:cwa}. At the current parameter state $\bm{\theta}_{t,0}$, a conventional optimizer (e.g., AdamW, Muon) would simply aggregate the gradients as $\nabla \mathcal{L}_1(\bm{\theta}_{t,0}) + \nabla \mathcal{L}_2(\bm{\theta}_{t,0})$ for the update. In contrast, the Nexus inner loop operates sequentially: it first takes a step using $\nabla \mathcal{L}_1(\bm{\theta}_{t,0})$ to reach an intermediate point $\bm{\theta}_{t,1}$, and subsequently evaluates the next gradient $\nabla \mathcal{L}_2(\bm{\theta}_{t,1})$ at this displaced location. As shown in the figure's equations, this sequential trajectory is mathematically equivalent to a conventional update plus a "Nexus regularizer." Crucially, this regularizer naturally yields a Jacobian-vector product $\left( \nabla_{\bm{\theta}_{t,0}} \frac{\nabla \mathcal{L}_2(\bm{\theta}_{t,0})}{\|\nabla \mathcal{L}_2(\bm{\theta}_{t,0})\|_2} \right) \frac{\nabla \mathcal{L}_1(\bm{\theta}_{t,0})}{\|\nabla \mathcal{L}_1(\bm{\theta}_{t,0})\|_2}$, which equals the gradient of the gradient similarity (in the first-order and expectation sense). Consequently, the pseudo-gradient $\hat{\bm{g}}_t$ produced by the inner loop effectively serves as the sum of the gradient of the pretraining loss and the gradient of the gradient similarity defined in \cref{eq:gradient_of_gradient_similarity}.

Thus, Nexus serves as an effective mechanism for maximizing gradient closeness. Strictly speaking, Nexus should be conceptualized as a \textit{gradient approximator} rather than a standalone optimizer, for two reasons: (1) The inner optimization step must be a simple, memory-free update like NSGD or its generalized transformations (see \cref{appendix:discussion:generalized_nexus}). Otherwise, as shown in \cref{sec:exp_validate_theory}, incorporating historical momentum would disrupt the accurate maximization of gradient similarity. (2) The outer optimization step simply consumes the approximated gradient and can utilize standard optimization algorithms (see \cref{appendix:more_exp:sgdm}). Consequently, Nexus acts as a modular plug-in that is broadly compatible with a wide range of outer base optimizers (e.g., AdamW, SGDM).

\subsection{Adapting Nexus to Practical Pretraining}

\begin{figure}[t]
    \centering
    \begin{minipage}{0.48\textwidth}
    \begin{algorithm}[H]
        \caption{Standard Pretraining}
        \label{alg:standard_pretrain}
        \begin{algorithmic}[1]
        \REQUIRE \texttt{model}, \texttt{loader}
        \REQUIRE \texttt{opt\_outer} (e.g., AdamW)
        \REQUIRE \texttt{accum\_steps}
        \FOR{\texttt{i, batch} {\bfseries in} \texttt{loader}}       \State \{Mini-batch Step\}
            \STATE $\mathcal{L} \leftarrow \texttt{model(batch)}$
            \STATE $\mathcal{L}.\texttt{backward()}$
            \IF{\texttt{i} \% \texttt{accum\_steps} $== 0$}          \State \{Accumulation Step\}
                \STATE \texttt{opt\_outer.step()} 
                \STATE \texttt{opt\_outer.zero\_grad()}
            \ENDIF
        \ENDFOR
        \end{algorithmic}
    \end{algorithm}
    \end{minipage}
    \hfill
    \begin{minipage}{0.48\textwidth}
    \begin{algorithm}[H]
        \caption{Nexus (Engineering Adaptation)}
        \label{alg:cwa_adapted}
        \begin{algorithmic}[1]
        \REQUIRE \texttt{model}, \texttt{loader}, \texttt{opt\_outer}, \texttt{accum\_steps}
        \STATE \texttt{inner\_model} $\leftarrow$ \texttt{model.clone()}
        \STATE \texttt{opt\_inner} $\leftarrow$ \texttt{NSGD(inner\_model)}
        \FOR{\texttt{i, batch} {\bfseries in} \texttt{loader}}
            \STATE $\mathcal{L} \leftarrow \texttt{inner\_model(batch)}$
            \STATE $\mathcal{L}.\texttt{backward()}$
            \STATE \texttt{opt\_inner.step()} 
            \IF{\texttt{i} \% \texttt{accum\_steps} $== 0$}
                \STATE $\hat{\bm{g}} \leftarrow$ \texttt{inner\_model} $-$ \texttt{model}
                \STATE \texttt{opt\_outer.step(grad=}$-\hat{\bm{g}}$\texttt{)}
                \STATE \texttt{inner\_model} $\leftarrow$ \texttt{model.clone()} 
            \ENDIF
        \ENDFOR
        \end{algorithmic}
    \end{algorithm}
    \end{minipage}
    \vspace{1em}
    \caption{\textbf{Comparison of Standard Pretraining and Nexus Engineering Adaptation.} 
    \textbf{Left:} Standard training accumulates gradients over multiple mini-batches before performing a single optimizer update (at the micro-batch/accumulation step). 
    \textbf{Right:} We adapt Nexus from \cref{alg:cwa_standard} to pretraining by keeping an auxiliary \texttt{inner\_model}. It performs immediate updates on the \texttt{inner\_model} at every mini-batch step. At the accumulation boundary, the total displacement ($\texttt{inner\_model} - \texttt{model}$) serves as the pseudo-gradient $\hat{\bm{g}}$ for the outer optimizer, after which the inner model is re-synchronized.}
    \label{fig:code_comparison}
\end{figure}

\textbf{Motivation.} We establish that Nexus effectively maximizes gradient similarity with controllable higher-order errors in \cref{thm:cwa_optimize_gradient_similarity,thm:third_order_bias}. However, directly applying \cref{alg:cwa_standard} to pretraining is still difficult. This is because \cref{alg:cwa_standard} requires computing gradients for every data source to perform a single effective outer update. In pretraining, the number of data sources is typically large (e.g., $K > 50$), which would result in an effective batch size that differs significantly from standard settings~\cite{kaplan2020scaling, wen2025fantastic}. This prevents us from leveraging established hyperparameters, thereby increasing tuning costs and preventing the wide application of Nexus.

To address this, we propose an engineering adaptation to better adapt Nexus to practical pretraining. As shown in \cref{alg:standard_pretrain}, standard pretraining can be viewed as a gradient accumulation workflow: it computes gradients in every mini-batch and performs an optimizer update in every accumulation step.

\textbf{Adapted Nexus.} Leveraging this structure, we adapt Nexus as illustrated in \cref{alg:cwa_adapted}. Specifically, we introduce an auxiliary \texttt{inner\_model}. For each mini-batch, we perform an immediate Normalized SGD (NSGD) update on this inner model to approximate the hessian-gradient product. Upon completing the accumulation steps, we compute the displacement between the inner model and the frozen main model, using this displacement as the pseudo-gradient $\hat{\bm{g}}$ for the outer optimizer. Therefore, our adapted Nexus actually \textit{maximizes the cosine similarity between mini-batches within a single accumulation step}. Since the pretraining corpus is typically vast and the mixing ratio for each source is typically low, two consecutive mini-batches are highly likely to be sampled from different sources. Thus, this approach effectively achieves the objective of \cref{alg:cwa_standard}.

\textbf{Clarification on NSGD Normalization.} Note that we apply \textit{per-param} normalization rather than normalizing the globally flattened gradient vector. Mathematically, this adapted Nexus maximizes the sum of per-matrix gradient similarities $\sum_{l} \text{CosSim}(\nabla_{\bm{W}^{(l)}} \mathcal{L}_i, \nabla_{\bm{W}^{(l)}} \mathcal{L}_j)$ (see \cref{appendix:discussion:generalized_nexus}), rather than the global gradient similarity $\text{CosSim}(\nabla_{\bm{\theta}} \mathcal{L}_i, \nabla_{\bm{\theta}} \mathcal{L}_j)$. This design choice is primarily motivated by the theoretical insights of $\mu$P \cite{yang2021tensor,yang2021tuning,yang2023spectral} and recent practical wisdom \cite{liu2025muon} on controlling the RMS norm of each update, which potentially enables more stable optimization and better hyper-parameter transfer across different model scales.

\textbf{Remark.} It is worth noting that our adapted Nexus incurs \textbf{almost no extra computational cost}. The total number of forward and backward passes remains exactly the same as standard pretraining. The only computational overhead comes from the copy and update of the inner model, but this is negligible compared to the forward-backward pass (considering the classical $6NBS$ approximation~\cite{kaplan2020scaling}). The only memory overhead comes from the inner model, but this can be reduced to nearly zero through techniques like CPU offloading and asynchronous processing.

We employ \cref{alg:cwa_adapted} for all experiments, with the exception of specific ablation studies. Readers may proceed directly to \cref{sec:exp}. In \cref{appendix:discussion}, we also provide a theoretical analysis of Nexus's convergence speed and discuss its implications for standard Normalized SGD.

\section{Experiments}
\label{sec:exp}

In this section, we validate that Nexus achieves nearly the \textbf{same pretraining loss} while delivering \textbf{better downstream} performance through comprehensive experiments across various datasets, learning rate schedules, model scales and token scales.

\subsection{Experimental Settings}
\label{sec:exp:setting}

Our experimental setup largely follows the protocols established in \citet{wen2025fantastic} and \citet{olmo20252olmo2furious}.

\textbf{Pretraining Datasets.} We utilize an in-house pretraining dataset similar to \cite{seed2025seed-oss}. This corpus is: (1) strictly cleaned to ensure no data contamination regarding the evaluated benchmarks or distillation data; and (2) of higher quality and stability than typical open-source datasets, allowing us to observe smooth and clear optimization trends. We also conduct experiments on public datasets~\cite{basant2025nvidia_nemotron} in \cref{sec:exp:nvidia_nemotron}. However, these public datasets are not strictly decontaminated and contain training samples from our benchmarks. This leads to artificially inflated performance on certain tasks while underperforming on others. Consequently, we primarily rely on the strictly cleaned dataset for more stable analysis.

\textbf{Model Architecture.} Following \citet{wen2025fantastic}, we train Llama-architecture models of 130M, 300M, 520M, 1.2B, and 2.3B parameters (excluding embeddings). We primarily analyze the 520M (1B total parameters) and 2.3B (3B total parameters) models, hereafter referred to by their total parameter counts for brevity, except in the scaling law analysis (\cref{sec:discussion:scaling:num_params}) as required by \citet{kaplan2020scaling}.


\textbf{Hyperparameters.} \citet{wen2025fantastic} have already conducted extensive parameter searches using grid search, coordinate descent, and fine-grained tuning. To ensure fairness, we always apply exact the same hyper-parameters to both Nexus and its corresponding base optimizers. For the base optimizers, we adopt the optimal hyperparameters identified in \citet{wen2025fantastic}. We further verified these settings by sweeping the learning rate with a multiplier of $2$ (i.e., verifying $0.5\times$ and $2.0\times$), confirming that their configurations remain optimal for our dataset. See \cref{appendix:hyper_params} for the detailed hyperparameters in each experiment.

\textbf{Benchmarks.} We evaluate on diverse benchmarks encompassing general knowledge (MMLU~\cite{hendrycks2020measuring_mmlu}), reasoning (GPQA, GPQA Diamond~\cite{rein2024gpqa}, BBH~\cite{suzgun2022challenging_bbh}), math (GSM8k~\cite{cobbe2021gsm8k}, MATH500~\cite{hendrycks2021measuring_math500}), and coding (HumanEval~\cite{chen2021codex_humaneval}, MBPP~\cite{austin2021program_mbpp}). Beyond discrete accuracies, we also track downstream task losses and out-of-distribution (OOD) loss. The OOD loss is evaluated on a strictly cleaned proprietary in-house corpus, which exhibits a strong correlation with downstream benchmark capabilities.

\textbf{Highlighting Strategy.} We use \textbf{bold} to highlight non-trivial performance gaps, defined as a loss difference $>0.01$ or a benchmark improvement $>2\%$, following \citet{wen2025fantastic}.


\begin{table*}[t]
    \centering
    \caption{\textbf{Main Results.} Comparison of validation losses and downstream capabilities. Notably, Nexus consistently achieves nearly \textit{identical pretraining losses} compared to the base optimizers, yet demonstrates \textit{superior performance across downstream losses and benchmarks}. }
    \label{tab:exp:d803_main_results}
    \setlength{\tabcolsep}{0.35ex} 
    \resizebox{\textwidth}{!}{
        \begin{tabular}{lll cc c ccc cc cc c}
            \toprule
            \multirow{2}{*}{\textbf{Model}} & \multirow{2}{*}{\textbf{Optim.}} & \multirow{2}{*}{\textbf{Metric}} & 
            \multicolumn{2}{c}{\textbf{Loss Metrics} ($\downarrow$)} & 
            \multicolumn{1}{c}{\textbf{Gen.}} & 
            \multicolumn{3}{c}{\textbf{Reasoning}} & 
            \multicolumn{2}{c}{\textbf{Math}} & 
            \multicolumn{2}{c}{\textbf{Code}} &
            \multicolumn{1}{c}{\textbf{Avg.}} \\
            \cmidrule(lr){4-5} \cmidrule(lr){6-6} \cmidrule(lr){7-9} \cmidrule(lr){10-11} \cmidrule(lr){12-13} \cmidrule(lr){14-14}
             & & & Pretrain. & OOD & MMLU & GPQA & GPQA-D & BBH & GSM8k & MATH & HumanEval & MBPP & All \\
            \midrule
            
            \multirow{6}{*}{1B} 
              & \multirow{2}{*}{AdamW} & Acc. ($\uparrow$) & \multirow{2}{*}{1.826} & \multirow{2}{*}{1.433} & 32.1 & 25.0 & 21.8 & 29.6 & 18.0 & 13.0 & 19.0 & 17.0 & 21.9 \\
              & & Loss ($\downarrow$) & & & 2.363 & 2.221 & 2.124 & 1.640 & 1.429 & 1.204 & 1.270 & 2.035 & 1.786 \\
              \cmidrule{2-14}
              & \multirow{2}{*}{Nexus} & Acc. ($\uparrow$) & \multirow{2}{*}{1.826} & \multirow{2}{*}{1.428} & 33.5 & \textbf{30.4} & 21.8 & 29.3 & 20.0 & 13.0 & 19.0 & \textbf{22.0} & 23.6 \\
              & & Loss ($\downarrow$) & & & \textbf{2.316} & \textbf{2.201} & \textbf{2.102} & 1.638 & \textbf{1.396} & \textbf{1.176} & 1.261 & \textbf{1.977} & \textbf{1.758} \\
              \cmidrule{2-14}
              & \multirow{2}{*}{\textit{Improv.}} & Acc. ($\uparrow$) & - & - & +1.4 & \textbf{+5.4} & 0.0 & -0.3 & +2.0 & 0.0 & 0.0 & \textbf{+5.0} & +1.7 \\
              & & Loss ($\uparrow$) & 0.000 & +0.005 & \textbf{+0.047} & \textbf{+0.020} & \textbf{+0.022} & +0.002 & \textbf{+0.033} & \textbf{+0.028} & +0.009 & \textbf{+0.058} & \textbf{+0.027} \\
            \midrule
            
            \multirow{6}{*}{3B} 
              & \multirow{2}{*}{AdamW} & Acc. ($\uparrow$) & \multirow{2}{*}{1.606} & \multirow{2}{*}{1.302} & 47.8 & \textbf{32.8} & 22.6 & 36.6 & 44.0 & 32.0 & 43.0 & 38.0 & 37.1 \\
              & & Loss ($\downarrow$) & & & 2.265 & 2.005 & 1.910 & 1.534 & 1.259 & 1.054 & 1.116 & 1.922 & 1.633 \\
              \cmidrule{2-14}
              & \multirow{2}{*}{Nexus} & Acc. ($\uparrow$) & \multirow{2}{*}{1.602} & \multirow{2}{*}{\textbf{1.290}} & 48.9 & 29.6 & \textbf{23.4} & 36.6 & \textbf{59.0} & \textbf{40.0} & \textbf{47.0} & 38.0 & \textbf{40.3} \\
              & & Loss ($\downarrow$) & & & \textbf{2.179} & \textbf{1.981} & \textbf{1.881} & \textbf{1.504} & \textbf{1.227} & \textbf{1.026} & \textbf{1.086} & 1.921 & \textbf{1.601} \\
              \cmidrule{2-14}
              & \multirow{2}{*}{\textit{Improv.}} & Acc. ($\uparrow$) & - & - & +1.1 & -3.2 & +0.8 & 0.0 & \textbf{+15.0} & \textbf{+8.0} & \textbf{+4.0} & 0.0 & \textbf{+3.2} \\
              & & Loss ($\uparrow$) & +0.004 & \textbf{+0.012} & \textbf{+0.086} & \textbf{+0.024} & \textbf{+0.029} & \textbf{+0.030} & \textbf{+0.032} & \textbf{+0.028} & \textbf{+0.030} & +0.001 & \textbf{+0.032} \\
            
            \bottomrule
        \end{tabular}
    }
\end{table*}

\subsection{Main Experimental Results}
\label{sec:exp:main_result}

\textbf{Settings.} We train 1B models by 4$\times$ Chinchilla and 3B models for 2$\times$ Chinchilla tokens using two optimizer configurations: the standard AdamW baseline, AdamW equipped with our Nexus regularizer (Nexus).


\textbf{Nexus achieves Same Pretraining Loss, Better Downstream Task.} As detailed in \cref{tab:exp:d803_main_results}, Nexus strictly satisfies the ``same pretraining loss'' condition, showing an immaterial difference of 0.004 compared to the baseline. Despite this parity in pretraining loss, Nexus demonstrates substantial improvements across nearly all evaluated out-of-distribution and downstream metrics. Specifically, it reduces the OOD validation loss by 0.012 and yields significant accuracy gains on complex reasoning benchmarks, including a +15.0\% improvement on GSM8k, +8.0\% on MATH, and +4.0\% on HumanEval. These consistent gains across diverse domains validate our core hypothesis: optimizing closeness unlocks downstream generalization in the same pretraining loss regime.

\textbf{Comparison of Muon and Nexus.} Compared to the standard AdamW baseline, Muon reduces the pretraining loss by 0.029 and improves the average downstream accuracy by 2.3\%. In contrast, Nexus achieves a negligible 0.004 reduction in pretraining loss yet yields a 3.2\% improvement in average downstream accuracy, reaching a downstream performance level comparable to Muon (see \cref{tab:exp:d803:muon}). This observation indicates a fundamental divergence in their optimization pathways: while Muon's downstream improvements rely primarily on achieving a lower pretraining loss, the gains from Nexus stem from its implicit bias despite maintaining a nearly identical pretraining loss as AdamW.


\textbf{Output Analysis.} Compared to the AdamW baseline, Nexus improves accuracy by 15.0\% on GSM8k, 8.0\% on MATH, and 4.0\% on HumanEval. To investigate the source of these improvements, we analyze the model outputs on these benchmarks. We observe that the set of correctly answered questions by Nexus is almost a strict superset of those answered correctly by AdamW. Specifically, on GSM8k and HumanEval, Nexus retains a >95\% retention rate on the questions already solved by AdamW, while the 15.0\% net improvement stems entirely from exclusively solving previously failed questions. This additive behavior indicates that the performance gain provided by Nexus over the base optimizer is highly stable, expanding the capability boundaries without regressing on previously learned knowledge.

\subsection{Scaling Analysis on Model Size}
\label{sec:discussion:scaling:num_params}

\textbf{Motivation.} In the following two subsections, we investigate the scalability of Nexus across model size and training duration (tokens). Prevailing literature on implicit bias suggests that the role of implicit regularization becomes increasingly prominent with greater overparameterization and extended computational budgets, since sufficient expressive power and optimization steps grant the model the flexibility to satisfy the geometric implicit bias without compromising the minimization of the pretraining loss~\cite{wen2022does,belkin2019reconciling,power2022grokking,zhang2016understanding,neyshabur2014search,soudry2018implicit,lyu2019gradient}. Since Nexus operates via such implicit bias, we hypothesize that its downstream generalization benefits will also amplify at larger compute and model scales.



\begin{figure*}[t]
    \centering
    \begin{subfigure}[b]{0.32\linewidth}
        \centering
        \includegraphics[width=\linewidth]{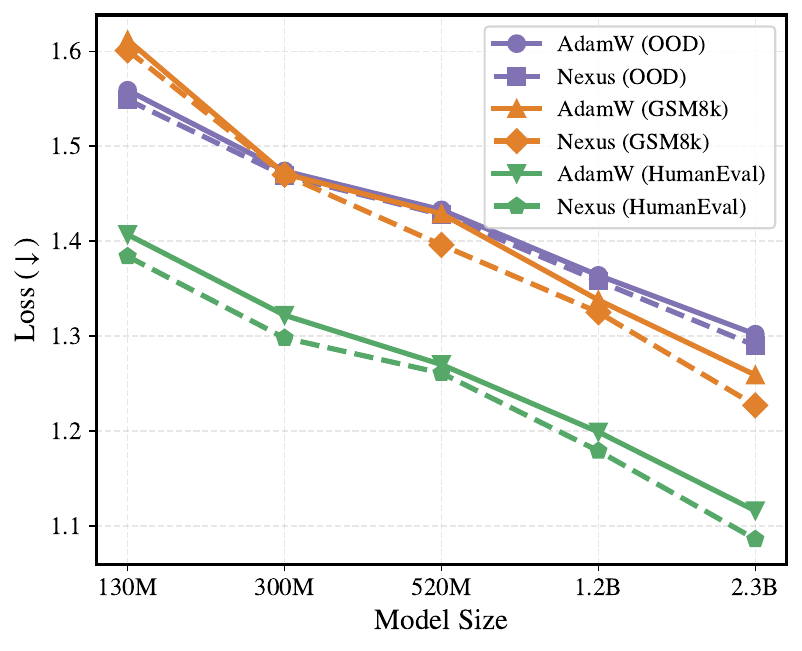}
        \caption{Downstream Loss}
        \label{fig:exp:scaling:loss}
    \end{subfigure}
    \hfill
    \begin{subfigure}[b]{0.32\linewidth}
        \centering
        \includegraphics[width=\linewidth]{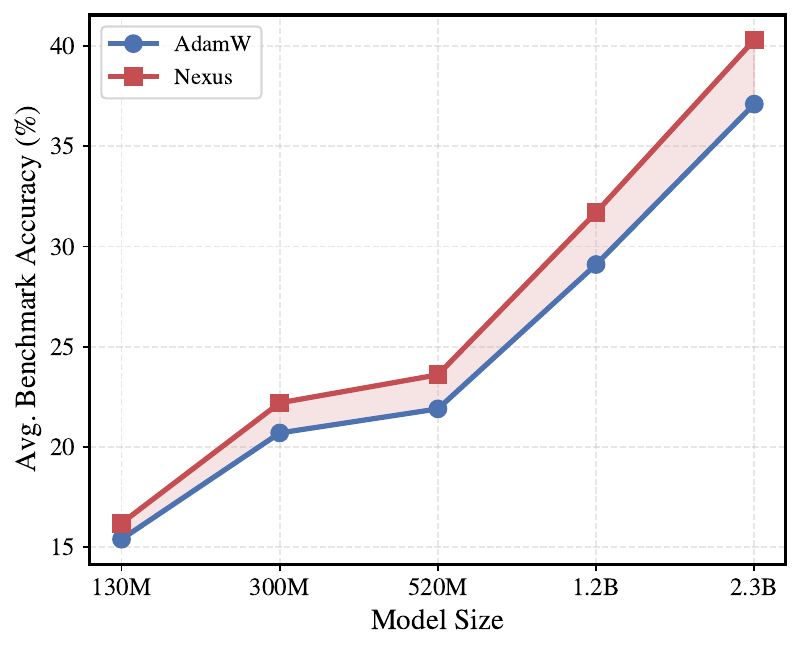}
        \caption{Downstream Benchmark}
        \label{fig:exp:scaling:acc}
    \end{subfigure}
    \hfill
    \begin{subfigure}[b]{0.32\linewidth}
        \centering
        \includegraphics[width=\linewidth]{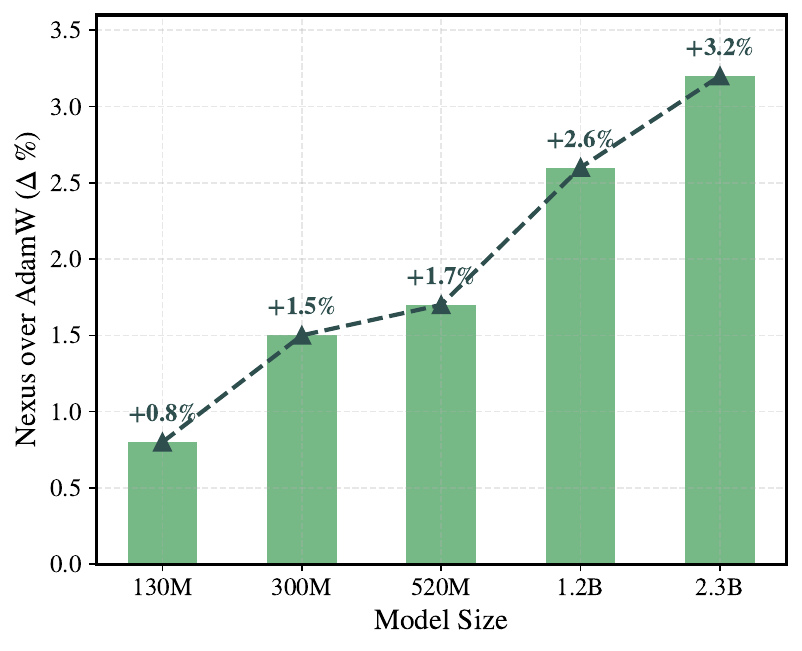}
        \caption{Average Benchmark Gain}
        \label{fig:exp:scaling:gap}
    \end{subfigure}
    \caption{\textbf{Benchmark Performance across Model Scales.} We compare downstream capabilities for models ranging from 130M to 2.3B parameters. Notably, the relative gains of Nexus amplify as model capacity increases, with the average benchmark accuracy improvement growing from +0.8\% on the 130M model to +3.2\% on the 2.3B model.}
    \label{fig:exp:scaling_main}
\end{figure*}

\textbf{Settings.} We evaluate models across five distinct sizes as outlined in \cref{sec:exp:setting}. Please refer to \cref{appendix:hyper_params} for the detailed hyperparameters of each experiment. The results are shown in \cref{tab:exp:d803_scaling,fig:exp:scaling_main}.

\textbf{Universal ``Same Pretraining Loss, Better Downstream''.} Across all model sizes ranging from 130M to 2.3B, Nexus consistently maintains the pretraining validation loss within a negligible margin (defined as $\Delta < 0.01$ in \cref{sec:exp:setting}) compared to the baseline, satisfying "same pretraining loss." Despite this parity in pretraining loss, Nexus achieves non-trivial loss reduction on nearly all downstream tasks. For instance, at the 1.2B scale, while the validation loss difference is merely $0.007$, Nexus reduces MMLU loss by $0.086$, and both BBH and HumanEval losses by $0.03$, more than 7 times larger than the pretraining loss gap. 

\textbf{Performance Gains Amplify with Scale.} We observe that the relative advantage of Nexus over the AdamW baseline expands monotonically as model capacity increases. Specifically, the average benchmark accuracy improvements across the five evaluated scales are +0.8\% (130M), +1.5\% (300M), +1.7\% (520M), +2.6\% (1.2B), and +3.2\% (2.3B). This amplification is particularly pronounced in complex reasoning tasks: the accuracy gap on GSM8k widens from negligible levels at the 130M scale to +15.0\% (59.0 vs. 44.0) at the 2.3B scale, accompanied by a 0.032 reduction in downstream loss. These results demonstrate that Nexus scales favorably with model capacity, effectively leveraging the increased expressive power to enforce the geometric closeness bias.

\subsection{Scaling Analysis on Training Tokens}
\label{sec:discussion:scaling:tokens}

\begin{table*}[t]
    \centering
    \caption{\textbf{Scaling Analysis on Training Tokens.} We extend the pretraining duration of the 3B model from 2$\times$ to 4$\times$ Chinchilla optimal tokens. The results demonstrate that the downstream performance advantage of Nexus over the AdamW baseline persists strictly.}
    \label{tab:exp:scaling:tokens}
    \setlength{\tabcolsep}{0.35ex} 
    \resizebox{\textwidth}{!}{
        \begin{tabular}{cll cc c ccc cc cc c}
            \toprule
            \multirow{2}{*}{\textbf{Chinchila}} & \multirow{2}{*}{\textbf{Optim.}} & \multirow{2}{*}{\textbf{Metric}} & 
            \multicolumn{2}{c}{\textbf{Loss Metrics} ($\downarrow$)} & 
            \multicolumn{1}{c}{\textbf{Gen.}} & 
            \multicolumn{3}{c}{\textbf{Reasoning}} & 
            \multicolumn{2}{c}{\textbf{Math}} & 
            \multicolumn{2}{c}{\textbf{Code}} &
            \multicolumn{1}{c}{\textbf{Avg.}} \\
            \cmidrule(lr){4-5} \cmidrule(lr){6-6} \cmidrule(lr){7-9} \cmidrule(lr){10-11} \cmidrule(lr){12-13} \cmidrule(lr){14-14}
             & & & Pretrain. & OOD & MMLU & GPQA & GPQA-D & BBH & GSM8k & MATH & HumanEval & MBPP & All \\
            \midrule
            
            \multirow{5}{*}{2} 
              & \multirow{2}{*}{AdamW} & Acc. ($\uparrow$) & \multirow{2}{*}{1.606} & \multirow{2}{*}{1.302} & 47.8 & \textbf{32.8} & 22.6 & 36.6 & 44.0 & 32.0 & 43.0 & 38.0 & 37.1 \\
              & & Loss ($\downarrow$) & & & 2.265 & 2.005 & 1.910 & 1.534 & 1.259 & 1.054 & 1.116 & 1.922 & 1.633 \\
              \cmidrule{2-14}
              & \multirow{2}{*}{Nexus} & Acc. ($\uparrow$) & \multirow{2}{*}{1.602} & \multirow{2}{*}{\textbf{1.290}} & 48.9 & 29.6 & 23.4 & 36.6 & \textbf{59.0} & \textbf{40.0} & \textbf{47.0} & 38.0 & \textbf{40.3} \\
              & & Loss ($\downarrow$) & & & \textbf{2.179} & \textbf{1.981} & \textbf{1.881} & \textbf{1.504} & \textbf{1.227} & \textbf{1.026} & \textbf{1.086} & 1.921 & \textbf{1.601} \\
              \cmidrule{2-14}
              & \textit{Improv.} & Loss ($\uparrow$) & +0.004 & \textbf{+0.012} & \textbf{+0.086} & \textbf{+0.024} & \textbf{+0.029} & \textbf{+0.030} & \textbf{+0.032} & \textbf{+0.028} & \textbf{+0.030} & +0.001 & \textbf{+0.032} \\
            \midrule
            
            \multirow{5}{*}{4} 
              & \multirow{2}{*}{AdamW} & Acc. ($\uparrow$) & \multirow{2}{*}{1.591} & \multirow{2}{*}{1.293} & 48.3 & \textbf{23.4} & 21.9 & 35.2 & 54.0 & 33.0 & 45.0 & 43.0 & 38.0 \\
              & & Loss ($\downarrow$) & & & 2.240 & 1.975 & 1.880 & 1.513 & 1.245 & 1.038 & 1.119 & 1.976 & 1.623 \\
              \cmidrule{2-14}
              & \multirow{2}{*}{Nexus} & Acc. ($\uparrow$) & \multirow{2}{*}{1.588} & \multirow{2}{*}{\textbf{1.281}} & \textbf{52.8} & 20.3 & \textbf{25.0} & \textbf{44.1} & \textbf{62.0} & 33.0 & \textbf{49.0} & \textbf{47.0} & \textbf{41.7} \\
              & & Loss ($\downarrow$) & & & \textbf{2.216} & \textbf{1.957} & \textbf{1.863} & \textbf{1.501} & \textbf{1.229} & \textbf{1.008} & \textbf{1.087} & \textbf{1.885} & \textbf{1.593} \\
              \cmidrule{2-14}
              & \textit{Improv.} & Loss ($\uparrow$) & +0.003 & \textbf{+0.012} & \textbf{+0.024} & \textbf{+0.018} & \textbf{+0.017} & \textbf{+0.012} & \textbf{+0.016} & \textbf{+0.030} & \textbf{+0.032} & \textbf{+0.091} & \textbf{+0.030} \\
            
            \bottomrule
        \end{tabular}
    }
\end{table*}

\textbf{Settings.} To evaluate scalability with respect to compute, we extend the training duration of the 3B model from the standard 2$\times$ Chinchilla optimal token count to 4$\times$ Chinchilla optimal (i.e., doubling the original training time). All other configurations, including the data mixture, model architecture, and base optimizer hyperparameters, remain strictly identical to those in the main experiments in \cref{sec:exp:setting}.

\textbf{The advantage of Nexus does not diminish with more training tokens.} As shown in \cref{tab:exp:scaling:tokens}, while the AdamW baseline naturally improves with extended training (average accuracy increasing from 37.1 to 38.0), it still fundamentally lags behind Nexus. Notably, the overall performance gap between Nexus and AdamW does not shrink with more tokens; Nexus at 4$\times$ Chinchilla achieves an average accuracy of 41.7, effectively maintaining and even slightly widening its substantial lead over the baseline. This confirms that the current implicit bias of standard SGD is insufficient to naturally reach optimal geometric closeness, making Nexus's explicit regularization strictly necessary even under extended compute budgets.

\subsection{Robustness to Data Mixing}
\label{sec:exp:data_mixture}

\textbf{Motivation.} In \cref{sec:cwa:optimizing_using_cwa} and \cref{eq:cwa:gradient_similarity_implies_b_decrease_when_optimizing_on_a}, we show that the gradient similarity implies the marginal gains on task $i$ when optimizing on task $j$ (in the first-order sense):
\begin{equation}
    \underbrace{\mathcal{L}_{i}(\bm{\theta}) - \mathcal{L}_{i}(\bm{\theta}- \gamma \nabla \mathcal{L}_{j}(\bm{\theta})) }_{\text{decrease of task i after one GD step on task j}} = \gamma \nabla \mathcal{L}_{i}(\bm{\theta})^\top \nabla \mathcal{L}_{j}(\bm{\theta}) + O(\gamma^2).
\end{equation}
Since Nexus encourages gradient similarity across the training set, optimizing a sample-dense domain implicitly optimizes sample-sparse domains. Therefore, we conjecture that Nexus acts like a dynamic data mixture, which boosts the sample-sparse or harder-to-learn domains within the mixture without manual re-weighting.

\textbf{Setup.} To validate our hypothesis, we construct three distinct data mixtures by explicitly anchoring the sampling weight of the mathematics domain to 10\%, 40\%, and 70\% (denoted as Math10, Math40, and Math70). Accordingly, we downsample the remaining data sources to fulfill the complementary proportion (e.g., Math70 consists of 70\% math and 30\% downsampled other data). We train 3B models on each mixture using both AdamW and Nexus, strictly adhering to the hyperparameter settings detailed in \cref{sec:exp:setting}.

\begin{figure*}[t]
    \centering
    \begin{subfigure}[b]{0.31\linewidth}
        \centering
        \includegraphics[width=\linewidth]{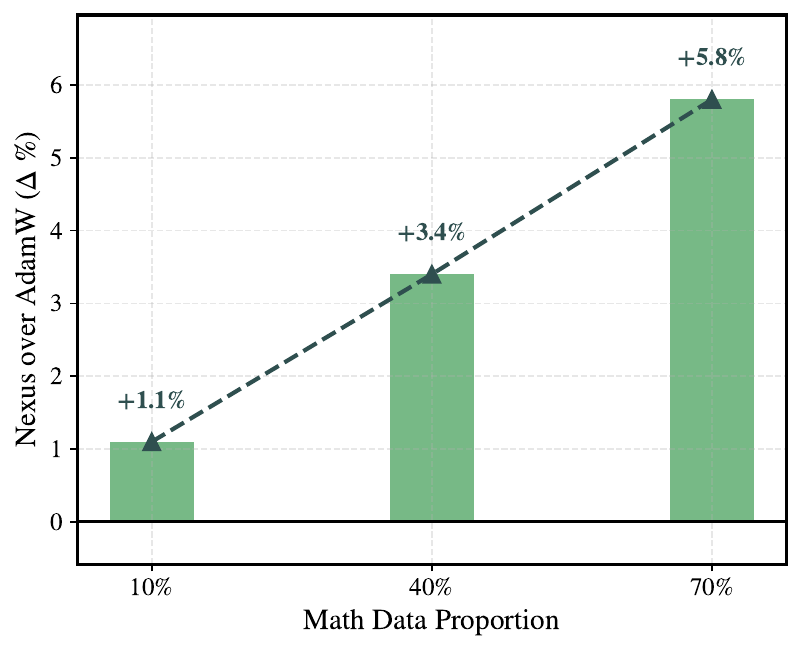}
        \caption{General Domain (MMLU)}
        \label{fig:exp:mixture:gen}
    \end{subfigure}
    \hfill
    \begin{subfigure}[b]{0.31\linewidth}
        \centering
        \includegraphics[width=\linewidth]{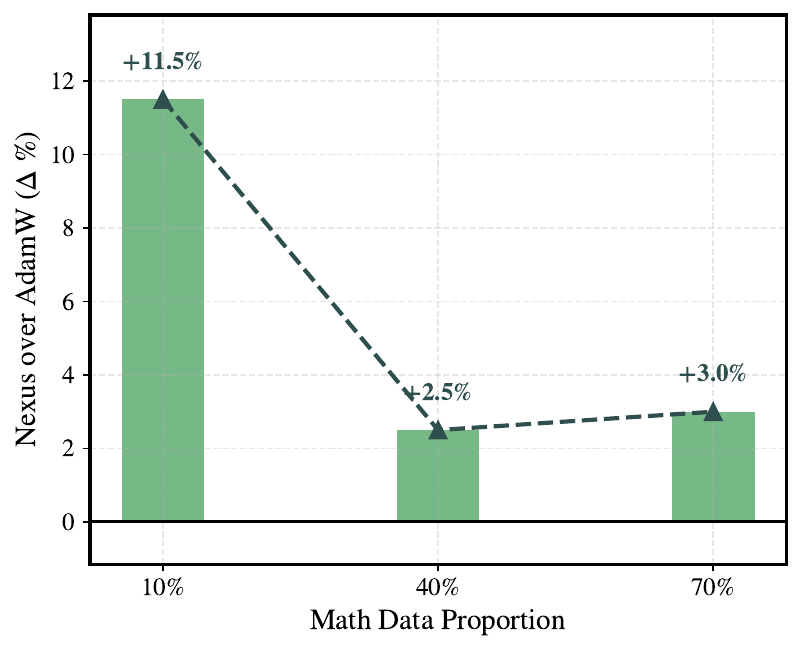}
        \caption{Math Domain (Avg.)}
        \label{fig:exp:mixture:math}
    \end{subfigure}
    \hfill
    \begin{subfigure}[b]{0.31\linewidth}
        \centering
        \includegraphics[width=\linewidth]{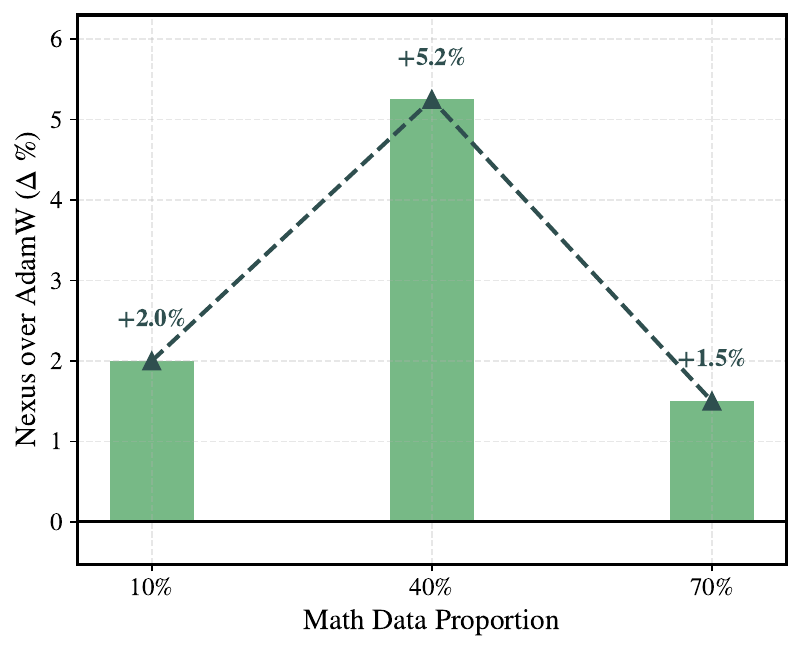}
        \caption{Code Domain (Avg.)}
        \label{fig:exp:mixture:code}
    \end{subfigure}
    \caption{\textbf{Results on varying data mixtures} (3B models).  As the proportion of math data increases (10\% $\to$ 70\%), the relative performance gains of Nexus on math benchmarks gradually diminish, whereas its advantages on General domain progressively expand. This suggests Nexus boosts the sample-sparse or harder-to-learn domains in the mixture.}
    \label{fig:exp:mixture:fig}
\end{figure*}

\textbf{Results.} As shown in \cref{fig:exp:mixture:fig} and \cref{tab:exp:d803_data_mixture}, increasing the proportion of math data from 10\% to 70\% gradually reduces Nexus's relative gain on math reasoning. Conversely, as general data becomes the relative minority, Nexus yields a larger improvement in this domain, increasing its gain from \textbf{+1.1\%} to \textbf{+5.8\%}. Interestingly, the gain on coding tasks exhibits a non-monotonic trend, which we hypothesize is because code generation is a composite capability requiring a complex balance of both logical reasoning and domain knowledge. Furthermore, Nexus mitigates the performance fluctuations observed in the baseline across these mixture shifts. These results support our conjecture that Nexus acts as an implicit balancer, dynamically prioritizing under-optimized tasks without manual mixture tuning.

\subsection{Robustness to Learning Rate Schedule}
\label{sec:exp:lr_schedule}

\textbf{Motivation.} While the Warmup-Stable-Decay (WSD) scheduler \citep{hu2024minicpm} has become increasingly popular in recent LLM pretraining, the Cosine annealing schedule remains a widely adopted standard~\cite{wen2025understanding,wen2025fantastic}. To ensure that our observed generalization benefits are not merely an artifact of a specific learning rate dynamic, we evaluate the robustness of Nexus across different schedulers.

\textbf{Settings.} We conduct an ablation study by replacing the default WSD scheduler with a standard Cosine learning rate scheduler. All other training configurations, including the 3B model architecture, data mixture, and base optimizer hyperparameters, remain strictly identical to the main setup detailed in \cref{sec:exp:main_result}.

\begin{table*}[t]
    \centering
    \caption{\textbf{Results under different learning rate schedulers.} We evaluate the 3B model trained with AdamW and Nexus using both WSD and Cosine schedulers. The results demonstrate that the ``same pretraining loss, better downstream performance'' phenomenon is highly robust regardless of the scheduler.}
    \label{tab:exp:d803_lr_schedule}
    \setlength{\tabcolsep}{0.35ex} 
    \resizebox{\textwidth}{!}{
        \begin{tabular}{lll cc c ccc cc cc c}
            \toprule
            \multirow{2}{*}{\textbf{Schedule}} & \multirow{2}{*}{\textbf{Optim.}} & \multirow{2}{*}{\textbf{Metric}} & 
            \multicolumn{2}{c}{\textbf{Loss Metrics} ($\downarrow$)} & 
            \multicolumn{1}{c}{\textbf{Gen.}} & 
            \multicolumn{3}{c}{\textbf{Reasoning}} & 
            \multicolumn{2}{c}{\textbf{Math}} & 
            \multicolumn{2}{c}{\textbf{Code}} &
            \multicolumn{1}{c}{\textbf{Avg.}} \\
            \cmidrule(lr){4-5} \cmidrule(lr){6-6} \cmidrule(lr){7-9} \cmidrule(lr){10-11} \cmidrule(lr){12-13} \cmidrule(lr){14-14}
             & & & Eval & OOD & MMLU & GPQA & GPQA-D & BBH & GSM8k & MATH & HumanEval & MBPP & All \\
            \midrule
            
            \multirow{5}{*}{WSD} 
              & \multirow{2}{*}{AdamW} & Acc. ($\uparrow$) & \multirow{2}{*}{1.606} & \multirow{2}{*}{1.302} & 47.8 & \textbf{32.8} & 22.6 & 36.6 & 44.0 & 32.0 & 43.0 & 38.0 & 37.1 \\
              & & Loss ($\downarrow$) & & & 2.265 & 2.005 & 1.910 & 1.534 & 1.259 & 1.054 & 1.116 & 1.922 & 1.633 \\
              \cmidrule{2-14}
              & \multirow{2}{*}{Nexus} & Acc. ($\uparrow$) & \multirow{2}{*}{1.602} & \multirow{2}{*}{\textbf{1.290}} & 48.9 & 29.6 & \textbf{23.4} & 36.6 & \textbf{59.0} & \textbf{40.0} & \textbf{47.0} & 38.0 & \textbf{40.3} \\
              & & Loss ($\downarrow$) & & & \textbf{2.179} & \textbf{1.981} & \textbf{1.881} & \textbf{1.504} & \textbf{1.227} & \textbf{1.026} & \textbf{1.086} & 1.921 & \textbf{1.601} \\
              \cmidrule{2-14}
              & \textit{Improv.} & Loss ($\uparrow$) & +0.004 & \textbf{+0.012} & \textbf{+0.086} & \textbf{+0.024} & \textbf{+0.029} & \textbf{+0.030} & \textbf{+0.032} & \textbf{+0.028} & \textbf{+0.030} & +0.001 & \textbf{+0.032} \\
            \midrule
            
            \multirow{5}{*}{Cosine} 
              & \multirow{2}{*}{AdamW} & Acc. ($\uparrow$) & \multirow{2}{*}{1.526} & \multirow{2}{*}{1.255} & 53.2 & 26.6 & 19.5 & \textbf{41.5} & 60.0 & 32.0 & 56.0 & 39.0 & 41.0 \\
              & & Loss ($\downarrow$) & & & 2.195 & 1.924 & 1.829 & 1.480 & 1.212 & 1.022 & 1.045 & 1.867 & 1.572 \\
              \cmidrule{2-14}
              & \multirow{2}{*}{Nexus} & Acc. ($\uparrow$) & \multirow{2}{*}{1.528} & \multirow{2}{*}{1.250} & 54.9 & \textbf{30.5} & \textbf{27.3} & 34.8 & 59.0 & \textbf{41.0} & 54.0 & \textbf{46.0} & \textbf{43.4} \\
              & & Loss ($\downarrow$) & & & \textbf{2.115} & 1.917 & 1.826 & 1.479 & \textbf{1.169} & \textbf{0.994} & \textbf{1.025} & \textbf{1.805} & \textbf{1.541} \\
              \cmidrule{2-14}
              & \textit{Improv.} & Loss ($\uparrow$) & -0.002 & +0.005 & \textbf{+0.080} & +0.007 & +0.003 & +0.001 & \textbf{+0.043} & \textbf{+0.028} & \textbf{+0.020} & \textbf{+0.062} & \textbf{+0.030} \\
            
            \bottomrule
        \end{tabular}
    }
\end{table*}

\textbf{Results.} As demonstrated in \cref{tab:exp:d803_lr_schedule}, the "same pretraining loss, better downstream performance" phenomenon persists consistently across both schedulers. Under the Cosine schedule, Nexus maintains a negligible pretraining loss difference compared to the AdamW baseline (1.528 vs. 1.526) while delivering substantial improvements on downstream metrics, such as a +0.03 loss gain on downstream benchmarks. This confirms that the implicit bias introduced by Nexus is highly robust and orthogonal to the choice of learning rate trajectory.




\section{Discussions}
\label{sec:discussion}

In this section, we conduct several interesting ablation studies of Nexus.

\subsection{Training Dynamic of Nexus}
\label{sec:exp_validate_theory}

\textbf{Settings.} We analyze the training trajectories of the 3B AdamW and 3B Nexus models from \cref{sec:exp:main_result}. During pretraining, we record the gradient cosine similarity between test set and each downstream corpus every 1,000 steps and compute the average to approximate the averaged gradient similarity during training. Upon the completion of pretraining, we perform full batch Gradient Descent using AdamW with learning rate $2\times 10^{-5}$ and weight decay $0$ on each downstream task $\mathcal{L}_{\mathcal{T}}$ to locate the respective task-specific minimizer $\bm{\theta}_{\mathcal{T}}^*$ for subsequent visualization and distance evaluation.

\begin{table*}[t]
    \centering
    \caption{\textbf{Analysis of Gradient Similarity, Loss, and Benchmarks.} By optimizing gradient similarity within the pretraining corpus, Nexus achieves higher gradient similarity between the pretraining corpus and downstream corpus. Consistent with \cref{eq:cwa:gradient_similarity_implies_b_decrease_when_optimizing_on_a}, this first order gradient similarity directly translates into lower zero-th order downstream losses, ultimately yielding better benchmark performance.}
    \label{tab:exp:validate_theory:analysis_grad_loss_bench}
    \resizebox{\textwidth}{!}{
        \begin{tabular}{ll ccccccc}
            \toprule
            \multirow{2}{*}{\textbf{Metric}} & \multirow{2}{*}{\textbf{Optim.}} & 
            \textbf{Pretrain Set} & \textbf{OOD Set} & \textbf{GPQA-D} & \textbf{GSM8k} & \textbf{Math500} & \textbf{HumanEval} & \textbf{MBPP} \\
             & & & & & & & & \\
            \midrule

            \multirow{2}{*}{\textbf{Grad Sim. }($\uparrow$)} 
              & AdamW & 0.4499 & 0.2228 & 0.0824 & \textbf{0.0374} & 0.0422 & 0.0367 & 0.0091 \\
              & Nexus  & \textbf{0.4661} & \textbf{0.2464} & \textbf{0.0924} & 0.0325 & \textbf{0.0427} & \textbf{0.0382} & \textbf{0.0092} \\
            \midrule
            \multirow{2}{*}{\textbf{Param. Closeness. }($\downarrow$)} 
              & AdamW & 1.452 & 2.812 & 4.500 & 3.418 & 3.775 & 4.472 & \textbf{3.645} \\
              & Nexus  & \textbf{1.441} & \textbf{2.806} & \textbf{4.482} & \textbf{3.326} & \textbf{3.766} & \textbf{4.444} & 3.648 \\
            \midrule
            \multirow{2}{*}{\textbf{Loss }($\downarrow$)} 
              & AdamW & 1.606 & 1.302 & 1.910 & 1.259 & 1.054 & 1.116 & 1.922 \\
              & Nexus  & 1.602 & \textbf{1.290} & \textbf{1.881} & \textbf{1.227} & \textbf{1.026} & \textbf{1.086} & 1.921 \\
            \midrule
            
            \multirow{2}{*}{\textbf{Benchmark }($\uparrow$)} 
              & AdamW & - & - & 22.6 & 44.0 & 32.0 & 43.0 & 38.0 \\
              & Nexus  & - & - & 23.4 & \textbf{59.0} & \textbf{40.0} & \textbf{47.0} & 38.0 \\
            
            \bottomrule
        \end{tabular}
    }
\end{table*}


\textbf{Nexus Encourages Training Set Closeness.} As demonstrated in \cref{tab:exp:validate_theory:analysis_grad_loss_bench}, Nexus effectively increases the gradient similarity across the pretraining set, $\mathbb{E}_{i \neq j}[\text{CosSim}(\nabla \mathcal{L}_i, \nabla \mathcal{L}_j)]$, compared to the base optimizer, as analyzed in \cref{thm:cwa_optimize_gradient_similarity,thm:third_order_bias}.

\textbf{Training Set Closeness Generalizes to Downstream Closeness.} Fortunately, this gradient closeness generalizes beyond the pretraining corpus to unseen downstream tasks $\mathcal{T}$, effectively increasing the similarity between the training objective and the downstream task, $\text{CosSim}(\nabla \mathcal{L}_{\text{train}}, \nabla \mathcal{L}_{\mathcal{T}})$.

\textbf{Downstream Closeness Yields Smaller Downstream Loss and Better Performance.} Since this gradient similarity $\text{CosSim}(\nabla \mathcal{L}_{\text{train}}, \nabla \mathcal{L}_{\mathcal{T}})$ generalizes, optimizing the pretraining objective inherently optimizes the downstream tasks, as indicated by the first-order approximation in \cref{eq:cwa:gradient_similarity_implies_b_decrease_when_optimizing_on_a}. This gradient closeness translates into lower downstream losses and better benchmark performance.

\textbf{Empirical Landscapes Align with \cref{fig:cwa_illustration}.} As shown in \cref{fig:exp:ablation:loss_landscape}, Nexus reduces the downstream loss $\mathcal{L}_{\mathcal{T}}(\bm{\theta}_{\text{train}}^*)$ by decreasing the geometric distance $\|\bm{\theta}_{\mathcal{T}}^* - \bm{\theta}_{\text{train}}^*\|_2$ between the converged parameter and the task-specific minimizer. This observation matches the analyses in \cref{theorem:closeness_generalization:simple,theorem:closeness_generalization:general}. While Nexus reduces this distance, it does not cause all minima to nearly intersect as depicted in \cref{fig:cwa_illustration:close}---which would theoretically yield a nearly 0\% OOD generalization error. Instead, it achieves a moderate reduction in geometric distance, leading to a proportionally lower downstream loss. We hope future work can design stronger Nexus variants capable of approaching this extreme closeness without introducing significant computational overhead.


\subsection{Implicit Biases of Other Optimizers}
\label{sec:discussion:implicit_bias_of_muon}

\textbf{Motivation.} To explicitly demonstrate the "same pretraining loss, better downstream task" phenomenon and analyze the implicit biases of different optimizers, we visualize the correlation between pretraining and downstream losses, using the results of AdamW, Muon, and AdamW-Nexus from \cref{sec:exp:main_result}. We plot the averaged downstream loss (y-axis) against the pretraining validation loss (x-axis) at corresponding checkpoints. The results are presented in \cref{fig:exp:ablation:loss_bench_plot}.

\textbf{Muon does not possess a superior implicit bias.} As illustrated in \cref{fig:exp:ablation:loss_bench_plot}, the curves for Adam and pure Muon almost completely overlap. This indicates that despite its orthogonalization mechanism, Muon seems not to introduce a favorable implicit bias for downstream generalization beyond what is explained by the pretraining loss itself. This observation aligns with the findings in \citet{wen2025fantastic}, which suggest that for Muon-like optimizers, achieving the same pretraining loss typically translates to the same downstream performance. While recent work \cite{wang2025muon} demonstrates that Muon tends to optimize towards representations with a higher weight rank than Adam, empirical results suggest that this structural difference in weight matrices seems not inherently translate into observable generalization benefits on downstream tasks.

\textbf{Implicit bias does not stem from gradient normalization.} To further isolate the source of Nexus's generalization benefits, we conduct an ablation study where the normalized gradient $\bm{g}/\|\bm{g}\|_2$ is directly fed into the Adam optimizer instead of the raw gradient $\bm{g}$. The results are shown as the NSGD curve in \cref{fig:exp:ablation:loss_bench_plot}. We observe that this variant still fails to introduce any favorable implicit bias, which closely overlaps with the standard Adam baseline. This indicates that the downstream gains of Nexus do not originate from the mere act of normalizing gradients. Mathematically, this ablation is strictly equivalent to executing the Nexus algorithm with an inner loop step count of $K=1$. This empirical observation perfectly aligns with \cref{thm:cwa_optimize_gradient_similarity}: when $K=1$, the coefficient of the gradient similarity regularizer $\frac{K-1}{4K}$ becomes strictly zero, stripping the optimizer of its consensus-seeking property and reducing it to a purely first-order method.

\begin{figure}[t]
    \centering
    \begin{subfigure}[b]{0.31\linewidth}
        \centering
        \includegraphics[width=\linewidth]{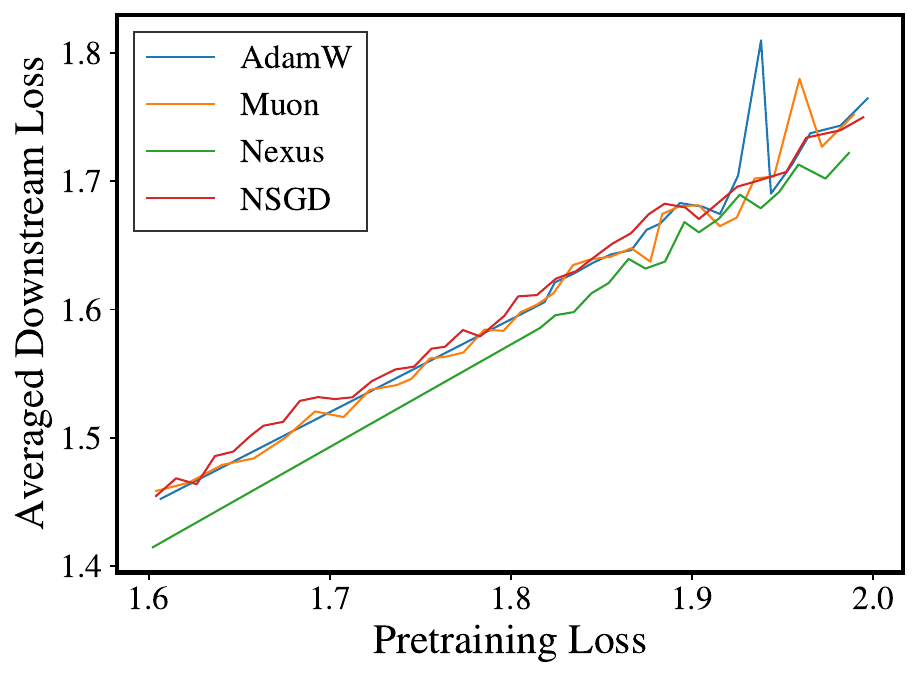}
        \caption{Implicit Biases}
        \label{fig:exp:ablation:loss_bench_plot}
    \end{subfigure}
    \begin{subfigure}[b]{0.31\linewidth}
        \centering
        \includegraphics[width=\linewidth]{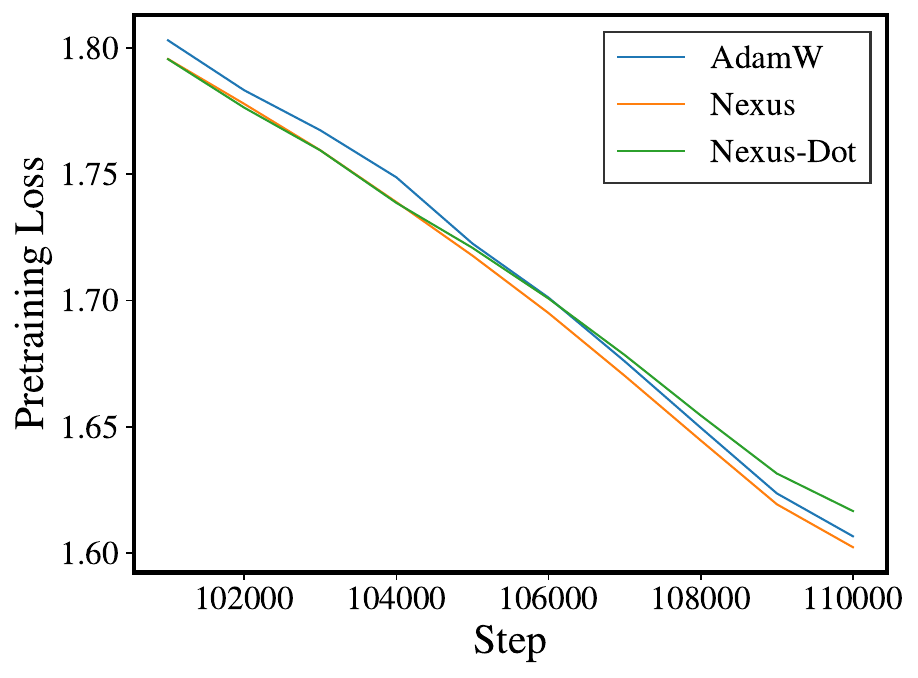}
        \caption{Nexus-Dot}
        \label{fig:exp:ablation:dot_cwa}
    \end{subfigure}
    \begin{subfigure}[b]{0.31\linewidth}
        \centering
        \includegraphics[width=\linewidth]{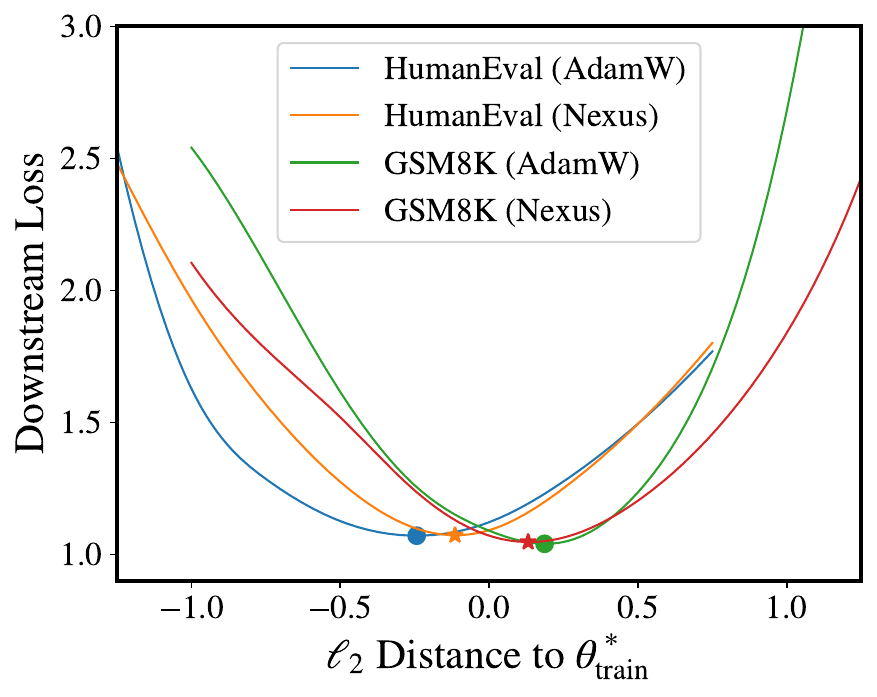}
        \caption{Loss Landscape}
        \label{fig:exp:ablation:loss_landscape}
    \end{subfigure}
    \caption{\textbf{Ablation Studies.} (a) Implicit biases of various optimizers, illustrated by the correlation between pretraining and downstream losses. (b) Pretraining trajectory of Nexus-Dot, demonstrating that optimizing the unnormalized dot product disrupts pretraining loss minimization. (c) Loss landscape visualization of Adam and Nexus.}
    \label{fig:exp:ablation}
\end{figure}


\subsection{Cosine Similarity Instead of Dot Product Similarity}

Although as discussed in \cref{sec:cwa}, the dot product similarity of gradients offers a more direct theoretical connection—yielding a tighter bound for parameter closeness (\cref{thm:gradient_similarity_bound_closeness}) and a more straightforward interpretation for downstream generalization (\cref{eq:cwa:gradient_similarity_implies_b_decrease_when_optimizing_on_a})—it proves practically challenging to optimize. 

This difficulty primarily arises because the dot product objective introduces a pathological optimization shortcut. Specifically, the dot product is highly scale-dependent: if the overall loss magnitude scales by a factor of $k$, the gradient norm scales proportionally by $k$, causing the dot product similarity to artificially inflate by a factor of $k^2$. Consequently, directly maximizing the dot product severely disrupts the primary minimization of the pretraining loss, as the optimizer may exploit this shortcut by inadvertently increasing the gradient norms rather than discovering genuine task consensus.

As demonstrated in \cref{fig:exp:ablation:dot_cwa}, the optimization trajectory of Nexus-Dot lags significantly behind the standard Adam baseline. The resulting degradation in pretraining loss heavily outweighs any potential generalization benefits conferred by its implicit bias. Therefore, we adopt cosine similarity (via normalized gradients) as our primary regularization objective in this work. Note that the progressive deceleration of Nexus-Dot observed in \cref{fig:exp:ablation:dot_cwa} is a persistent geometric phenomenon, occurring consistently regardless of the learning rate scheduler or the choice of base optimizer (e.g., AdamW or Muon). Due to space constraints, we selectively present the ablation results for the 3B model with Adam, corresponding to the main setup in \cref{sec:exp:main_result}.


\section{Conclusion and Limitation}
\label{sec:conclusion}

In this work, we investigate the geometric closeness of minimizers of different losses in LLM pretraining. We show that this closeness strongly correlates with downstream generalization. To optimize this closeness, we propose the Nexus algorithm, which encourages gradient similarity across different tasks. We show that both gradient closeness and geometric closeness generalize to downstream tasks, thus leading to lower downstream loss and better downstream performance. Experimental results across various settings validate our claims. We reckon that as the LLM scaling paradigm transitions from being compute-bound to data-bound, explicitly engineering the implicit biases of optimizers to unlock generalization may serve as a critical frontier for developing more capable language models.

\textbf{Limitations.} Despite its empirical success and theoretical consistency on AdamW, Nexus currently remains incompatible with the Muon optimizer. Specifically, Muon combined with Nexus even underperforms the AdamW-Nexus configuration on downstream tasks, due to its deceleration on Muon (in contrast to the slight acceleration observed with AdamW as demonstrated in \cref{sec:cwa:convergence_rate}). We hypothesize this may be due to several subtle factors, such as numerical sensitivities involving the pseudo-gradient coefficient $\gamma$ (see \cref{eq:cwa_objective}) or potential interactions arising from the Newton-Schulz iterations. We are currently investigating these challenges and aim to resolve this incompatibility in future work.

\section*{Acknowledgement}

This work was conducted for research and validation purposes only. The algorithms and methodologies described herein are experimental prototypes and have not been integrated into any commercial products or services of the affiliated organizations.

We gratefully acknowledge the support of the National Science Foundation (Grant 625B2104). We also thank Kaiyue Wen, Jiacheng You, Haodong Wen, Yan Wu, Jianhui Duan, Chengyin Xu for their insightful comments and helpful discussions.

\clearpage

\bibliographystyle{plainnat}
\bibliography{ref}

\clearpage

\beginappendix

\crefalias{section}{appendix}
\crefalias{subsection}{appendix}
\crefalias{subsubsection}{appendix}

\section{Extended Background and Setup}
\label{sec:notations_and_assumptions}

\subsection{Related Work}

\textbf{Pretraining Optimizers.} Various optimizers have been proposed for LLM pretraining, such as AdamW~\cite{kingma2014adam,loshchilov2017decoupled}, Muon~\cite{jordan2024muon}, Lion~\cite{chen2023symbolic}, SOAP~\cite{vyas2024soap}, Sophia~\cite{liu2023sophia}, MARS~\cite{yuan2025mars}, Adam-mini~\cite{zhang2024adammini}, Cautious~\cite{liang2024cautious}, and Scion~\cite{pethick2025training_scion}. Most of these optimizers primarily focus on achieving faster optimization and accelerating the convergence of the pretraining loss. In contrast, our work focuses on the generalization aspect, aiming to achieve better downstream task performance under the same pretraining loss. Since Nexus serves as a gradient approximator prior to the optimizer update, our method is largely orthogonal to the choice of the base optimizer and may be used in conjunction with these existing optimization algorithms.

\textbf{Implicit Biases.} Implicit bias is a well-studied topic in deep learning theory, as it is closely related to downstream generalization, especially under the zero training loss or same pretraining loss~\cite{soudry2018implicit,lyu2019gradient,gunasekar2018implicit,li2018algorithmic,ji2018gradient,arora2019implicit,li2020towards,haochen2021shape,liu2023same}. Our work differs from previous studies in two main aspects. First, we focus on gradient and parameter closeness, whereas previous studies primarily investigate sharpness and margins. Second, we evaluate our method in modern auto-regressive LLM pretraining settings, as opposed to standard CNN, DNN, BERT or adversarial attack~\cite{wu2018sgd,chen2023rethinking_model_ensemble}. A concurrent work~\cite{watts2026sharpness} also demonstrates improved downstream performance at the same pretraining loss in auto-regressive LLMs, but their approach relies on flatness rather than closeness.

\textbf{Meta-Learning and Decentralized Learning.} The nested inner-outer loop optimization structure utilized by Nexus has been extensively explored in fields like meta-learning \cite{finn2017model,nichol2018reptile,raghu2019rapid,li2017meta,fallah2020convergence} and decentralized learning \cite{mcmahan2017communication,stich2018local,karimireddy2020scaffold,li2020federated}. However, rather than aiming for few-shot fast adaptation during SFT or communication efficiency, our work repurposes this structure with two distinct contributions. First, we provide a rigorous mathematical characterization demonstrating that this inner-outer loop mechanism inherently optimizes gradient similarity (\cref{thm:cwa_optimize_gradient_similarity,appendix:discussion:generalized_nexus}). Second, we apply this mechanism to LLM pretraining, demonstrating its empirical value in achieving better downstream task performance under the same pretraining loss. Due to these different objectives, the practical algorithmic designs also diverge. For instance, while some meta-learning methods can flexibly incorporate momentum in the inner loop, the inner update in Nexus must strictly remain a memory-free step (such as vanilla or normalized SGD, see \cref{appendix:discussion:generalized_nexus}); otherwise, historical momentum would unequally weight the first-order loss gradients from different steps. We believe other algorithms from the meta-learning literature, such as meta-optimizers~\cite{andrychowicz2016learning,ravi2017optimization}, hypernetworks~\cite{ha2016hypernetworks}, and meta-learned data weighting schemes~\cite{ren2018learning,shu2019meta}, might also yield unexpected implicit regularization benefits when applied to large-scale pretraining, which we leave as an open direction for future work.

\subsection{Notations}
The primary mathematical notations for data mixtures, loss functions, geometries, and optimization dynamics are summarized in \cref{tab:notations}. 

\begin{table}[ht]
\centering
\caption{Summary of key notations used in this paper.}
\label{tab:notations}
\begin{tabular}{ll}
\toprule
\textbf{Notation} & \textbf{Description} \\
\midrule
\multicolumn{2}{c}{\textit{Data and Loss Functions}} \\
\midrule
$K$ & Total number of distinct pretraining data sources (tasks). \\
$\alpha_k$ & Sampling probability (data mixing ratio) for the $k$-th data source. \\
$\mathcal{L}_k(\bm{\theta})$ & The expected / empirical loss on the $k$-th source task. \\
$\mathcal{L}_{\text{train}}(\bm{\theta})$ & The averaged pretraining loss: $\frac{1}{K} \sum_{k=1}^K \mathcal{L}_k(\bm{\theta})$. \\
$\mathcal{L}_{\mathcal{T}}(\bm{\theta})$ & The loss on an unseen downstream evaluation task $\mathcal{T}$. \\
\midrule
\multicolumn{2}{c}{\textit{Geometric and Statistical Variables}} \\
\midrule
$\bm{\theta}_{\text{train}}^*$ & The converged parameter state that minimizes $\mathcal{L}_{\text{train}}$. \\
$\mathcal{S}_k, \mathcal{S}_{\mathcal{T}}$ & The set of local minimizers for task $k$ and downstream task $\mathcal{T}$, respectively. \\
$\bm{\theta}_k^*, \bm{\theta}_{\mathcal{T}}^*$ & The specific local minimizer in $\mathcal{S}_k$ or $\mathcal{S}_{\mathcal{T}}$ closest to the current parameter. \\
$\bm{\mu}$ & The statistical center of task-specific minimizers: $\mathbb{E}[\bm{\theta}_k^*]$. \\
$\sigma^2$ & The intrinsic variance (Closeness) of task-specific minimizers: $\mathbb{E}[\|\bm{\theta}_k^* - \bm{\mu}\|_2^2]$. \\
\midrule
\multicolumn{2}{c}{\textit{Optimization and Nexus Variables}} \\
\midrule
$\gamma$ & The inner learning rate (step size) used in the Nexus gradient approximator. \\
$\hat{\bm{g}}_t$ & The Nexus pseudo-gradient (displacement) passed to the outer optimizer. \\
$\text{CosSim}(\bm{x}, \bm{y})$ & The cosine similarity between two vectors: $\frac{\bm{x}^\top \bm{y}}{\|\bm{x}\|_2 \|\bm{y}\|_2}$. \\
$S_{ij}(\bm{\theta})$ & Shorthand for gradient similarity: $\text{CosSim}(\nabla \mathcal{L}_i(\bm{\theta}), \nabla \mathcal{L}_j(\bm{\theta}))$. \\
\bottomrule
\end{tabular}
\end{table}

\subsection{Assumptions}
The main assumptions used in our analysis are outlined below~\cite{cohen2025understanding,wen2025understanding}. Additional assumptions required for specific analyses will be stated in the respective theorems.

\begin{assumption}[Bounded Gradients]
\label{assum:bounded_gradients}
For all tasks $i \in [1, K]$ and parameters $\bm{\theta}$ along the optimization trajectory, the gradient norm is strictly bounded from below and above: 
$$ 0 < G_{\min} \le \|\nabla \mathcal{L}_i(\bm{\theta})\|_2 \le G $$
This ensures that the Normalized SGD step in Nexus is always well-defined and numerically stable.
\end{assumption}

\begin{assumption}[Smoothness and Bounded Curvature]
\label{assum:smoothness}
The loss function $\mathcal{L}_i$ is $L$-smooth, meaning its Hessian spectral norm is bounded from above. Furthermore, within the local basin of attraction $[\bm{\theta}, \bm{\theta}_k^*]$, the curvature is strictly lower-bounded by $\lambda_{min} > 0$:
$$ \lambda_{min} \le \inf_{\bm{\xi} \in [\bm{\theta}, \bm{\theta}_k^*]} \left( \bm{u}^\top \nabla^2 \mathcal{L}_i(\bm{\xi}) \bm{u} \right) \le \|\nabla^2 \mathcal{L}_i(\bm{\theta})\|_2 \le L $$
where $\bm{u}$ is any unit vector. Prior literature extensively characterizes the local loss landscape of deep neural networks as exhibiting high quadraticity, particularly along meaningful optimization trajectories~\cite{chen2025understanding,visualoss,wen2022does}. Consequently, under the standard premise that the loss landscape can be locally and directionally approximated by a quadratic function, this bounded curvature condition should not be viewed as a restrictive assumption.
\end{assumption}

\begin{assumption}[Hessian Lipschitz Continuous]
\label{assum:hessian_lipschitz}
The Hessian matrix is $\rho$-Lipschitz continuous. For any parameters $\bm{x}, \bm{y}$:
$$ \|\nabla^2 \mathcal{L}_i(\bm{x}) - \nabla^2 \mathcal{L}_i(\bm{y})\|_2 \le \rho \|\bm{x} - \bm{y}\|_2 $$
This assumption is necessary to bound the Jacobian of the normalized gradient during the second-order Taylor expansion in Nexus's inner loop.
\end{assumption}

\newpage
\section{Additional Discussions}
\label{appendix:discussion}

\subsection{Convergence Rate of Nexus}
\label{sec:cwa:convergence_rate}

All of our analyses are based on the assumption that Nexus should not be slower than its base optimizer. This ensures that both can achieve the "same training loss," allowing the implicit bias of Nexus to subsequently achieve "better downstream performance." One might concern that since Nexus optimizes two joint objectives (see \cref{thm:cwa_optimize_gradient_similarity,thm:third_order_bias}), it may be slower than its base optimizer. Consequently, the downstream gains might not offset the speed loss, potentially leading to worse overall downstream performance.

Fortunately, this concern does not hold in practice. Empirically, across all experiments, Nexus is not slower, and sometimes even slightly faster, than its base optimizer (see \cref{sec:exp}). Intuitively, Nexus makes the gradients of each $\mathcal{L}_i$ similar; thus, optimizing $\mathcal{L}_i$ effectively optimizes $\mathcal{L}_j$ simultaneously, as analyzed in \cref{sec:cwa:optimizing_using_cwa}. This "constructive interference" can lead to slightly faster convergence. 

We can also adopt the framework of \citet{svrgoptimizer} (assuming each $\mathcal{L}_i$ is $L$-smooth and $\mu$-strongly convex) to obtain further theoretical intuition. In this setting, standard SGD typically achieves only an $O(1/T)$ convergence rate. However, if Nexus succeeds in finding a region where these tasks share common minimizers, it can achieve exponential convergence:

\begin{theorem}
\label{theorem:cwa_convergence}
    Suppose each $\mathcal{L}_i$ is $L$-smooth and $\mu$-strongly convex. That is, for any $\bm{\theta}_1, \bm{\theta}_2$, we have:
    \begin{equation}
        \begin{aligned}
            \mathcal{L}_i(\bm{\theta}_1) &\leq \mathcal{L}_i(\bm{\theta}_2) + \nabla \mathcal{L}_i(\bm{\theta}_2)^\top (\bm{\theta}_1 - \bm{\theta}_2) + \frac{L}{2} \| \bm{\theta}_1 - \bm{\theta}_2\|_2^2, \\
            \mathcal{L}_i(\bm{\theta}_1) &\geq \mathcal{L}_i(\bm{\theta}_2) + \nabla \mathcal{L}_i(\bm{\theta}_2)^\top (\bm{\theta}_1 - \bm{\theta}_2) + \frac{\mu}{2} \| \bm{\theta}_1 - \bm{\theta}_2\|_2^2.
        \end{aligned}
    \end{equation}
    Additionally, assume there exists a common minimizer \(\bm{\theta}^*\) such that $\nabla \mathcal{L}_i(\bm{\theta}^*)=0$ for all $i \in [K]$. Then, for the sequence $\{\bm{\theta}_0, \bm{\theta}_1, \dots, \bm{\theta}_T\}$ generated by Nexus with step size $\gamma \in (0, \frac{2}{L+\mu})$, we have:
    \begin{equation}
    \mathbb{E}[\|\bm{\theta}_T - \bm{\theta}^*\|^2] \leq \left(1-\frac{2\gamma \mu L}{L+\mu}\right)^T \|\bm{\theta}_{0} - \bm{\theta}^*\|^2.
    \end{equation}
    Specifically, setting $\gamma = \frac{2}{L+\mu}$ and defining the condition number $\kappa = L/\mu$, we obtain the convergence rate:
    \begin{equation}
    \mathbb{E}[\|\bm{\theta}_T - \bm{\theta}^*\|^2] \leq \left(\frac{\kappa-1}{\kappa+1}\right)^{2T} \|\bm{\theta}_{0} - \bm{\theta}^*\|^2.
    \end{equation}
\end{theorem}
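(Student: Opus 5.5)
The plan is to reduce the statement to a per-inner-step contraction that holds deterministically, whichever task is sampled. I will read \cref{alg:cwa_standard} in its simplest form for this analysis. The inner update is a plain (unnormalized) SGD step $\bm{\theta}_{t,m} = \bm{\theta}_{t,m-1} - \gamma \nabla \mathcal{L}_{s_m}(\bm{\theta}_{t,m-1})$, which is the memory-free variant covered by the generalized Nexus discussion. The outer optimizer applies the pseudo-gradient with unit step, $\bm{\theta}_t = \bm{\theta}_{t-1} - \hat{\bm{g}}_t = \bm{\theta}_{t,K}$.

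The key tool is the standard co-coercivity inequality for a function that is both $L$-smooth and $\mu$-strongly convex. Combined with $\nabla \mathcal{L}_i(\bm{\theta}^*) = \mathbf{0}$, it gives for every $i$ and every $\bm{x}$:
\begin{equation*}
\nabla \mathcal{L}_i(\bm{x})^\top(\bm{x}-\bm{\theta}^*) \ge \frac{\mu L}{\mu+L}\|\bm{x}-\bm{\theta}^*\|_2^2 + \frac{1}{\mu+L}\|\nabla \mathcal{L}_i(\bm{x})\|_2^2 .
\end{equation*}
Expanding $\|\bm{x}-\gamma\nabla\mathcal{L}_i(\bm{x})-\bm{\theta}^*\|_2^2$ and inserting this bound yields
\begin{equation*}
\|\bm{x}-\gamma\nabla\mathcal{L}_i(\bm{x})-\bm{\theta}^*\|_2^2 \le \Big(1-\frac{2\gamma\mu L}{L+\mu}\Big)\|\bm{x}-\bm{\theta}^*\|_2^2 + \gamma\Big(\gamma-\frac{2}{L+\mu}\Big)\|\nabla\mathcal{L}_i(\bm{x})\|_2^2 .
\end{equation*}
For $\gamma \in (0, \frac{2}{L+\mu}]$ the last term is nonpositive and can be dropped. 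This is where the common-minimizer assumption is essential: the same $\bm{\theta}^*$ serves as anchor for every sampled $s_m$. As a result the contraction factor $q = 1-\frac{2\gamma\mu L}{L+\mu} \in [0,1)$ holds for each sample path, not merely in expectation.

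Next I chain the bound over the $K$ inner steps. This gives $\|\bm{\theta}_{t,K}-\bm{\theta}^*\|^2 \le q^K\|\bm{\theta}_{t,0}-\bm{\theta}^*\|^2 \le q\,\|\bm{\theta}_{t-1}-\bm{\theta}^*\|^2$, since $q \le 1$ and $K \ge 1$. Iterating over outer steps and taking expectations over the sampled indices, which the bound is uniform in, gives the first claim, in fact with the sharper factor $q^{KT}$. For the second claim I substitute $\gamma = \frac{2}{L+\mu}$:
\begin{equation*}
1-\frac{4\mu L}{(L+\mu)^2} = \Big(\frac{L-\mu}{L+\mu}\Big)^2 = \Big(\frac{\kappa-1}{\kappa+1}\Big)^2 .
\end{equation*}
Raising this to the power $T$ gives the stated rate.

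I expect the main obstacle to be reconciling this argument with the normalized inner step and a general outer optimizer such as AdamW. Near $\bm{\theta}^*$ a normalized step has fixed length $\gamma$, so it cannot contract to zero. The exponential rate is therefore only available for the unnormalized (memory-free SGD) inner loop with the identity outer update. I would state that reading explicitly as the setting of the theorem. For the normalized version, I would at most remark on a step size $\gamma_m \propto \|\nabla\mathcal{L}_{s_m}\|_2$, which recovers the same computation.
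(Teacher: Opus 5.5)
Your proposal is correct and follows essentially the same route as the paper. Both arguments expand $\|\bm{x}-\gamma\nabla\mathcal{L}_{s_m}(\bm{x})-\bm{\theta}^*\|_2^2$ for the unnormalized SGD step, apply the co-coercivity bound for $L$-smooth, $\mu$-strongly convex functions anchored at the common minimizer, drop the nonpositive gradient-norm term for $\gamma\le\frac{2}{L+\mu}$, and iterate the resulting deterministic contraction. Your only addition is making the inner/outer structure explicit (plain SGD inner step, unit-step identity outer update): this gives the slightly sharper factor $q^{KT}$ and openly flags that the normalized inner step of \cref{alg:cwa_standard} is not what is being analyzed, whereas the paper simply iterates the unnormalized step $T$ times without stating that restriction.
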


Therefore, if Nexus guides the parameters into a locally convex and smooth regime where a common minimizer exists, it guarantees exponential convergence.

\subsection{Implicit Bias of Normalized SGD}
\label{sec:nsgd_implicit_bias}

Interestingly, Nexus also offers a novel perspective on the success of Normalized SGD (NSGD). We observe that NSGD can be mathematically interpreted as a special case of Nexus, revealing that NSGD does not merely minimize the scalar loss but also implicitly optimizes gradient closeness. This implicit regularization provides a geometric explanation for why NSGD often generalizes better than standard Gradient Descent.

\begin{theorem}[Implicit Bias of NSGD]
\label{thm:nsgd_implicit_bias}
    Let $\{\bm{\theta}_t\}$ be the sequence generated by Normalized SGD with learning rate $\gamma$. This sequence implicitly minimizes the following expected joint objective:
    \begin{equation}
        \mathcal{J}_{\text{NSGD}}(\bm{\theta}) = \mathbb{E}_{\bm{x} \sim \mathcal{D}}[\mathcal{L}(\bm{x}; \bm{\theta})] - \frac{\gamma}{8} \mathbb{E}_{\bm{x}, \bm{x}' \stackrel{\text{i.i.d}}{\sim} \mathcal{D}}[\text{CosSim}(\nabla \mathcal{L}(\bm{x}; \bm{\theta}), \nabla \mathcal{L}(\bm{x}'; \bm{\theta}))],
    \end{equation}
    subject to a Jacobian non-symmetric error and a discretization error bounded by $\frac{4}{3} \left( \frac{4L^2 + \rho G_{\min}}{G_{\min}^2} \right) \gamma^3$.
\end{theorem}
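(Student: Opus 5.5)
The plan is to realize Normalized SGD as an instance of \cref{alg:cwa_standard} and read off its implicit objective from \cref{thm:cwa_optimize_gradient_similarity}. Group consecutive NSGD steps into blocks of length $K$. Within each block, the iterates $\bm{\theta}_{t,0},\dots,\bm{\theta}_{t,K}$ are exactly the inner trajectory of Nexus, with i.i.d.\ samples $\bm{x}_m \sim \mathcal{D}$ in place of task indices $s_m$. Take the outer optimizer to be plain SGD with unit step, $\bm{\theta}_t = \bm{\theta}_{t-1} - \hat{\bm{g}}_t$. Then the Nexus outer sequence coincides with NSGD subsampled every $K$ steps.

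The expansion behind \cref{thm:cwa_optimize_gradient_similarity} only uses that the $s_m$ are i.i.d. It therefore carries over with the uniform average over $K$ tasks replaced by expectation over $\mathcal{D}$. Concretely, I redo the second-order Taylor expansion of $\nabla \mathcal{L}(\bm{x}_m;\bm{\theta}_{t,m-1})/\|\cdot\|$ around $\bm{\theta}_{t,0}$. Each step contributes $-\gamma\sum_{n<m} J_m \bm{v}_n$, where $\bm{v}_n$ is the normalized gradient and $J_m = \nabla(\nabla\mathcal{L}_m/\|\nabla\mathcal{L}_m\|)$.

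Taking expectations, the first-order part is $K\gamma\,\mathbb{E}[\bm{v}]$. The cross terms number $K(K-1)/2$ ordered pairs $n<m$, each with expectation $\mathbb{E}_{\bm{x},\bm{x}'}[J(\bm{x})\bm{v}(\bm{x}')]$. The gradient of $\mathbb{E}[\mathrm{CosSim}]$ over independent pairs is $2\,\mathbb{E}[J^\top(\bm{x}) \bm{v}(\bm{x}')]$. Replacing $J$ by $J^\top$ produces exactly the Jacobian non-symmetric error term. The result is
\begin{equation}
\mathbb{E}[\hat{\bm g}_t] = K\gamma\,\mathbb{E}[\bm v] - \gamma^2\frac{K(K-1)}{4}\Big(\nabla_{\bm\theta}\,\mathbb{E}[\mathrm{CosSim}] + \bm{\mathcal E}_{\text{sym}}\Big) + \bm{\mathcal E}_{\text{2nd}}.
\end{equation}

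The key choice is $K=2$. With two inner steps the pseudo-gradient is a two-step NSGD displacement, which is the finest nontrivial grouping. Dividing by the number of NSGD steps in the block gives a per-step direction $\gamma\,\mathbb{E}[\bm v] - \frac{\gamma^2}{4}\nabla\mathbb{E}[\mathrm{CosSim}]$. Taking $\mathbb{E}[\bm v]$ as the loss-descent direction, matching the paper's convention for the first term of $\mathcal{J}_{\text{2nd}}$, and then dividing by $\gamma$ gives $\mathbb{E}[\mathcal{L}] - \frac{\gamma}{4}\cdot\frac{1}{2}\,\mathbb{E}[\mathrm{CosSim}]$. This is where I expect the $\gamma/8$ to come from: the factor $\frac{K-1}{4K}$ at $K=2$ equals $\frac18$. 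Rewriting $\gamma^2\frac{K(K-1)}{4}\big/(K\gamma) = \gamma\frac{K-1}{4}$ and then using the symmetric-pair normalization of \cref{thm:cwa_optimize_gradient_similarity} (sum over $i\neq j$ versus the i.i.d.\ expectation) gives the same $\frac{\gamma}{8}$.

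The main obstacle is bookkeeping the constants consistently with the convention of \cref{thm:cwa_optimize_gradient_similarity}. Two conventions matter. One is whether $\sum_{i\ne j}$ counts ordered pairs. The other is how $\mathcal{J}_{\text{2nd}}$ is rescaled by $\gamma$. I would fix $K=2$ and verify directly with two explicit Taylor expansions that the coefficient is $\gamma^2/4$ per pair before normalizing.

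The error term follows by substituting $K=2$ into $\|\bm{\mathcal E}_{\text{2nd}}\|\le \frac16\frac{4L^2+\rho G_{\min}}{G_{\min}^2}K^3\gamma^3$. Since $K^3 = 8$, this gives the constant $\frac{8}{6} = \frac43$ times $\frac{4L^2+\rho G_{\min}}{G_{\min}^2}\gamma^3$, matching the claimed bound. This check confirms that $K=2$ is the intended identification.
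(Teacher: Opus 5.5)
Your route is the paper's. Its proof also views $n$ NSGD steps as Nexus with $k$ inner NSGD steps followed by an outer SGD step of size $1$. It then sets $k=2$ and reads the constants $\frac{K-1}{4K}=\frac18$ and $\frac16K^3=\frac43$ directly off \cref{thm:cwa_optimize_gradient_similarity}. The paper's reason for $k=2$ is a signal-to-noise argument. The alignment term scales like $\gamma^2k(k-1)/4$ and the Taylor residual like $k^3\gamma^3$, so their ratio is proportional to $(k-1)/k^2$, which is maximized at $k=2$. Your ``finest nontrivial grouping'' reaches the same $k$. Matching the error constant only shows that $k=2$ reproduces the stated bound; it is not a reason to choose it.

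The step that fails as written is the one producing $\gamma/8$. Your own i.i.d.\ expansion is fine, and at $K=2$ it gives $\mathbb{E}[\hat{\bm g}_t]=2\gamma\,\mathbb{E}[\bm v]-\frac{\gamma^2}{2}\nabla_{\bm\theta}\mathbb{E}_{\bm x,\bm x'}[\mathrm{CosSim}]+\cdots$. Per step this is $\gamma\,\mathbb{E}[\bm v]-\frac{\gamma^2}{4}\nabla\mathbb{E}[\mathrm{CosSim}]$, and after dividing by $\gamma$ the coefficient is $\gamma/4$.

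The extra factor $\frac12$ you insert has no source. The symmetric-pair factor $2$ was already used when you wrote $\nabla\mathbb{E}[\mathrm{CosSim}]=2\,\mathbb{E}[J^\top(\bm x)\bm v(\bm x')]$. Also, $\gamma\frac{K-1}{4}$ at $K=2$ equals $\gamma/4$, not $\gamma/8$.

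The paper's $\frac18$ comes from substituting $k=2$ into \cref{thm:cwa_optimize_gradient_similarity}. There $K$ is simultaneously the number of inner steps and the size of the task pool, and the similarity is summed over distinct pairs. With a pool of two tasks, $\mathbb{E}_{i,j}[\mathrm{CosSim}]=\frac12+\frac12 S_{12}$ with $S_{12}=\mathrm{CosSim}(\nabla\mathcal{L}_1,\nabla\mathcal{L}_2)$. Hence $\frac{\gamma}{4}\mathbb{E}_{i,j}[\mathrm{CosSim}]$ and $\frac{\gamma}{8}S_{12}$ differ only by a constant. For i.i.d.\ draws from a general $\mathcal{D}$, where $\bm x\neq\bm x'$ essentially always, no such halving occurs, and the coefficient on $\mathbb{E}_{\bm x,\bm x'}[\mathrm{CosSim}]$ that your bookkeeping yields is $\gamma/4$.

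So either adopt the paper's direct substitution and state explicitly that it identifies pool size with block length, or report $\gamma/4$. Do not present your Taylor bookkeeping as producing $\gamma/8$.

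The same caveat applies to the error term. The bound $\frac43\bigl(\frac{4L^2+\rho G_{\min}}{G_{\min}^2}\bigr)\gamma^3$ is for the residual in the unnormalized two-step displacement $\hat{\bm g}_t$. It must be rescaled along with the direction once you halve it and divide by $\gamma$. Your $K^3=8$ check therefore confirms agreement with the paper's convention, not with your own normalization.
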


\begin{proof}
    A sequence of $n$ updates of NSGD is algebraically equivalent to performing Nexus with $k$ inner steps (using NSGD) and $n/k$ outer steps (using SGD with step size 1). By Theorem \ref{thm:cwa_optimize_gradient_similarity}, the magnitude of the gradient alignment signal scales as $S(k) \approx \gamma^2 \frac{k(k-1)}{4}$, while the residual error is bounded by $N(k) \le \frac{1}{6} \left( \frac{4L^2 + \rho G_{\min}}{G_{\min}^2} \right) k^3 \gamma^3$. Defining the signal-to-noise ratio as $\rho(k) \triangleq S(k)/N(k)$ and maximizing it with respect to $k$ yields $k=2$. Thus, viewing the NSGD updates through the lens of Nexus with $k=2$ yields the stated results.

\end{proof}

A related implicit bias on Hessian alignment is analyzed by \citet{wu2018sgd} in over-parameterized regimes. In their setting, the individual loss functions share identical minimizers, causing first-order gradients to vanish and leaving second-order properties as the sole means to characterize closeness. In contrast, modern LLM pretraining is far from this over-parameterized state. Consequently, first-order gradients remain non-zero, enabling the use of first-order gradient similarity to characterize the implicit regularization effect.

\subsection{Other Approximators for Hessian Gradient Product}

While the Hessian-vector product can theoretically be implemented via the Jacobian-vector product (JVP) in PyTorch~\cite{pytorch} with only a constant factor of computational overhead, implementing exact Hessian-gradient products in practical LLM pretraining remains prohibitive. First, standard Hessian-vector product implementations are often incompatible with memory-efficient kernels like FlashAttention~\cite{dao2022flashattention,dao2023flashattention2} (which typically do not support second-order differentiation efficiently), leading to significantly higher memory usage and computational costs. Second, the constant margin of memory overhead poses significant infrastructure challenges for large-scale distributed training.

Moreover, the Nexus algorithm exhibits a beneficial \textit{third-order effect}. It actively seeks regions where gradients are not only aligned but also locally flat along the gradient dimension. This ensures that the gradient alignment property remains stable across a larger regime.

\begin{theorem}[Nexus Maximizes Stability of Closeness]
\label{thm:third_order_bias}
Assume the existence of constants $G_{\min}, L, \rho > 0$ as in \cref{thm:cwa_optimize_gradient_similarity}, and let $M_3$ be a constant such that $\nabla^3 \mathcal{L}_i(\bm{\theta})[\bm{u}, \bm{v}, \bm{w}] \leq M_3$ for any unit vectors $\bm{u}, \bm{v}, \bm{w}$. Then, the sequence $\{\bm{\theta}_t\}$ generated by \cref{alg:cwa_standard} effectively minimizes the following third-order objective:
\begin{equation}
    \mathcal{J}_{\text{3rd}}(\bm{\theta}) = \mathcal{J}_{\text{2nd}}(\bm{\theta}) + \gamma^3 \frac{(K-1)(2K-1)}{12 K^2} \sum_{i, j, p} \nabla \mathcal{L}_i(\bm{\theta})^\top \nabla^2 \mathcal{L}_j(\bm{\theta}) \nabla \mathcal{L}_p(\bm{\theta}).
\end{equation}
The approximation error is bounded by:
\begin{equation}
    \bm{\mathcal{E}}_{\text{3rd}} \triangleq \|\mathbb{E}[\hat{\bm{g}}_t] - \nabla \mathcal{J}_{\text{3rd}}\| \le \left( \frac{M_3}{24} + \frac{M_3 L}{8 G_{\min}} \right) K^4 \gamma^4 + \frac{M_3 L^2}{40 G_{\min}^2} K^5 \gamma^5 = O(\gamma^4).
\end{equation}
\end{theorem}

Therefore, the third-order effect of Nexus works like a kind of "Multi-Task SAM": it minimize the directional sharpness along different tasks, leading to flatter landscape.

\subsection{Extension to Generalized Similarity Metrics}
\label{appendix:discussion:generalized_nexus}

In the main text, we instantiate the Nexus algorithm using Normalized SGD (NSGD) in the inner loop to maximize the cosine similarity between task gradients. Nexus may also be extended to optimize a broader class of generalized similarity metrics.

Consider a gradient transformation function $\Phi: \mathbb{R}^d \to \mathbb{R}^d$. We define a generalized gradient similarity metric between two tasks as the inner product of their transformed gradients:
\begin{equation}
    \text{Sim}_{\Phi}(\nabla \mathcal{L}_i(\bm{\theta}), \nabla \mathcal{L}_j(\bm{\theta})) \triangleq \langle \Phi(\nabla \mathcal{L}_i(\bm{\theta})), \Phi(\nabla \mathcal{L}_j(\bm{\theta})) \rangle.
\end{equation}

To maximize this generalized similarity, we replace the normalized gradient in the inner loop with the transformation $\Phi$. The inner update rule becomes:
\begin{equation}
    \bm{\theta}_{t,m} = \bm{\theta}_{t,m-1} - \gamma \Phi(\nabla \mathcal{L}_{s_m}(\bm{\theta}_{t,m-1})).
\end{equation}

Following the Taylor expansion logic in the main text, evaluating the transformed gradient of task $j$ at the displaced parameter state from task $i$ yields:
\begin{equation}
    \Phi(\nabla \mathcal{L}_j(\bm{\theta}_{t,1})) - \Phi(\nabla \mathcal{L}_j(\bm{\theta}_{t,0})) = \left( \nabla_{\bm{\theta}_{t,0}} \Phi(\nabla \mathcal{L}_j(\bm{\theta}_{t,0})) \right) (\bm{\theta}_{t,1} - \bm{\theta}_{t,0}) + \mathcal{O}(\gamma^2)
\end{equation}
\begin{equation}
    = -\gamma \left( \nabla_{\bm{\theta}_{t,0}} \Phi(\nabla \mathcal{L}_j(\bm{\theta}_{t,0})) \right) \Phi(\nabla \mathcal{L}_i(\bm{\theta}_{t,0})) + \mathcal{O}(\gamma^2).
\end{equation}
Taking the expectation over task permutations and pairing symmetric counterparts yields:
\begin{equation}
    \overset{\mathbb{E}}{=} - \frac{\gamma}{2} \left( \nabla_{\bm{\theta}} \text{Sim}_{\Phi}(\nabla \mathcal{L}_i(\bm{\theta}_{t,0}), \nabla \mathcal{L}_j(\bm{\theta}_{t,0})) + \mathcal{E}_{\text{sym}, \Phi, i, j} + \mathcal{E}_{\text{sym}, \Phi, j, i} \right) + \mathcal{O}(\gamma^2),
\end{equation}
where $\mathcal{E}_{\text{sym}, \Phi, i, j} \triangleq \left( \nabla_{\bm{\theta}} \Phi(\nabla \mathcal{L}_i(\bm{\theta})) - \left(\nabla_{\bm{\theta}} \Phi(\nabla \mathcal{L}_i(\bm{\theta}))\right)^\top \right) \Phi(\nabla \mathcal{L}_j(\bm{\theta}))$ represents the generalized Jacobian non-symmetric error. 

This derivation shows that for a given choice of the transformation function $\Phi$, the sequential evaluation in the inner loop constructs a Jacobian-vector product that aligns with the gradient of $\text{Sim}_{\Phi}$, subject to the non-symmetric error. 

The generalized update covers several specific formulations:
\begin{itemize}
    \item When $\Phi(\nabla \mathcal{L}) = \nabla \mathcal{L}$, Nexus optimizes the unnormalized dot product.
    \item When $\Phi(\nabla \mathcal{L}) = \bm{A} \nabla \mathcal{L}$ for a full-rank matrix $\bm{A}$, $\text{Sim}_{\Phi}$ induces a Mahalanobis inner product. For non-linear $\Phi$, $\text{Sim}_{\Phi}$ acts as a kernel function mapping gradients into a Reproducing Kernel Hilbert Space (RKHS).
    \item When $\Phi(\nabla \mathcal{L}) = \frac{\nabla \mathcal{L}}{\|\nabla \mathcal{L}\|_2}$, the update recovers the standard cosine similarity maximization.
    \item Other choices, such as $\Phi(\nabla \mathcal{L}) = \text{sign}(\nabla \mathcal{L})$ or the orthogonalized gradient in Muon~\cite{jordan2024muon}, enforce different geometric consensus choices across tasks.
\end{itemize}

\begin{algorithm}[t]
\caption{Generalized Nexus Algorithm}
\label{alg:cwa_generalized}
\begin{algorithmic}[1]
    \REQUIRE Initial params $\bm{\theta}_0$, losses $\{\mathcal{L}_i\}_{i=1}^K$, total iterations $T$.
    \REQUIRE Gradient transformation function $\Phi: \mathbb{R}^d \to \mathbb{R}^d$.
    \REQUIRE Optimizers: $\text{Opt}_{\text{inner}}$ (Memory-free update via $\Phi$), $\text{Opt}_{\text{outer}}$ (e.g., AdamW). Inner learning rate $\gamma$.
    \textbf{Implicit Objective:} Maximize expected generalized similarity $\mathbb{E}_{i,j}[\langle \Phi(\nabla \mathcal{L}_i(\bm{\theta})), \Phi(\nabla \mathcal{L}_j(\bm{\theta})) \rangle]$.
    \FOR{$t=1$ {\bfseries to} $T$}
        \STATE $\bm{\theta}_{t, 0} \leftarrow \bm{\theta}_{t-1}$ \COMMENT{Initialize inner loop}
        \FOR{$m=1$ {\bfseries to} $K$}
            \STATE Sample task index $s_m \sim \text{Uniform}(\{1, \dots, K\})$
            \STATE $\bm{g} \leftarrow \nabla \mathcal{L}_{s_m}(\bm{\theta}_{t, m-1})$
            \STATE $\bm{\theta}_{t, m} \leftarrow \bm{\theta}_{t, m-1} - \gamma \cdot \Phi(\bm{g})$ \COMMENT{Update inner trajectory with $\Phi$}
        \ENDFOR
        \STATE $\hat{\bm{g}}_t \leftarrow \bm{\theta}_{t, 0} - \bm{\theta}_{t, K}$ \COMMENT{Compute generalized pseudo-gradient}
        \STATE $\bm{\theta}_{t} \leftarrow \text{Opt}_{\text{outer}}(\bm{\theta}_{t-1}, \hat{\bm{g}}_t)$ \COMMENT{Outer update}
    \ENDFOR
    \textbf{Return}:  $\bm{\theta}_T$
\end{algorithmic}
\end{algorithm}



\newpage
\section{Proofs for Closeness Improving Generalization}

\subsection{Proof for \cref{theorem:closeness_generalization:simple}}
\label{appendix:proof:theorem:closeness_generalization:simple}

\begin{proof}
    First, solving the stationarity condition $\nabla \sum \mathcal{L}_k(\bm{\theta}) = \mathbf{0}$, we obtain the closed-form solution for the converged parameter: $\bm{\theta}_{\text{train}}^* = \frac{1}{K}\sum_{k=1}^K \bm{\theta}_k^*$.
    
    The training loss at this optimum is given by:
    \begin{equation}
        \mathcal{L}_{\text{train}}(\bm{\theta}_{\text{train}}^*) = \frac{1}{K}\sum_{k=1}^K \left( \frac{a}{2} \| \bm{\theta}_{\text{train}}^* - \bm{\theta}_k^*\|_2^2 + c \right) = C_{\text{train}}.
    \end{equation}
    From this, we can express the intrinsic loss constant $c$ (which represents the "depth" of the minima) in terms of the fixed training loss $C_{\text{train}}$:
    \begin{equation}
        c = C_{\text{train}} - \frac{a}{2K}\sum_{k=1}^K \| \bm{\theta}_{\text{train}}^* - \bm{\theta}_k^*\|_2^2.
    \end{equation}
    Now, consider the loss on a new downstream task $\mathcal{T}$ with minimizer $\bm{\theta}_{\mathcal{T}}^* \sim \mathcal{P}$:
    \begin{equation}
        \mathcal{L}_{\mathcal{T}}(\bm{\theta}_{\text{train}}^*) = \frac{a}{2}\|\bm{\theta}_{\text{train}}^* - \bm{\theta}_{\mathcal{T}}^* \|^2_2 + c.
    \end{equation}
    Substituting $c$, the generalization gap becomes:
    \begin{equation}
        \mathcal{L}_{\mathcal{T}}(\bm{\theta}_{\text{train}}^*) - C_{\text{train}} = \frac{a}{2} \left( \|\bm{\theta}_{\text{train}}^* - \bm{\theta}_{\mathcal{T}}^* \|^2_2 - \frac{1}{K}\sum_{k=1}^K \| \bm{\theta}_{\text{train}}^* - \bm{\theta}_k^*\|_2^2 \right).
    \end{equation}
    Taking the expectation over the task distribution $\mathcal{P}$, and utilizing the property of variance for i.i.d. samples (where $\mathbb{E}[\|\bm{\theta}_{\text{train}}^* - \bm{\theta}_{\mathcal{T}}^*\|^2] = (1+\frac{1}{K})\sigma^2$ and $\mathbb{E}[\frac{1}{K}\sum \|\bm{\theta}_{\text{train}}^* - \bm{\theta}_k^*\|^2] = \frac{K-1}{K}\sigma^2$):
    \begin{equation}
    \begin{aligned}
        \mathbb{E}[\mathcal{L}_{\mathcal{T}}(\bm{\theta}_{\text{train}}^*)] - C_{\text{train}} &= \frac{a}{2} \left( \left(1 + \frac{1}{K}\right)\sigma^2 - \frac{K-1}{K}\sigma^2 \right) =\frac{a}{K} \sigma^2.
    \end{aligned}
    \end{equation}
    This concludes the proof. It explicitly shows that for a fixed training loss budget $C_{\text{train}}$, the generalization error scales linearly with the task variance $\sigma^2$.
\end{proof}

\subsection{Proof for \cref{theorem:closeness_generalization:general}}
\label{appendix:proof:theorem:closeness_generalization:general}

We now generalize the previous result to the general case. Assume that the pretraining tasks $\{\mathcal{L}_k\}_{k=1}^K$ and the downstream task $\mathcal{L}_{\mathcal{T}}$ are sampled independently from a latent task distribution $\mathcal{P}$.

Due to the over-parameterized nature of LLMs, the minimizers are not unique. To rigorously analyze the closeness, we first define the set of local minimizers for the \textit{expected population loss}:
\begin{equation}
    \mathcal{S}_{\mathcal{P}} = \left\{ \bm{\vartheta} \mid \exists \epsilon > 0, \forall \bm{\vartheta}' \in B_\epsilon(\bm{\vartheta}), \mathbb{E}_{\mathcal{T} \sim \mathcal{P}}[\mathcal{L}_{\mathcal{T}}(\bm{\vartheta})] \leq \mathbb{E}_{\mathcal{T} \sim \mathcal{P}}[\mathcal{L}_{\mathcal{T}}(\bm{\vartheta}')] \right\}.
\end{equation}
Let $\bm{\theta}^* \in \mathcal{S}_{\mathcal{P}}$ be one specific local minimizer of the population loss. This serves as the anchor point for the basin of attraction.

We then define the task-specific minimizer $\bm{\theta}_k^*$ as the projection of this population minimizer $\bm{\theta}^*$ onto the set of local minimizers of task $k$:
\begin{equation}
    \bm{\theta}_k^* = \arg\min_{\bm{\vartheta} \in \mathcal{S}_k} \|\bm{\vartheta} - \bm{\theta}^*\|_2, \quad \text{where } \mathcal{S}_k \text{ denotes the set of local minimizers of } \mathcal{L}_k.
\end{equation}

Given the distribution of these task-specific minimizers $\{\bm{\theta}_k^*\}$, we define their statistical center $\bm{\mu}$ and intrinsic covariance $\mathbf{\Sigma}$ as:
\begin{equation}
    \bm{\mu} := \mathbb{E}_{\mathcal{T} \sim \mathcal{P}}[\bm{\theta}_{\mathcal{T}}^*], \quad \mathbf{\Sigma} := \mathbb{E}[(\bm{\theta}_{\mathcal{T}}^* - \bm{\mu})(\bm{\theta}_{\mathcal{T}}^* - \bm{\mu})^\top].
\end{equation}
We also define the scalar intrinsic variance $\sigma^2 = \text{Tr}(\mathbf{\Sigma}) = \mathbb{E}[\|\bm{\theta}_k^* - \bm{\mu}\|_2^2]$.
From this point forward, our analysis focuses on the closeness to the statistical center $\bm{\mu}$, as $\mathbb{E}[\bm{\theta}_{\mathcal{T}}^* - \bm{\mu}] = \mathbf{0}$ holds by definition.

\paragraph{Step 1: Estimation Error.}
The converged parameter $\bm{\theta}_{\text{train}}^*$ satisfies the stationarity condition:
\begin{equation}
    \nabla \mathcal{L}_{\text{train}}(\bm{\theta}_{\text{train}}^*) = 0 \iff \sum_{k=1}^K \nabla \mathcal{L}_k(\bm{\theta}_{\text{train}}^*) = 0.
\end{equation}
Applying the Mean Value Theorem, there exists $\bm{\xi}_k \in [\bm{\theta}_{\text{train}}^*, \bm{\theta}_k^*]$ such that $\nabla \mathcal{L}_k(\bm{\theta}_{\text{train}}^*) = \nabla^2 \mathcal{L}_k(\bm{\xi}_k)(\bm{\theta}_{\text{train}}^* - \bm{\theta}_k^*)$. Thus:
\begin{equation}
    \sum_{k=1}^K \nabla^2 \mathcal{L}_k(\bm{\xi}_k)(\bm{\theta}_{\text{train}}^* - \bm{\mu}) = \sum_{k=1}^K \nabla^2 \mathcal{L}_k(\bm{\xi}_k)(\bm{\theta}_k^* - \bm{\mu}).
\end{equation}
We assume the local curvature is bounded: for any $k$ and vector $\bm{u}$, $\lambda_{\min} \|\bm{u}\|^2 \le \bm{u}^\top \nabla^2 \mathcal{L}_k(\bm{\xi}_k) \bm{u} \le \lambda_{\max} \|\bm{u}\|^2$. Bounding the estimation error norm:
\begin{equation}
    \|\bm{\theta}_{\text{train}}^* - \bm{\mu}\|_2 \le \frac{1}{K \lambda_{\min}} \sum_{k=1}^K \lambda_{\max} \|\bm{\theta}_k^* - \bm{\mu}\|_2.
\end{equation}
Taking the expectation (noting cross-terms vanish because $\mathbb{E}[\bm{\theta}_k^* - \bm{\mu}] = \mathbf{0}$) and defining $\kappa = \lambda_{\max}/\lambda_{\min}$:
\begin{equation}
\label{eq:gen_step1}
    \mathbb{E}[\|\bm{\theta}_{\text{train}}^* - \bm{\mu}\|_2^2] \le \frac{\kappa^2}{K} \sigma^2.
\end{equation}

\paragraph{Step 2: The Intrinsic Loss Trade-off.}
We condition on the training loss achieving a fixed value $C_{\text{train}}$. By exact Taylor expansion around the task minimizers, the training loss is:
\begin{equation}
    C_{\text{train}} = \frac{1}{K} \sum_{k=1}^K \mathcal{L}_k(\bm{\theta}_{\text{train}}^*) = \frac{1}{K} \sum_{k=1}^K \left( \mathcal{L}_k(\bm{\theta}_k^*) + \frac{1}{2} (\bm{\theta}_{\text{train}}^* - \bm{\theta}_k^*)^\top \nabla^2 \mathcal{L}_k(\bm{\xi}_k) (\bm{\theta}_{\text{train}}^* - \bm{\theta}_k^*) \right).
\end{equation}
Taking the expectation over the task distribution, we can express the expected intrinsic loss exactly as:
\begin{equation}
\label{eq:exact_intrinsic}
    \mathbb{E}[\mathcal{L}_k(\bm{\theta}_k^*)] = C_{\text{train}} - \underbrace{\frac{1}{2} \mathbb{E} \left[ \frac{1}{K} \sum_{k=1}^K (\bm{\theta}_{\text{train}}^* - \bm{\theta}_k^*)^\top \nabla^2 \mathcal{L}_k(\bm{\xi}_k) (\bm{\theta}_{\text{train}}^* - \bm{\theta}_k^*) \right]}_{\mathcal{Q}_{\text{train}} \text{ (Expected Empirical Closeness Penalty)}}.
\end{equation}
We retain the term $\mathcal{Q}_{\text{train}}$ explicitly without approximation. This term represents the curvature-weighted variance of the minimizers around the converged point.

\paragraph{Step 3: Downstream Generalization (Rigorous Matrix Derivation).}
Finally, we analyze the expected performance on a downstream task $\mathcal{T}$ sampled from the same distribution $\mathcal{P}$. We perform a Taylor expansion of the test loss around the task-specific minimizer $\bm{\theta}_{\mathcal{T}}^*$. Since $\nabla \mathcal{L}_{\mathcal{T}}(\bm{\theta}_{\mathcal{T}}^*) = 0$, the first-order term vanishes:
\begin{equation}
    \mathcal{L}_{\mathcal{T}}(\bm{\theta}_{\text{train}}^*) = \mathcal{L}_{\mathcal{T}}(\bm{\theta}_{\mathcal{T}}^*) + \frac{1}{2} (\bm{\theta}_{\text{train}}^* - \bm{\theta}_{\mathcal{T}}^*)^{\top} \nabla^2 \mathcal{L}_{\mathcal{T}}(\bm{\xi}_{\mathcal{T}}) (\bm{\theta}_{\text{train}}^* - \bm{\theta}_{\mathcal{T}}^*).
\end{equation}
Taking the expectation over the task distribution, we define the expected test closeness penalty $\mathcal{Q}_{\text{test}}$:
\begin{equation}
    \mathbb{E}_{\mathcal{T}}[\mathcal{L}_{\mathcal{T}}(\bm{\theta}_{\text{train}}^*)] = \mathbb{E}[\mathcal{L}_{\mathcal{T}}(\bm{\theta}_{\mathcal{T}}^*)] + \underbrace{\frac{1}{2} \mathbb{E} \left[ (\bm{\theta}_{\text{train}}^* - \bm{\theta}_{\mathcal{T}}^*)^{\top} \nabla^2 \mathcal{L}_{\mathcal{T}}(\bm{\xi}_{\mathcal{T}}) (\bm{\theta}_{\text{train}}^* - \bm{\theta}_{\mathcal{T}}^*) \right]}_{\mathcal{Q}_{\text{test}}}.
\end{equation}
Recalling the intrinsic loss trade-off from \cref{eq:exact_intrinsic}, we have $\mathbb{E}[\mathcal{L}_{\mathcal{T}}(\bm{\theta}_{\mathcal{T}}^*)] = C_{\text{train}} - \mathcal{Q}_{\text{train}}$. Substituting this into the equation above yields the generalization gap decomposition:
\begin{equation}
    \mathbb{E}_{\mathcal{T}}[\mathcal{L}_{\mathcal{T}}(\bm{\theta}_{\text{train}}^*)] = C_{\text{train}} + (\mathcal{Q}_{\text{test}} - \mathcal{Q}_{\text{train}}).
\end{equation}

Let $\mathbf{\bar{H}} = \mathbb{E}_{\mathcal{P}}[\nabla^2 \mathcal{L}(\bm{\xi})]$ denote the expected Hessian matrix over the task distribution. Since tasks are i.i.d., both training and test tasks share this expected geometry.

For the test term $\mathcal{Q}_{\text{test}}$, we use the identity $\bm{x}^\top \mathbf{A} \bm{x} = \text{Tr}(\mathbf{A} \bm{x} \bm{x}^\top)$. Replacing the specific task Hessian with the expected Hessian $\mathbf{\bar{H}} = \mathbb{E}_{\mathcal{P}}[\nabla^2 \mathcal{L}(\bm{\xi})]$:
\begin{equation}
    \mathcal{Q}_{\text{test}} = \frac{1}{2} \text{Tr} \left( \mathbf{\bar{H}} \cdot \mathbb{E} \left[ (\bm{\theta}_{\text{train}}^* - \bm{\theta}_{\mathcal{T}}^*) (\bm{\theta}_{\text{train}}^* - \bm{\theta}_{\mathcal{T}}^*)^{\top} \right] \right).
\end{equation}

We expand the covariance term fully around the statistical center $\bm{\mu}$:
\begin{equation}
    \begin{aligned}
        \mathbb{E} \left[ (\bm{\theta}_{\text{train}}^* - \bm{\theta}_{\mathcal{T}}^*) (\bm{\theta}_{\text{train}}^* - \bm{\theta}_{\mathcal{T}}^*)^{\top} \right] &= \mathbb{E} \left[ ((\bm{\theta}_{\text{train}}^* - \bm{\mu}) - (\bm{\theta}_{\mathcal{T}}^* - \bm{\mu})) ((\bm{\theta}_{\text{train}}^* - \bm{\mu}) - (\bm{\theta}_{\mathcal{T}}^* - \bm{\mu}))^{\top} \right] \\
        &= \mathbb{E}[(\bm{\theta}_{\text{train}}^* - \bm{\mu})(\bm{\theta}_{\text{train}}^* - \bm{\mu})^\top] + \mathbb{E}[(\bm{\theta}_{\mathcal{T}}^* - \bm{\mu})(\bm{\theta}_{\mathcal{T}}^* - \bm{\mu})^\top] \\
        &\quad - \mathbb{E}[(\bm{\theta}_{\text{train}}^* - \bm{\mu})(\bm{\theta}_{\mathcal{T}}^* - \bm{\mu})^\top] - \mathbb{E}[(\bm{\theta}_{\mathcal{T}}^* - \bm{\mu})(\bm{\theta}_{\text{train}}^* - \bm{\mu})^\top].
    \end{aligned}
\end{equation}
The cross-terms vanish strictly because $\bm{\theta}_{\mathcal{T}}^*$ is independent of $\bm{\theta}_{\text{train}}^*$ and is centered at $\bm{\mu}$ (i.e., $\mathbb{E}[\bm{\theta}_{\mathcal{T}}^* - \bm{\mu}] = \mathbf{0}$ by definition of $\bm{\mu}$). Substituting $\mathbb{E}[(\bm{\theta}_{\mathcal{T}}^* - \bm{\mu})(\bm{\theta}_{\mathcal{T}}^* - \bm{\mu})^\top] = \mathbf{\Sigma}$ back:
\begin{equation}
    \mathcal{Q}_{\text{test}} = \frac{1}{2} \text{Tr} \left( \mathbf{\bar{H}} \cdot \mathbb{E}[(\bm{\theta}_{\text{train}}^* - \bm{\mu})(\bm{\theta}_{\text{train}}^* - \bm{\mu})^\top] \right) + \frac{1}{2} \text{Tr}(\mathbf{\bar{H}}\mathbf{\Sigma}).
\end{equation}

For the training term $\mathcal{Q}_{\text{train}}$, we consider the expected quadratic penalty averaged over the training tasks. By linearity of expectation, we replace $\nabla^2 \mathcal{L}_k$ with $\mathbf{\bar{H}}$ exactly:
\begin{equation}
    \mathcal{Q}_{\text{train}} = \frac{1}{2K} \sum_{k=1}^K \mathbb{E} \left[ (\bm{\theta}_{\text{train}}^* - \bm{\theta}_k^*)^\top \mathbf{\bar{H}} (\bm{\theta}_{\text{train}}^* - \bm{\theta}_k^*) \right].
\end{equation}
We apply the Generalized Centroid Property. For any positive semi-definite matrix $\mathbf{\bar{H}}$, the weighted sum of squared errors is minimized by the mean $\bar{\bm{\theta}}$. Thus, we have the rigorous lower bound:
\begin{equation}
    \sum_{k=1}^K (\bm{\theta}_{\text{train}}^* - \bm{\theta}_k^*)^\top \mathbf{\bar{H}} (\bm{\theta}_{\text{train}}^* - \bm{\theta}_k^*) \ge \sum_{k=1}^K (\bar{\bm{\theta}} - \bm{\theta}_k^*)^\top \mathbf{\bar{H}} (\bar{\bm{\theta}} - \bm{\theta}_k^*).
\end{equation}
We perform the matrix variance decomposition on the RHS by inserting $\bm{\mu}$:
\begin{equation}
    \begin{aligned}
        \sum_{k=1}^K (\bar{\bm{\theta}} - \bm{\theta}_k^*)^\top \mathbf{\bar{H}} (\bar{\bm{\theta}} - \bm{\theta}_k^*) &= \sum_{k=1}^K ((\bar{\bm{\theta}} - \bm{\mu}) - (\bm{\theta}_k^* - \bm{\mu}))^\top \mathbf{\bar{H}} ((\bar{\bm{\theta}} - \bm{\mu}) - (\bm{\theta}_k^* - \bm{\mu})) \\
        &= \sum_{k=1}^K (\bar{\bm{\theta}} - \bm{\mu})^\top \mathbf{\bar{H}} (\bar{\bm{\theta}} - \bm{\mu}) + \sum_{k=1}^K (\bm{\theta}_k^* - \bm{\mu})^\top \mathbf{\bar{H}} (\bm{\theta}_k^* - \bm{\mu})  - 2 (\bar{\bm{\theta}} - \bm{\mu})^\top \mathbf{\bar{H}} \underbrace{\sum_{k=1}^K (\bm{\theta}_k^* - \bm{\mu})}_{K(\bar{\bm{\theta}} - \bm{\mu})}.
    \end{aligned}
\end{equation}
Simplifying the cross-term and combining with the first term:
\begin{equation}
    \begin{aligned}
        \sum_{k=1}^K (\bar{\bm{\theta}} - \bm{\theta}_k^*)^\top \mathbf{\bar{H}} (\bar{\bm{\theta}} - \bm{\theta}_k^*) &= K (\bar{\bm{\theta}} - \bm{\mu})^\top \mathbf{\bar{H}} (\bar{\bm{\theta}} - \bm{\mu}) + \sum_{k=1}^K (\bm{\theta}_k^* - \bm{\mu})^\top \mathbf{\bar{H}} (\bm{\theta}_k^* - \bm{\mu}) - 2K (\bar{\bm{\theta}} - \bm{\mu})^\top \mathbf{\bar{H}} (\bar{\bm{\theta}} - \bm{\mu}) \\
        &= \sum_{k=1}^K (\bm{\theta}_k^* - \bm{\mu})^\top \mathbf{\bar{H}} (\bm{\theta}_k^* - \bm{\mu}) - K (\bar{\bm{\theta}} - \bm{\mu})^\top \mathbf{\bar{H}} (\bar{\bm{\theta}} - \bm{\mu}).
    \end{aligned}
\end{equation}
Taking expectations and using the trace identity $\mathbb{E}[\bm{x}^\top \mathbf{A} \bm{x}] = \text{Tr}(\mathbf{A} \mathbb{E}[\bm{x}\bm{x}^\top])$:
\begin{itemize}
    \item The first term: $\sum_{k=1}^K \text{Tr}(\mathbf{\bar{H}} \mathbb{E}[(\bm{\theta}_k^* - \bm{\mu})(\bm{\theta}_k^* - \bm{\mu})^\top]) = K \text{Tr}(\mathbf{\bar{H}}\mathbf{\Sigma})$.
    \item The second term (variance of the mean): $\mathbb{E}[(\bar{\bm{\theta}} - \bm{\mu})(\bar{\bm{\theta}} - \bm{\mu})^\top] = \frac{1}{K}\mathbf{\Sigma}$. Thus, $K \text{Tr}(\mathbf{\bar{H}} \cdot \frac{1}{K}\mathbf{\Sigma}) = \text{Tr}(\mathbf{\bar{H}}\mathbf{\Sigma})$.
\end{itemize}
Combining these, the expected training penalty is bounded by:
\begin{equation}
    \begin{aligned}
        \mathcal{Q}_{\text{train}} &\ge \frac{1}{2K} \left( K \text{Tr}(\mathbf{\bar{H}}\mathbf{\Sigma}) - \text{Tr}(\mathbf{\bar{H}}\mathbf{\Sigma}) \right) = \frac{1}{2} \left( 1 - \frac{1}{K} \right) \text{Tr}(\mathbf{\bar{H}}\mathbf{\Sigma}).
    \end{aligned}
\end{equation}

Subtracting the two terms ($\mathcal{Q}_{\text{test}} - \mathcal{Q}_{\text{train}}$), the dominant term $\frac{1}{2}\text{Tr}(\mathbf{\bar{H}}\mathbf{\Sigma})$ cancels out exactly. We then bound the remaining terms using the spectral norm $\lambda_{\max}$ and the estimation error bound derived in \cref{eq:gen_step1}:
\begin{equation}
    \begin{aligned}
        \mathbb{E}_{\mathcal{T}}[\mathcal{L}_{\mathcal{T}}(\bm{\theta}_{\text{train}}^*)] - C_{\text{train}} &\le \left( \frac{1}{2} \text{Tr}(\mathbf{\bar{H}} \mathbb{E}[(\bm{\theta}_{\text{train}}^* - \bm{\mu})(\bm{\theta}_{\text{train}}^* - \bm{\mu})^\top]) + \frac{1}{2}\text{Tr}(\mathbf{\bar{H}}\mathbf{\Sigma}) \right)  - \frac{1}{2} \left( 1 - \frac{1}{K} \right) \text{Tr}(\mathbf{\bar{H}}\mathbf{\Sigma}) \\
        &= \frac{1}{2} \text{Tr} \left( \mathbf{\bar{H}} \cdot \mathbb{E}[(\bm{\theta}_{\text{train}}^* - \bm{\mu})(\bm{\theta}_{\text{train}}^* - \bm{\mu})^\top] \right) + \frac{1}{2K} \text{Tr}(\mathbf{\bar{H}}\mathbf{\Sigma}) \\
        &\le \frac{\lambda_{\max}}{2} \mathbb{E}[\|\bm{\theta}_{\text{train}}^* - \bm{\mu}\|_2^2] + \frac{\lambda_{\max}}{2K} \text{Tr}(\mathbf{\Sigma}) \\
        &\le \frac{\lambda_{\max}}{2} \left( \frac{\kappa^2}{K} \sigma^2 \right) + \frac{\lambda_{\max}}{2K} \sigma^2 \\
        &= \frac{\lambda_{\max} (\kappa^2 + 1)}{2K} \sigma^2.
    \end{aligned}
\end{equation}
This confirms that the generalization gap scales with $O(\frac{\sigma^2}{K})$, driven by the intrinsic task variance and the number of pretraining tasks.


\newpage

\section{Proof of Theorem \ref{thm:gradient_similarity_bound_closeness}}
\label{sec:proof_alignment_bound}

In this section, we provide the detailed proof for Theorem \ref{thm:gradient_similarity_bound_closeness}, which bounds the closeness between minimizers using gradient similarity.

\begin{proof}
The proof proceeds in three main steps: (1) relating the closeness to the gradient norm via the Mean Value Theorem; (2) exploiting the stationarity condition of the total loss to decompose the gradient norms; and (3) bounding the cross-terms using the gradient upper bound and cosine similarity.

\paragraph{Step 1: Relating Closeness to Gradient Norm.}
Recall that $\bm{\theta}_k^*$ is the projection of $\bm{\theta}$ onto the global optimal set $\mathcal{S}_k$. Since $\bm{\theta}_k^*$ is a minimizer, we have $\nabla \mathcal{L}_k(\bm{\theta}_k^*) = \mathbf{0}$.
Applying the Mean Value Theorem to the vector-valued function $\bm{\vartheta} \mapsto \nabla \mathcal{L}_k(\bm{\vartheta})$, there exists a point $\bm{\xi}_k$ on the line segment connecting $\bm{\theta}_k^*$ and $\bm{\theta}$ such that:
\begin{equation}
    \nabla \mathcal{L}_k(\bm{\theta}) - \nabla \mathcal{L}_k(\bm{\theta}_k^*) = \nabla^2 \mathcal{L}_k(\bm{\xi}_k) (\bm{\theta} - \bm{\theta}_k^*).
\end{equation}
Substituting $\nabla \mathcal{L}_k(\bm{\theta}_k^*) = \mathbf{0}$ and taking the norm:
\begin{equation}
    \|\nabla \mathcal{L}_k(\bm{\theta})\|_2 = \|\nabla^2 \mathcal{L}_k(\bm{\xi}_k) (\bm{\theta} - \bm{\theta}_k^*)\|_2.
\end{equation}
We assume the curvature condition where the smallest eigenvalue of the Hessian along the displacement vector is bounded below by $\lambda > 0$. Specifically:
\begin{equation}
    \bm{u}_k^\top \nabla^2 \mathcal{L}_k(\bm{\xi}_k) \bm{u}_k \ge \lambda, \quad \text{where } \bm{u}_k = \frac{\bm{\theta} - \bm{\theta}_k^*}{\|\bm{\theta} - \bm{\theta}_k^*\|_2}.
\end{equation}
This implies $\|\nabla^2 \mathcal{L}_k(\bm{\xi}_k) (\bm{\theta} - \bm{\theta}_k^*)\|_2 \ge \lambda \|\bm{\theta} - \bm{\theta}_k^*\|_2$. Rearranging this inequality gives an upper bound on the closeness:
\begin{equation}
    \|\bm{\theta} - \bm{\theta}_k^*\|_2 \le \frac{1}{\lambda} \|\nabla \mathcal{L}_k(\bm{\theta})\|_2.
\end{equation}
Squaring and averaging over all $K$ tasks yields:
\begin{equation}
\label{eq:dist_grad_bound}
    \frac{1}{K} \sum_{k=1}^K \|\bm{\theta} - \bm{\theta}_k^*\|_2^2 \le \frac{1}{K \lambda^2} \sum_{k=1}^K \|\nabla \mathcal{L}_k(\bm{\theta})\|_2^2.
\end{equation}

\paragraph{Step 2: Force Balance Decomposition.}
Since $\bm{\theta}$ is the converged parameter for the total loss, it satisfies the stationarity condition:
\begin{equation}
    \sum_{k=1}^K \nabla \mathcal{L}_k(\bm{\theta}) = \mathbf{0}.
\end{equation}
We analyze the squared norm of this sum, which must equal zero:
\begin{equation}
    \left\| \sum_{k=1}^K \nabla \mathcal{L}_k(\bm{\theta}) \right\|_2^2 = \sum_{k=1}^K \|\nabla \mathcal{L}_k(\bm{\theta})\|_2^2 + \sum_{i \neq j} \nabla \mathcal{L}_i(\bm{\theta})^\top \nabla \mathcal{L}_j(\bm{\theta}) = 0.
\end{equation}
By rearranging terms, we obtain an exact identity relating the sum of squared gradient norms to the negative sum of cross-task inner products:
\begin{equation}
\label{eq:force_balance_identity}
    \sum_{k=1}^K \|\nabla \mathcal{L}_k(\bm{\theta})\|_2^2 = \sum_{i \neq j} \left( - \nabla \mathcal{L}_i(\bm{\theta})^\top \nabla \mathcal{L}_j(\bm{\theta}) \right).
\end{equation}
Substituting Eq. (\ref{eq:force_balance_identity}) into Eq. (\ref{eq:dist_grad_bound}), we obtain the first inequality of the theorem:
\begin{equation}
    \frac{1}{K} \sum_{k=1}^K \|\bm{\theta} - \bm{\theta}_k^*\|_2^2 \le \frac{1}{K \lambda^2} \sum_{i \neq j} \left( - \nabla \mathcal{L}_i(\bm{\theta})^\top \nabla \mathcal{L}_j(\bm{\theta}) \right).
\end{equation}

\paragraph{Step 3: Bounding via Cosine Similarity.}
Finally, we bound the inner product term using the gradient magnitude upper bound $G = \sup_k \|\nabla \mathcal{L}_k(\bm{\theta})\|_2$.
Recall that:
\begin{equation}
    \nabla \mathcal{L}_i(\bm{\theta})^\top \nabla \mathcal{L}_j(\bm{\theta}) = \|\nabla \mathcal{L}_i(\bm{\theta})\|_2 \|\nabla \mathcal{L}_j(\bm{\theta})\|_2 \text{CosSim}(\nabla \mathcal{L}_i(\bm{\theta}), \nabla \mathcal{L}_j(\bm{\theta})).
\end{equation}
We use the property that for any $i, j$, the following term is non-negative:
\begin{equation}
    (G^2 - \|\nabla \mathcal{L}_i(\bm{\theta})\|_2 \|\nabla \mathcal{L}_j(\bm{\theta})\|_2)(1 - \text{CosSim}(\nabla \mathcal{L}_i(\bm{\theta}), \nabla \mathcal{L}_j(\bm{\theta}))) \ge 0,
\end{equation}
since $\|\nabla \mathcal{L}_k(\bm{\theta})\|_2 \le G$ and $\text{CosSim} \le 1$.
Adding this non-negative term to the negative inner product allows us to derive the bound directly:
\begin{equation}
\begin{aligned}
    - \nabla \mathcal{L}_i(\bm{\theta})^\top \nabla \mathcal{L}_j(\bm{\theta}) 
    &= - \|\nabla \mathcal{L}_i(\bm{\theta})\|_2 \|\nabla \mathcal{L}_j(\bm{\theta})\|_2 \text{CosSim}(\nabla \mathcal{L}_i(\bm{\theta}), \nabla \mathcal{L}_j(\bm{\theta})) \\
    &\le - \|\nabla \mathcal{L}_i(\bm{\theta})\|_2 \|\nabla \mathcal{L}_j(\bm{\theta})\|_2 \text{CosSim}(\nabla \mathcal{L}_i(\bm{\theta}), \nabla \mathcal{L}_j(\bm{\theta})) \\
    &\quad + (G^2 - \|\nabla \mathcal{L}_i(\bm{\theta})\|_2 \|\nabla \mathcal{L}_j(\bm{\theta})\|_2)(1 - \text{CosSim}(\nabla \mathcal{L}_i(\bm{\theta}), \nabla \mathcal{L}_j(\bm{\theta}))) \\
    &= G^2 (1 - \text{CosSim}(\nabla \mathcal{L}_i(\bm{\theta}), \nabla \mathcal{L}_j(\bm{\theta}))) - \|\nabla \mathcal{L}_i(\bm{\theta})\|_2 \|\nabla \mathcal{L}_j(\bm{\theta})\|_2 \\
    &\le G^2 (1 - \text{CosSim}(\nabla \mathcal{L}_i(\bm{\theta}), \nabla \mathcal{L}_j(\bm{\theta}))).
\end{aligned}
\end{equation}
Summing this inequality over all $i \neq j$ yields:
\begin{equation}
    \sum_{i \neq j} \left( - \nabla \mathcal{L}_i(\bm{\theta})^\top \nabla \mathcal{L}_j(\bm{\theta}) \right) \le G^2 \sum_{i \neq j} \left( 1 - \text{CosSim}(\nabla \mathcal{L}_i(\bm{\theta}), \nabla \mathcal{L}_j(\bm{\theta})) \right).
\end{equation}
Combining this with the result from Step 2 completes the proof.
\end{proof}


\newpage
\section{Implicit Bias of Nexus Optimizer}
\label{sec:appendix_cwa_theory}

In this appendix, we provide the detailed proofs for \cref{thm:cwa_optimize_gradient_similarity}. We rigorously analyze the update dynamics of \cref{alg:cwa_standard} using second-order Taylor expansions and derive the precise form of the implicit optimization objective with explicit non-asymptotic error bounds.

\subsection{Preliminaries and Notation}

Let $\mathcal{L}_i: \mathbb{R}^d \to \mathbb{R}$ denote the loss function for the $i$-th task, where $i \in \{1, \dots, k\}$. We denote the gradient and Hessian at parameters $\bm{\theta}$ as $\nabla \mathcal{L}_i(\bm{\theta})$ and $\nabla^2 \mathcal{L}_i(\bm{\theta})$, respectively.
The cosine similarity between the gradients of task $i$ and task $j$ is defined as:
\begin{equation}
    S_{ij}(\bm{\theta}) \triangleq \text{CosSim}(\nabla \mathcal{L}_i(\bm{\theta}), \nabla \mathcal{L}_j(\bm{\theta})) = \frac{\nabla \mathcal{L}_i(\bm{\theta})^\top \nabla \mathcal{L}_j(\bm{\theta})}{\|\nabla \mathcal{L}_i(\bm{\theta})\|_2 \|\nabla \mathcal{L}_j(\bm{\theta})\|_2}.
\end{equation}

Algorithm \ref{alg:cwa_standard} performs $k$ inner updates in each outer iteration $t$. Let $\bm{\theta}_{t, 0}$ be the parameters at the start of the inner loop (i.e., $\bm{\theta}_{t, 0} = \bm{\theta}_{t-1}$).
At each inner step $m \in \{1, \dots, k\}$, a task index $s_m$ is sampled uniformly from $\{1, \dots, k\}$. The update rule is:
\begin{equation}
\label{eq:update_rule_app}
    \bm{\theta}_{t, m} = \bm{\theta}_{t, m-1} - \gamma \frac{\nabla \mathcal{L}_{s_m}(\bm{\theta}_{t, m-1})}{\|\nabla \mathcal{L}_{s_m}(\bm{\theta}_{t, m-1})\|_2}.
\end{equation}
The Nexus pseudo-gradient passed to the outer optimizer is $\hat{\bm{g}}_t = \bm{\theta}_{t, 0} - \bm{\theta}_{t, k} = \sum_{m=1}^k (\bm{\theta}_{t, m-1} - \bm{\theta}_{t, m})$.

\subsection{Assumptions and Derived Constants}
\label{subsec:assumptions}

To derive explicit non-asymptotic bounds, we utilize the following standard assumptions regarding the loss landscape.

\begin{itemize}
    \item \textbf{Assumption 1 (Bounded Gradients):} For all tasks $i$ and parameters $\bm{\theta}$, the gradient norm is bounded from below: $0 < G_{\min} \le \|\nabla \mathcal{L}_i(\bm{\theta})\|_2$.
    \item \textbf{Assumption 2 (Smoothness):} The loss $\mathcal{L}_i$ is $L$-smooth, i.e., $\|\nabla^2 \mathcal{L}_i(\bm{\theta})\|_2 \le L$.
    \item \textbf{Assumption 3 (Hessian Lipschitz):} The Hessian is $\rho$-Lipschitz continuous, i.e., $\|\nabla^2 \mathcal{L}_i(\bm{x}) - \nabla^2 \mathcal{L}_i(\bm{y})\|_2 \le \rho \|\bm{x} - \bm{y}\|_2$.
\end{itemize}

Based on the properties above, we further denote $L_1$ and $L_2$ as the Lipschitz constants for the normalized gradient and its Jacobian, respectively:
\begin{enumerate}
    \item The normalized gradient is $L_1$-Lipschitz continuous:
    \begin{equation}
        \left\| \frac{\nabla \mathcal{L}_i(\bm{x})}{\|\nabla \mathcal{L}_i(\bm{x})\|_2} - \frac{\nabla \mathcal{L}_i(\bm{y})}{\|\nabla \mathcal{L}_i(\bm{y})\|_2} \right\|_2 \le L_1 \|\bm{x} - \bm{y}\|_2.
    \end{equation}
    \item The Jacobian of the normalized gradient is $L_2$-Lipschitz continuous:
    \begin{equation}
        \left\| \bm{J}_i(\bm{x}) - \bm{J}_i(\bm{y}) \right\|_2 \le L_2 \|\bm{x} - \bm{y}\|_2,
    \end{equation}
    where $\bm{J}_i(\bm{\theta}) = \frac{\partial}{\partial \bm{\theta}} \left( \frac{\nabla \mathcal{L}_i(\bm{\theta})}{\|\nabla \mathcal{L}_i(\bm{\theta})\|_2} \right)$.
\end{enumerate}

\paragraph{Derivation of Constants.} 
Here, we provide the detailed derivation of $L_1$ and $L_2$ based on Assumptions 1-3.

\textbf{1. Derivation of $L_1$:}
By the Mean Value Theorem, $L_1$ is bounded by the supremum of the spectral norm of the Jacobian $\bm{J}_i(\bm{\theta})$. The Jacobian is explicitly given by:
\begin{equation}
    \bm{J}_i(\bm{\theta}) = \frac{1}{\|\nabla \mathcal{L}_i\|_2} \left( \bm{I} - \frac{\nabla \mathcal{L}_i \nabla \mathcal{L}_i^\top}{\|\nabla \mathcal{L}_i\|_2^2} \right) \nabla^2 \mathcal{L}_i(\bm{\theta}).
\end{equation}
The middle term is an orthogonal projection matrix with spectral norm 1. Using the bounds from Assumptions 1 and 2:
\begin{equation}
    L_1 \le \sup_{\bm{\theta}} \|\bm{J}_i(\bm{\theta})\|_2 \le \frac{1}{G_{\min}} \cdot 1 \cdot L = \frac{L}{G_{\min}}.
\end{equation}

\textbf{2. Derivation of $L_2$:}
We decompose the Jacobian $\bm{J}_i(\bm{\theta})$ into three components: a scalar term $u(\bm{\theta})$, a projection term $\bm{\Pi}(\bm{\theta})$, and the Hessian $\bm{H}_i(\bm{\theta})$:
\begin{equation}
    \bm{J}_i(\bm{\theta}) = \underbrace{\|\nabla \mathcal{L}_i(\bm{\theta})\|_2^{-1}}_{u(\bm{\theta})} \cdot \underbrace{\left( \bm{I} - \frac{\nabla \mathcal{L}_i \nabla \mathcal{L}_i^\top}{\|\nabla \mathcal{L}_i\|_2^2} \right)}_{\bm{\Pi}(\bm{\theta})} \cdot \underbrace{\nabla^2 \mathcal{L}_i(\bm{\theta})}_{\bm{H}_i(\bm{\theta})}.
\end{equation}
We apply the product Lipschitz rule. For a product of three functions $f = abc$, the Lipschitz constant satisfies $L_f \le L_a M_b M_c + M_a L_b M_c + M_a M_b L_c$, where $M_{(\cdot)}$ denotes the upper bound of the magnitude and $L_{(\cdot)}$ denotes the Lipschitz constant.

\begin{itemize}
    \item \textbf{Part 1: Scalar $u(\bm{\theta}) = \|\nabla \mathcal{L}_i\|_2^{-1}$.} \\
    \textbf{Magnitude ($M_u$):} By Assumption 1, $|u| \le \frac{1}{G_{\min}}$. \\
    \textbf{Lipschitz ($L_u$):} The gradient of $u$ is $\nabla u = -\|\nabla \mathcal{L}_i\|_2^{-2} \frac{\nabla^2 \mathcal{L}_i \nabla \mathcal{L}_i}{\|\nabla \mathcal{L}_i\|_2} = -\frac{\bm{H}_i \nabla \mathcal{L}_i}{\|\nabla \mathcal{L}_i\|_2^3}$.
    Taking the norm, we have $\|\nabla u\|_2 \le \frac{\|\bm{H}_i\|_2 \|\nabla \mathcal{L}_i\|_2}{\|\nabla \mathcal{L}_i\|_2^3} = \frac{\|\bm{H}_i\|_2}{\|\nabla \mathcal{L}_i\|_2^2}$.
    Using the bounds $L$ and $G_{\min}$, we get $L_u = \frac{L}{G_{\min}^2}$.
    
    \item \textbf{Part 2: Projection $\bm{\Pi}(\bm{\theta}) = \bm{I} - \bm{h}_i\bm{h}_i^\top$.} \\
    \textbf{Magnitude ($M_\Pi$):} The spectral norm is $\|\bm{\Pi}\|_2 = 1$. \\
    \textbf{Lipschitz ($L_\Pi$):} $\bm{\Pi}$ depends on the normalized gradient $\bm{h}_i$, which is $L_1$-Lipschitz. For any unit vectors $\bm{x}, \bm{y}$, we have $\|\bm{x}\bm{x}^\top - \bm{y}\bm{y}^\top\|_2 \le \|\bm{x}(\bm{x}-\bm{y})^\top\|_2 + \|(\bm{x}-\bm{y})\bm{y}^\top\|_2 = 2\|\bm{x}-\bm{y}\|_2$.
    By the chain rule, $L_\Pi = 2 L_1 = \frac{2L}{G_{\min}}$.
    
    \item \textbf{Part 3: Hessian $\bm{H}_i(\bm{\theta}) = \nabla^2 \mathcal{L}_i$.} \\
    \textbf{Magnitude ($M_H$):} By Assumption 2, $\|\bm{H}_i\|_2 \le L$. \\
    \textbf{Lipschitz ($L_H$):} By Assumption 3, $L_H = \rho$.
\end{itemize}

Substituting these values into the product rule formula:
\begin{equation}
    \begin{aligned}
        L_2 &\le L_u M_\Pi M_H + M_u L_\Pi M_H + M_u M_\Pi L_H \\
        &\le \left( \frac{L}{G_{\min}^2} \cdot 1 \cdot L \right) + \left( \frac{1}{G_{\min}} \cdot \frac{2L}{G_{\min}} \cdot L \right) + \left( \frac{1}{G_{\min}} \cdot 1 \cdot \rho \right) \\
        &= \frac{L^2}{G_{\min}^2} + \frac{2L^2}{G_{\min}^2} + \frac{\rho}{G_{\min}}.
    \end{aligned}
\end{equation}
Combining terms yields the final constant:
\begin{equation}
    L_2 = \frac{3L^2 + \rho G_{\min}}{G_{\min}^2}.
\end{equation}

\subsection{Derivation of the Update Direction}

We now derive the expansion of the total pseudo-gradient $\hat{\bm{g}}_t$ and bound the error terms.

\subsubsection{Step 1: Expansion of the Normalized Gradient}
We aim to expand the normalized gradient at the shifted parameters $\bm{\theta}_{t, m-1}$ around the initial point $\bm{\theta}_{t, 0}$.
Let $\Delta \bm{\theta}_{m-1} = \bm{\theta}_{t, m-1} - \bm{\theta}_{t, 0}$.

The Jacobian of the normalized gradient is given explicitly by the projection of the Hessian:
\begin{equation}
    \bm{J}_{s_m}(\bm{\theta}) = \frac{1}{\|\nabla \mathcal{L}_{s_m}(\bm{\theta})\|_2} \left( \bm{I} - \frac{\nabla \mathcal{L}_{s_m}(\bm{\theta})\nabla \mathcal{L}_{s_m}(\bm{\theta})^\top}{\|\nabla \mathcal{L}_{s_m}(\bm{\theta})\|_2^2} \right) \nabla^2 \mathcal{L}_{s_m}(\bm{\theta}).
\end{equation}
Applying Taylor's theorem with the Lagrange remainder form:
\begin{equation}
\label{eq:taylor_expansion}
    \frac{\nabla \mathcal{L}_{s_m}(\bm{\theta}_{t, m-1})}{\|\nabla \mathcal{L}_{s_m}(\bm{\theta}_{t, m-1})\|_2} = \frac{\nabla \mathcal{L}_{s_m}(\bm{\theta}_{t, 0})}{\|\nabla \mathcal{L}_{s_m}(\bm{\theta}_{t, 0})\|_2} + \bm{J}_{s_m}(\bm{\theta}_{t, 0}) \Delta \bm{\theta}_{m-1} + \bm{r}_{m}.
\end{equation}
Using the $L_2$-Lipschitz property of the Jacobian, the residual vector $\bm{r}_{m}$ is bounded by:
\begin{equation}
    \|\bm{r}_{m}\|_2 \le \frac{L_2}{2} \|\Delta \bm{\theta}_{m-1}\|_2^2.
\end{equation}

\subsubsection{Step 2: Recursive Substitution}
The displacement $\Delta \bm{\theta}_{m-1}$ is the sum of previous updates. Using the zeroth-order approximation:
\begin{equation}
    \Delta \bm{\theta}_{m-1} = \sum_{l=1}^{m-1} (\bm{\theta}_{t, l} - \bm{\theta}_{t, l-1}) = -\gamma \sum_{l=1}^{m-1} \frac{\nabla \mathcal{L}_{s_l}(\bm{\theta}_{t, l-1})}{\|\nabla \mathcal{L}_{s_l}(\bm{\theta}_{t, l-1})\|_2}.
\end{equation}
We approximate the terms in the sum using the zeroth-order expansion around $\bm{\theta}_{t, 0}$. Using the $L_1$-Lipschitz property of the normalized gradient:
\begin{equation}
    \left\| \frac{\nabla \mathcal{L}_{s_l}(\bm{\theta}_{t, l-1})}{\|\nabla \mathcal{L}_{s_l}(\bm{\theta}_{t, l-1})\|_2} - \frac{\nabla \mathcal{L}_{s_l}(\bm{\theta}_{t, 0})}{\|\nabla \mathcal{L}_{s_l}(\bm{\theta}_{t, 0})\|_2} \right\|_2 
    \le L_1 \|\bm{\theta}_{t, l-1} - \bm{\theta}_{t, 0}\|_2 
    = L_1 \left\|-\sum_{j=1}^{l-1}\gamma \frac{\nabla \mathcal{L}_{s_l}(\bm{\theta}_{t, j})}{\|\nabla \mathcal{L}_{s_l}(\bm{\theta}_{t, j})\|_2} \right\| 
    \le L_1 (l-1)\gamma.
\end{equation}
Thus, we can write:
\begin{equation}
    \Delta \bm{\theta}_{m-1} = -\gamma \sum_{l=1}^{m-1} \frac{\nabla \mathcal{L}_{s_l}(\bm{\theta}_{t, 0})}{\|\nabla \mathcal{L}_{s_l}(\bm{\theta}_{t, 0})\|_2} + \bm{\delta}_{m-1},
\end{equation}
where the accumulated error $\bm{\delta}_{m-1}$ is bounded by summing the individual errors:
\begin{equation}
    \|\bm{\delta}_{m-1}\|_2 \le \gamma \sum_{l=1}^{m-1} L_1 (l-1)\gamma = L_1 \gamma^2 \frac{(m-1)(m-2)}{2} \le \frac{L_1}{2} (m-1)^2 \gamma^2.
\end{equation}

Substituting this expression for $\Delta \bm{\theta}_{m-1}$ back into Eq. (\ref{eq:taylor_expansion}):
\begin{equation}
\begin{aligned}
    \frac{\nabla \mathcal{L}_{s_m}(\bm{\theta}_{t, m-1})}{\|\nabla \mathcal{L}_{s_m}(\bm{\theta}_{t, m-1})\|_2} &= \frac{\nabla \mathcal{L}_{s_m}(\bm{\theta}_{t, 0})}{\|\nabla \mathcal{L}_{s_m}(\bm{\theta}_{t, 0})\|_2} - \gamma \sum_{l=1}^{m-1} \bm{J}_{s_m}(\bm{\theta}_{t, 0}) \frac{\nabla \mathcal{L}_{s_l}(\bm{\theta}_{t, 0})}{\|\nabla \mathcal{L}_{s_l}(\bm{\theta}_{t, 0})\|_2} + \bm{\mathcal{E}}_{m}.
\end{aligned}
\end{equation}
Here, the total error at step $m$, denoted $\bm{\mathcal{E}}_{m}$, consists of the Taylor residual $\bm{r}_m$ and the propagation error from $\bm{\delta}_{m-1}$ scaled by the Jacobian.
Using $\|\bm{J}_{s_m}\|_2 \le L_1$ and $\|\Delta \bm{\theta}_{m-1}\|_2 \le (m-1)\gamma$:
\begin{equation}
    \|\bm{\mathcal{E}}_{m}\|_2 \le \frac{L_2}{2} (m-1)^2 \gamma^2 + L_1 \left( \frac{L_1}{2} (m-1)^2 \gamma^2 \right) = \frac{L_2 + L_1^2}{2} (m-1)^2 \gamma^2.
\end{equation}

\subsubsection{Step 3: Aggregation of the Pseudo-Gradient}
The total pseudo-gradient is $\hat{\bm{g}}_t = \gamma \sum_{m=1}^k \frac{\nabla \mathcal{L}_{s_m}(\bm{\theta}_{t, m-1})}{\|\nabla \mathcal{L}_{s_m}(\bm{\theta}_{t, m-1})\|_2}$. Substituting the result from Step 2:
\begin{equation}
    \hat{\bm{g}}_t = \gamma \sum_{m=1}^k \frac{\nabla \mathcal{L}_{s_m}}{\|\nabla \mathcal{L}_{s_m}\|_2} - \gamma^2 \sum_{m=1}^k \sum_{l=1}^{m-1} \bm{J}_{s_m} \frac{\nabla \mathcal{L}_{s_l}}{\|\nabla \mathcal{L}_{s_l}\|_2} + \bm{\mathcal{E}}_{total}.
\end{equation}
(We omit the argument $\bm{\theta}_{t, 0}$ for brevity; all terms are evaluated at $\bm{\theta}_{t, 0}$).
The total error vector $\bm{\mathcal{E}}_{total} = \gamma \sum_{m=1}^k \bm{\mathcal{E}}_{m}$ is explicitly bounded by summing the bounds from Step 2:
\begin{equation}
    \|\bm{\mathcal{E}}_{total}\|_2 \le \gamma \sum_{m=1}^k \frac{L_2 + L_1^2}{2} (m-1)^2 \gamma^2 \le \frac{L_2 + L_1^2}{6} k^3 \gamma^3.
\end{equation}
\subsubsection{Step 4: Expectation Analysis and Connection to Cosine Similarity}
We now compute the expectation of $\hat{\bm{g}}_t$ over the independent uniform sampling of indices $s_1, \dots, s_k$ and relate the second-order term to the gradient of the cosine similarity.

\paragraph{Linear Term.}
Let $\mathcal{T}_{linear} = \sum_{m=1}^k \frac{\nabla \mathcal{L}_{s_m}}{\|\nabla \mathcal{L}_{s_m}\|_2}$. By linearity of expectation:
\begin{equation}
    \mathbb{E}[\mathcal{T}_{linear}] = k \cdot \frac{1}{k} \sum_{i=1}^k \frac{\nabla \mathcal{L}_i}{\|\nabla \mathcal{L}_i\|_2} = \sum_{i=1}^k \frac{\nabla \mathcal{L}_i}{\|\nabla \mathcal{L}_i\|_2}.
\end{equation}

\paragraph{Interaction Term.}
Let $\mathcal{T}_{interact} = \sum_{m=1}^k \sum_{l=1}^{m-1} \bm{J}_{s_m} \frac{\nabla \mathcal{L}_{s_l}}{\|\nabla \mathcal{L}_{s_l}\|_2}$.
The double summation contains $\frac{k(k-1)}{2}$ terms. Since $m > l$, $s_m$ and $s_l$ are independent. Summing over all pairs yields:
\begin{equation}
    \mathbb{E}[\mathcal{T}_{interact}] = \frac{k-1}{2k} \sum_{i, j} \bm{J}_i \frac{\nabla \mathcal{L}_j}{\|\nabla \mathcal{L}_j\|_2} = \frac{k-1}{4k} \sum_{i \neq j} \left( \bm{J}_i \frac{\nabla \mathcal{L}_j}{\|\nabla \mathcal{L}_j\|_2} + \bm{J}_j \frac{\nabla \mathcal{L}_i}{\|\nabla \mathcal{L}_i\|_2} \right).
\end{equation}

Note that the exact gradient of the cosine similarity between task $i$ and task $j$ is formulated using the transposed Jacobian (VJP):
\begin{equation}
    \nabla_{\bm{\theta}} \text{CosSim}(\nabla \mathcal{L}_i, \nabla \mathcal{L}_j) = \bm{J}_i^\top \frac{\nabla \mathcal{L}_j}{\|\nabla \mathcal{L}_j\|_2} + \bm{J}_j^\top \frac{\nabla \mathcal{L}_i}{\|\nabla \mathcal{L}_i\|_2}.
\end{equation}

To relate the interaction term to the cosine similarity gradient, we add and subtract the transposed Jacobian terms:
\begin{equation}
\begin{aligned}
    \bm{J}_i \frac{\nabla \mathcal{L}_j}{\|\nabla \mathcal{L}_j\|_2} + \bm{J}_j \frac{\nabla \mathcal{L}_i}{\|\nabla \mathcal{L}_i\|_2} 
    &= \left( \bm{J}_i^\top \frac{\nabla \mathcal{L}_j}{\|\nabla \mathcal{L}_j\|_2} + \bm{J}_j^\top \frac{\nabla \mathcal{L}_i}{\|\nabla \mathcal{L}_i\|_2} \right) + (\bm{J}_i - \bm{J}_i^\top) \frac{\nabla \mathcal{L}_j}{\|\nabla \mathcal{L}_j\|_2} + (\bm{J}_j - \bm{J}_j^\top) \frac{\nabla \mathcal{L}_i}{\|\nabla \mathcal{L}_i\|_2} \\
    &= \nabla_{\bm{\theta}} \text{CosSim}(\nabla \mathcal{L}_i, \nabla \mathcal{L}_j) + \bm{\mathcal{E}}_{\text{sym}, i, j} + \bm{\mathcal{E}}_{\text{sym}, j, i},
\end{aligned}
\end{equation}
where $\bm{\mathcal{E}}_{\text{sym}, i, j}$ is the Jacobian non-symmetric error defined as:
\begin{equation}
    \bm{\mathcal{E}}_{\text{sym}, i, j} = \left( \nabla \frac{\nabla \mathcal{L}_i}{\|\nabla \mathcal{L}_i\|_2} - \left( \nabla \frac{\nabla \mathcal{L}_i}{\|\nabla \mathcal{L}_i\|_2} \right)^\top \right) \frac{\nabla \mathcal{L}_j}{\|\nabla \mathcal{L}_j\|_2}.
\end{equation}

The residual $\bm{\mathcal{E}}_{\text{sym}, i, j}$ exhibits two properties:
\begin{enumerate}
    \item \textbf{Orthogonality to the target gradient:} The matrix $\bm{A}_{i} = \nabla (\frac{\nabla \mathcal{L}_i}{\|\nabla \mathcal{L}_i\|_2}) - \nabla (\frac{\nabla \mathcal{L}_i}{\|\nabla \mathcal{L}_i\|_2})^\top$ is anti-symmetric ($\bm{A}_{i}^\top = -\bm{A}_{i}$). For any anti-symmetric matrix $\bm{A}$ and vector $\bm{x}$, the quadratic form is zero ($\bm{x}^\top \bm{A} \bm{x} = 0$). By setting $\bm{x} = \frac{\nabla \mathcal{L}_j}{\|\nabla \mathcal{L}_j\|_2}$, we have $\langle \bm{\mathcal{E}}_{\text{sym}, i, j}, \nabla \mathcal{L}_j \rangle = 0$, which implies $\bm{\mathcal{E}}_{\text{sym}, i, j} \perp \nabla \mathcal{L}_j$.
    
    \item \textbf{Vanishing under gradient-Hessian alignment:} Recent literature on the Edge of Stability (EoS) and the "river valley" phenomenon observes that gradients tend to align with the top eigenvectors of the Hessian during training \cite{gur2018gradient,song2024does,cohen2021gradient,damian2022self}. When the gradient aligns with an eigenvector, the commutator between the orthogonal projection and the Hessian becomes zero, which implies $\bm{\mathcal{E}}_{\text{sym}, i, j} = \mathbf{0}$.
\end{enumerate}
Substituting this expansion into the interaction term expectation yields:
\begin{equation}
    \mathbb{E}[\mathcal{T}_{interact}] = \frac{k-1}{4k} \sum_{i \neq j} \left( \nabla_{\bm{\theta}} \text{CosSim}(\nabla \mathcal{L}_i, \nabla \mathcal{L}_j) + \bm{\mathcal{E}}_{\text{sym}, i, j} + \bm{\mathcal{E}}_{\text{sym}, j, i} \right).
\end{equation}

\subsection{Proof Conclusion}
Combining the linear term and the interaction term, the expected Nexus update direction is:
\begin{equation}
    \mathbb{E}[\hat{\bm{g}}_t] = \gamma \sum_{i=1}^k \frac{\nabla \mathcal{L}_i(\bm{\theta}_{t, 0})}{\|\nabla \mathcal{L}_i(\bm{\theta}_{t, 0})\|_2} - \gamma^2 \frac{k-1}{4k} \sum_{i \neq j} \left( \nabla_{\bm{\theta}} \text{CosSim}(\nabla \mathcal{L}_i, \nabla \mathcal{L}_j) + \bm{\mathcal{E}}_{\text{sym}, i, j} + \bm{\mathcal{E}}_{\text{sym}, j, i} \right) + \bm{\mathcal{E}}_{total}.
\end{equation}
Substituting the constants derived in Appendix \ref{subsec:assumptions}, the residual is bounded by:
\begin{equation}
    \|\bm{\mathcal{E}}_{total}\|_2 \le \frac{1}{6} \left( \frac{\rho G_{\min} + 4L^2}{G_{\min}^2} \right) k^3 \gamma^3 = \mathcal{O}(\gamma^3).
\end{equation}
This confirms that the update direction follows the gradient of the loss plus the similarity alignment term, subject to the non-symmetric residual and a bounded cubic Taylor error.
\hfill $\square$

\newpage
\section{Proof of Convergence Rate (Theorem \ref{theorem:cwa_convergence})}
\label{sec:appendix_convergence_proof}

In this section, we provide the detailed proof for Theorem \ref{theorem:cwa_convergence}. Let $\bm{\theta}^*$ be the common minimizer such that $\nabla \mathcal{L}_i(\bm{\theta}^*) = 0$ for all $i \in [k]$.
Consider the update at step $m$: $\bm{\theta}_m = \bm{\theta}_{m-1} - \gamma \nabla \mathcal{L}_{s_m}(\bm{\theta}_{m-1})$, where $s_m$ is the task index sampled uniformly at random.

First, we expand the squared distance to the optimum for a specific realization of $s_m$:
\begin{equation}
\label{eq:appendix:update_expansion}
\begin{aligned}
\|\bm{\theta}_m - \bm{\theta}^*\|^2 &= \|\bm{\theta}_{m-1} - \gamma \nabla \mathcal{L}_{s_m}(\bm{\theta}_{m-1}) - \bm{\theta}^*\|^2 \\
&= \|\bm{\theta}_{m-1} - \bm{\theta}^*\|^2 - 2\gamma \langle \nabla \mathcal{L}_{s_m}(\bm{\theta}_{m-1}), \bm{\theta}_{m-1} - \bm{\theta}^* \rangle + \gamma^2 \|\nabla \mathcal{L}_{s_m}(\bm{\theta}_{m-1})\|^2.
\end{aligned}
\end{equation}

To bound the inner product term, we utilize the property of smooth and strongly convex functions. Define the auxiliary function $\phi_i(\bm{\theta}) = \mathcal{L}_i(\bm{\theta}) - \frac{\mu}{2} \|\bm{\theta}\|^2$. Since each $\mathcal{L}_i$ is $L$-smooth and $\mu$-strongly convex, $\phi_i(\bm{\theta})$ is convex and $(L-\mu)$-smooth.
By the co-coercivity property of convex smooth functions, for any $\bm{\theta}$, we have:
\begin{equation}
\langle \nabla \phi_i(\bm{\theta}) - \nabla \phi_i(\bm{\theta}^*), \bm{\theta} - \bm{\theta}^* \rangle \ge \frac{1}{L-\mu} \|\nabla \phi_i(\bm{\theta}) - \nabla \phi_i(\bm{\theta}^*)\|^2.
\end{equation}
Substituting $\nabla \phi_i(\bm{\theta}) = \nabla \mathcal{L}_i(\bm{\theta}) - \mu \bm{\theta}$ and noting that $\nabla \mathcal{L}_i(\bm{\theta}^*) = 0$, we substitute back:
\begin{equation}
\begin{aligned}
\langle \nabla \mathcal{L}_i(\bm{\theta}) - \mu (\bm{\theta} - \bm{\theta}^*), \bm{\theta} - \bm{\theta}^* \rangle
&\ge \frac{1}{L-\mu} \|\nabla \mathcal{L}_i(\bm{\theta}) - \mu (\bm{\theta} - \bm{\theta}^*)\|^2 \\
&= \frac{1}{L-\mu} \left( \|\nabla \mathcal{L}_i(\bm{\theta})\|^2 - 2\mu \langle \nabla \mathcal{L}_i(\bm{\theta}), \bm{\theta} - \bm{\theta}^* \rangle + \mu^2 \|\bm{\theta} - \bm{\theta}^*\|^2 \right).
\end{aligned}
\end{equation}
Rearranging the terms, we obtain the following inequality which holds for any task index $i$, and thus specifically for the sampled index $s_m$:
\begin{equation}
\label{eq:appendix:cocoercivity_bound}
    \langle \nabla \mathcal{L}_{s_m}(\bm{\theta}), \bm{\theta} - \bm{\theta}^* \rangle \geq \frac{1}{L+\mu}\|\nabla \mathcal{L}_{s_m}(\bm{\theta}) \|^2 + \frac{\mu L}{L+\mu}\|\bm{\theta} -\bm{\theta}^*\|^2.
\end{equation}

Substituting \cref{eq:appendix:cocoercivity_bound} back into \cref{eq:appendix:update_expansion} with $\bm{\theta} = \bm{\theta}_{m-1}$:
\begin{equation}
\begin{aligned}
\|\bm{\theta}_m - \bm{\theta}^*\|^2 &\leq \|\bm{\theta}_{m-1} - \bm{\theta}^*\|^2 - 2 \gamma \left(\frac{1}{L+\mu}\|\nabla \mathcal{L}_{s_m}(\bm{\theta}_{m-1}) \|^2 + \frac{\mu L}{L+\mu}\|\bm{\theta}_{m-1} -\bm{\theta}^*\|^2 \right)  + \gamma^2 \|\nabla \mathcal{L}_{s_m}(\bm{\theta}_{m-1})\|^2 \\
&= \left(1-\frac{2\gamma \mu L}{L+\mu}\right)\|\bm{\theta}_{m-1} - \bm{\theta}^*\|^2 + \left(\gamma^2 -\frac{2 \gamma}{L+\mu}\right)\|\nabla \mathcal{L}_{s_m}(\bm{\theta}_{m-1})\|^2.
\end{aligned}
\end{equation}

Provided that the step size satisfies $\gamma \in (0, \frac{2}{L+\mu}]$, the coefficient $\left(\gamma^2 -\frac{2 \gamma}{L+\mu}\right)$ is non-positive. Since $\|\nabla \mathcal{L}_{s_m}(\bm{\theta}_{m-1})\|^2 \ge 0$, we can drop the gradient norm term to obtain an upper bound:
\begin{equation}
    \|\bm{\theta}_m - \bm{\theta}^*\|^2 \leq \left(1-\frac{2\gamma \mu L}{L+\mu}\right)\|\bm{\theta}_{m-1} - \bm{\theta}^*\|^2.
\end{equation}

Since this inequality holds for any realization of the random sample $s_m$, we take the expectation over the sampling distribution. Let $\mathbb{E}[\cdot]$ denote the total expectation over the sequence of random indices $\{s_1, \dots, s_m\}$. We have:
\begin{equation}
    \mathbb{E}[\|\bm{\theta}_m - \bm{\theta}^*\|^2] \leq \left(1-\frac{2\gamma \mu L}{L+\mu}\right) \mathbb{E}[\|\bm{\theta}_{m-1} - \bm{\theta}^*\|^2].
\end{equation}
Applying this recurrence relation recursively for $T$ steps yields:
\begin{equation}
    \mathbb{E}[\|\bm{\theta}_T - \bm{\theta}^*\|^2] \leq \left(1-\frac{2\gamma \mu L}{L+\mu}\right)^T \|\bm{\theta}_{0} - \bm{\theta}^*\|^2.
\end{equation}

Specifically, when choosing the step size $\gamma = \frac{2}{L+\mu}$:
\begin{equation}
    1-\frac{2\gamma \mu L}{L+\mu} = 1 - \frac{4 \mu L}{(L+\mu)^2} = \frac{(L-\mu)^2}{(L+\mu)^2} = \left(\frac{\kappa-1}{\kappa+1}\right)^2,
\end{equation}
where $\kappa = L/\mu$ is the condition number. Thus, we obtain the convergence rate:
\begin{equation}
    \mathbb{E}[\|\bm{\theta}_T - \bm{\theta}^*\|^2] \leq \left(\frac{\kappa-1}{\kappa+1}\right)^{2T} \|\bm{\theta}_{0} - \bm{\theta}^*\|^2.
\end{equation}
\hfill $\square$

\newpage
\section{Third-Order Implicit Bias Analysis}
\label{sec:appendix_third_order}

In this section, we analyze the third-order implicit bias of Nexus, inspired by recent works~\cite{wen2025understanding,cohen2025understanding,damian2021label}. While the second-order analysis reveals how Nexus aligns gradients, it does not fully explain the \textit{stability} of this alignment in complex landscapes. Here, we demonstrate that the Nexus update direction implicitly minimizes a "Generalized Directional Sharpness" metric. This implies that Nexus actively seeks regions where the loss landscape is not only aligned but also locally flat along the alignment direction, thereby preventing the "de-alignment" caused by sharp curvature.

\subsection{Setup and Definitions}

To perform this analysis, we verify the behavior of the third-order terms in the Taylor expansion. We introduce a standard assumption regarding the smoothness of the Hessian.

\textbf{Assumption 4 (Bounded Third Derivative).}
Assume the third-order derivative tensor is bounded, i.e., for any unit vectors $\bm{u}, \bm{v}, \bm{w}$ and any task $i$, there exists a constant $M_3$ such that $\|\nabla^3 \mathcal{L}_i(\bm{\theta})[\bm{u}, \bm{v}, \bm{w}]\|_2 \le M_3$. This implies that the third-order Taylor remainder satisfies $\|\bm{r}_{Taylor}(\bm{\delta})\|_2 \le \frac{M_3}{6} \|\bm{\delta}\|_2^3$.

\textbf{Definition (Generalized Directional Sharpness).}
We define the generalized sharpness term involving the Hessian of task $j$ and the gradient directions of tasks $i$ and $p$ as:
\begin{equation}
    \mathcal{R}_{i,j,p}(\bm{\theta}) \triangleq \frac{1}{2} \nabla \mathcal{L}_i(\bm{\theta})^\top \nabla^2 \mathcal{L}_j(\bm{\theta}) \nabla \mathcal{L}_p(\bm{\theta}).
\end{equation}
    This term measures the curvature of task $j$ along the plane spanned by the gradients of tasks $i$ and $p$. When $i=p$, this reduces to the standard directional sharpness, quantifying how fast the gradient changes along the update direction.
\subsection{Proof of Theorem \ref{thm:third_order_bias}}

\textbf{1. Exact Expansion of the Gradient.}
Consider the $m$-th inner step with sampled task $s_m$. Let $\bm{\theta}_{m-1} = \bm{\theta}_0 + \bm{\Delta}_{m-1}$. The exact third-order Taylor expansion is:
\begin{equation}
    \nabla \mathcal{L}_{s_m}(\bm{\theta}_{m-1}) = \nabla \mathcal{L}_{s_m}(\bm{\theta}_0) + \nabla^2 \mathcal{L}_{s_m}(\bm{\theta}_0) \bm{\Delta}_{m-1} + \frac{1}{2} \nabla^3 \mathcal{L}_{s_m}(\bm{\theta}_0) [\bm{\Delta}_{m-1}, \bm{\Delta}_{m-1}] + \bm{r}_{Taylor}^{(m)}.
\end{equation}
The remainder is bounded by $\|\bm{r}_{Taylor}^{(m)}\|_2 \le \frac{M_3}{6} \|\bm{\Delta}_{m-1}\|_2^3$. Using the bound on displacement magnitude $\|\bm{\Delta}_{m-1}\|_2 \le (m-1)\gamma$:
\begin{equation}
    \|\bm{r}_{Taylor}^{(m)}\|_2 \le \frac{M_3}{6} (m-1)^3 \gamma^3.
\end{equation}

\textbf{2. Displacement Decomposition.}
We define the ideal displacement using initial gradients as $\tilde{\bm{\Delta}}_{m-1} = \sum_{l=1}^{m-1} -\gamma \hat{\bm{d}}_{s_l}$. The true displacement is $\bm{\Delta}_{m-1} = \tilde{\bm{\Delta}}_{m-1} + \bm{\delta}_{m-1}$.
Using the Lipschitz constant $L_1 = L/G_{\min}$ for the normalized gradient, the accumulated error is bounded by:
\begin{equation}
    \|\bm{\delta}_{m-1}\|_2 \le \gamma \sum_{l=1}^{m-1} L_1(l-1)\gamma \le \frac{L_1}{2} (m-1)^2 \gamma^2.
\end{equation}

\textbf{3. Substitution into Quadratic Term.}
We substitute $\bm{\Delta}_{m-1}$ into the third-order term. By multilinearity of the tensor:
\begin{equation}
    \frac{1}{2} \nabla^3 \mathcal{L}_{s_m} [\bm{\Delta}_{m-1}, \bm{\Delta}_{m-1}] = \frac{1}{2} \nabla^3 \mathcal{L}_{s_m} [\tilde{\bm{\Delta}}_{m-1}, \tilde{\bm{\Delta}}_{m-1}] + \bm{r}_{sub}^{(m)}.
\end{equation}
The residual $\bm{r}_{sub}^{(m)}$ accounts for the cross-terms and quadratic error terms. Its norm is strictly bounded by:
\begin{equation}
    \|\bm{r}_{sub}^{(m)}\|_2 \le \frac{1}{2} \left( 2 \|\nabla^3\| \|\tilde{\bm{\Delta}}_{m-1}\| \|\bm{\delta}_{m-1}\| + \|\nabla^3\| \|\bm{\delta}_{m-1}\|^2 \right).
\end{equation}
Substituting the bounds for $\|\tilde{\bm{\Delta}}_{m-1}\|$ and $\|\bm{\delta}_{m-1}\|$:
\begin{equation}
    \begin{aligned}
    \|\bm{r}_{sub}^{(m)}\|_2 &\le M_3 ((m-1)\gamma) \left( \frac{L_1}{2} (m-1)^2 \gamma^2 \right) + \frac{M_3}{2} \left( \frac{L_1}{2} (m-1)^2 \gamma^2 \right)^2 \\
    &= \frac{M_3 L_1}{2} (m-1)^3 \gamma^3 + \frac{M_3 L_1^2}{8} (m-1)^4 \gamma^4.
    \end{aligned}
\end{equation}

\textbf{4. Derivation of the Expected Update Direction.}
The explicit third-order component of the update (excluding residuals) is:
\begin{equation}
    \bm{v}_3 = \sum_{m=1}^k -\gamma \left( \frac{1}{2} \nabla^3 \mathcal{L}_{s_m} [\tilde{\bm{\Delta}}_{m-1}, \tilde{\bm{\Delta}}_{m-1}] \right).
\end{equation}
Substituting $\tilde{\bm{\Delta}}_{m-1} = \sum_{l=1}^{m-1} -\gamma \hat{\bm{d}}_{s_l}$:
\begin{equation}
    \bm{v}_3 = -\frac{\gamma^3}{2} \sum_{m=1}^k \sum_{l=1}^{m-1} \sum_{p=1}^{m-1} \nabla^3 \mathcal{L}_{s_m} [\hat{\bm{d}}_{s_l}, \hat{\bm{d}}_{s_p}].
\end{equation}
Taking the expectation over uniform sampling of indices $s_m, s_l, s_p$, each triplet $(i, j, p)$ appears with probability $1/k^3$:
\begin{equation}
\begin{aligned}
    \mathbb{E}[\bm{v}_3] &= -\frac{\gamma^3}{2} \sum_{m=1}^k \sum_{l=1}^{m-1} \sum_{p=1}^{m-1} \mathbb{E}_{s_m, s_l, s_p} \left[ \nabla^3 \mathcal{L}_{s_m} [\hat{\bm{d}}_{s_l}, \hat{\bm{d}}_{s_p}] \right] \\
    &= -\frac{\gamma^3}{2} \left( \sum_{m=1}^k \sum_{l=1}^{m-1} \sum_{p=1}^{m-1} 1 \right) \left( \frac{1}{k^3} \sum_{j, i, p} \nabla^3 \mathcal{L}_j [\hat{\bm{d}}_i, \hat{\bm{d}}_p] \right) \\
    &= -\frac{\gamma^3}{2} \left( \sum_{m=1}^k (m-1)^2 \right) \left( \frac{1}{k^3} \sum_{i, j, p} \nabla^3 \mathcal{L}_j [\hat{\bm{d}}_i, \hat{\bm{d}}_p] \right).
\end{aligned}
\end{equation}
Using the summation formula $\sum_{m=1}^k (m-1)^2 = \frac{k(k-1)(2k-1)}{6}$:
\begin{equation}
    \mathbb{E}[\bm{v}_3] = -\gamma^3 \frac{(k-1)(2k-1)}{12 k^2} \sum_{i, j, p} \nabla^3 \mathcal{L}_j [\hat{\bm{d}}_i, \hat{\bm{d}}_p].
\end{equation}
Recognizing that $\nabla_{\bm{\theta}} \mathcal{R}_{i,j,p} = \frac{1}{2} \nabla^3 \mathcal{L}_j [\hat{\bm{d}}_i, \hat{\bm{d}}_p]$ (treating the direction vectors as locally constant for the gradient of the surrogate), we can rewrite the update as a gradient descent step on the sharpness metric:
\begin{equation}
    \mathbb{E}[\bm{v}_3] = -\gamma \nabla \left( \gamma^2 \frac{(k-1)(2k-1)}{6 k^2} \sum_{i, j, p} \mathcal{R}_{i,j,p} \right).
\end{equation}
This confirms that Nexus implicitly minimizes the generalized directional sharpness.

\textbf{5. Bounding the Total Residual.}
The total error vector is $\bm{\mathcal{E}}_{3rd} = \sum_{m=1}^k -\gamma (\bm{r}_{Taylor}^{(m)} + \bm{r}_{sub}^{(m)})$. Taking the norm:
\begin{equation}
    \|\bm{\mathcal{E}}_{3rd}\|_2 \le \gamma \sum_{m=1}^k \left( \left(\frac{M_3}{6} + \frac{M_3 L_1}{2}\right) (m-1)^3 \gamma^3 + \frac{M_3 L_1^2}{8} (m-1)^4 \gamma^4 \right).
\end{equation}
Using summation bounds $\sum_{j=1}^{k-1} j^3 \le \frac{k^4}{4}$ and $\sum_{j=1}^{k-1} j^4 \le \frac{k^5}{5}$, and substituting $L_1 = L/G_{\min}$:
\begin{equation}
    \|\bm{\mathcal{E}}_{3rd}\|_2 \le \left(\frac{M_3}{24} + \frac{M_3 L}{8 G_{\min}}\right) k^4 \gamma^4 + \frac{M_3 L^2}{40 G_{\min}^2} k^5 \gamma^5.
\end{equation}
\hfill $\square$

\newpage

\section{More Experiments Details}

\subsection{Detailed Hyper-parameters}
\label{appendix:hyper_params}

Our hyperparameter configurations strictly follow the baseline established in \citet{wen2025fantastic}. To verify the optimality of these settings on our specific pretraining corpus, we conduct a learning rate grid search with a multiplier of 2 (i.e., evaluating $0.5\times$ and $2.0\times$), which confirms that the original configurations remain optimal. For clarity and reproducibility, the key hyperparameters are summarized in \cref{tab:appendix:all_hyper_parameters}. Regarding the learning rate schedule, the main experiments that utilize the Warmup-Stable-Decay (WSD) scheduler employ 1,000 warmup steps and 10,000 decay steps following \citet{wen2025fantastic}, while the schedules for various ablation studies are clarified in their respective sections. Across all settings, we maintain a global batch size of 256, an Adam $\beta$ of $(0.9, 0.95)$, an Adam $\epsilon$ of $10^{-10}$, and a gradient clipping norm of 1.0.

\begin{table}[h!]
    \centering
    \caption{Summary of key hyperparameters for the main pretraining experiments.}
    \label{tab:appendix:all_hyper_parameters}
    \setlength{\tabcolsep}{0.7ex} 
    \begin{tabular}{cl ccccc c}
    \toprule
        \textbf{Model Size} & \textbf{Optimizer} & \textbf{Outer LR} & \textbf{Inner LR ($\gamma$)} & \textbf{Chinchilla} & \textbf{Tokens (B)} & \textbf{Weight Decay} & \textbf{Reference} \\
    \midrule
        \multirow{2}{*}{1B} & Adam & 0.002 & - & $4\times$ & 50 & 0.2 & \multirow{2}{*}{\cref{sec:discussion:scaling:num_params}} \\
        & Nexus & 0.002 & 0.04 & $4\times$ & 50 & 0.2 & \\
    \midrule
        \multirow{3}{*}{3B} & Adam & 0.001 & - & $2\times$ & 110 & 0.2 & \multirow{3}{*}{\begin{tabular}{c} \cref{sec:exp:main_result,sec:exp:data_mixture} \\ \cref{sec:discussion:scaling:tokens,sec:exp:nvidia_nemotron} \end{tabular}} \\
        & Muon & 0.001 & - & $2\times$ & 110 & 0.1 & \\
        & Nexus & 0.001 & 0.1 & $2\times$ & 110 & 0.2 & \\
    \bottomrule
    \end{tabular}
\end{table}

Compared to the base optimizers, Nexus introduces only one additional hyperparameter: the inner-loop learning rate $\gamma$. We tune $\gamma$ based on a straightforward yet grounded empirical criterion: maximizing the regularization strength while strictly operating within the "same pretraining loss" regime. Specifically, we select the largest value of $\gamma$ that incurs no discernible degradation in pretraining loss convergence, thereby maximizing the effect of the Nexus regularizer.

\subsection{Detailed Results for Data Mixture}

\begin{table*}[ht!]
    \centering
    \caption{\textbf{Results on varying data mixtures} (3B models). Hyperparameters follow \cref{sec:exp:setting,sec:exp:main_result}. As the proportion of math data increases (10\% $\to$ 70\%), the relative performance gains of Nexus on math benchmarks gradually diminish, whereas its advantages on other domains (General, Reasoning) progressively expand. This suggests Nexus boosts the sample-sparse or harder-to-learn domains in the mixture.}
    \label{tab:exp:d803_data_mixture}
    
    \setlength{\tabcolsep}{0.35ex} 
    \resizebox{\textwidth}{!}{
        \begin{tabular}{lll cc c ccc cc cc c}
            \toprule
            \multirow{2}{*}{\textbf{Data}} & \multirow{2}{*}{\textbf{Optim.}} & \multirow{2}{*}{\textbf{Metric}} & 
            \multicolumn{2}{c}{\textbf{Loss Metrics} ($\downarrow$)} & 
            \multicolumn{1}{c}{\textbf{Gen.}} & 
            \multicolumn{3}{c}{\textbf{Reasoning}} & 
            \multicolumn{2}{c}{\textbf{Math}} & 
            \multicolumn{2}{c}{\textbf{Code}} &
            \multicolumn{1}{c}{\textbf{Avg.}} \\
            \cmidrule(lr){4-5} \cmidrule(lr){6-6} \cmidrule(lr){7-9} \cmidrule(lr){10-11} \cmidrule(lr){12-13} \cmidrule(lr){14-14}
             & & & Pretrain. & OOD & MMLU & GPQA & GPQA-D & BBH & GSM8k & MATH & HumanEval & MBPP & All \\
            \midrule
            
            \multirow{5}{*}{Math10} 
              & \multirow{2}{*}{AdamW} & Acc. ($\uparrow$) & \multirow{2}{*}{1.606} & \multirow{2}{*}{1.302} & 47.8 & \textbf{32.8} & 22.6 & 36.6 & 44.0 & 32.0 & 43.0 & 38.0 & 37.1 \\
              & & Loss ($\downarrow$) & & & 2.265 & 2.005 & 1.910 & 1.534 & 1.259 & 1.054 & 1.116 & 1.922 & 1.633 \\
              \cmidrule{2-14}
              & \multirow{2}{*}{Nexus} & Acc. ($\uparrow$) & \multirow{2}{*}{1.602} & \multirow{2}{*}{\textbf{1.290}} & 48.9 & 29.6 & 23.4 & 36.6 & \textbf{59.0} & \textbf{40.0} & \textbf{47.0} & 38.0 & \textbf{40.3} \\
              & & Loss ($\downarrow$) & & & \textbf{2.179} & \textbf{1.981} & \textbf{1.881} & \textbf{1.504} & \textbf{1.227} & \textbf{1.026} & \textbf{1.086} & 1.921 & \textbf{1.601} \\
              \cmidrule{2-14}
              & \textit{Improv.} & Loss ($\uparrow$) & +0.004 & \textbf{+0.012} & \textbf{+0.086} & \textbf{+0.024} & \textbf{+0.029} & \textbf{+0.030} & \textbf{+0.032} & \textbf{+0.028} & \textbf{+0.030} & +0.001 & \textbf{+0.032} \\
            \midrule
            
            \multirow{5}{*}{Math40} 
              & \multirow{2}{*}{AdamW} & Acc. ($\uparrow$) & \multirow{2}{*}{1.336} & \multirow{2}{*}{1.330} & 47.8 & 29.6 & 22.6 & 38.1 & 64.0 & 44.0 & 41.0 & 38.0 & 40.6 \\
              & & Loss ($\downarrow$) & & & 2.210 & 1.989 & 1.891 & 1.522 & 1.171 & 0.969 & 1.144 & 1.976 & 1.609 \\
              \cmidrule{2-14}
              & \multirow{2}{*}{Nexus} & Acc. ($\uparrow$) & \multirow{2}{*}{1.339} & \multirow{2}{*}{1.331} & \textbf{51.2} & \textbf{33.5} & \textbf{27.3} & \textbf{41.1} & \textbf{70.0} & 43.0 & \textbf{45.0} & \textbf{44.5} & \textbf{44.4} \\
              & & Loss ($\downarrow$) & & & \textbf{2.182} & 1.990 & 1.889 & \textbf{1.511} & \textbf{1.117} & \textbf{0.929} & \textbf{1.132} & \textbf{1.876} & \textbf{1.578} \\
              \cmidrule{2-14}
              & \textit{Improv.} & Loss ($\uparrow$) & -0.003 & -0.001 & \textbf{+0.028} & -0.001 & +0.002 & \textbf{+0.011} & \textbf{+0.054} & \textbf{+0.040} & \textbf{+0.012} & \textbf{+0.100} & \textbf{+0.031} \\
            \midrule
            
            \multirow{5}{*}{Math70} 
              & \multirow{2}{*}{AdamW} & Acc. ($\uparrow$) & \multirow{2}{*}{1.033} & \multirow{2}{*}{\textbf{1.399}} & 44.0 & 27.3 & 23.4 & 41.1 & 77.0 & 45.0 & 38.0 & 38.0 & 41.7 \\
              & & Loss ($\downarrow$) & & & 2.252 & 2.025 & 1.923 & 1.541 & 1.111 & 0.923 & 1.178 & 1.897 & 1.606 \\
              \cmidrule{2-14}
              & \multirow{2}{*}{Nexus} & Acc. ($\uparrow$) & \multirow{2}{*}{1.040} & \multirow{2}{*}{1.409} & \textbf{49.8} & \textbf{30.4} & 23.4 & 42.5 & 76.0 & \textbf{52.0} & \textbf{41.0} & 38.0 & \textbf{44.1} \\
              & & Loss ($\downarrow$) & & & \textbf{2.221} & 2.037 & 1.936 & 1.548 & \textbf{1.082} & \textbf{0.911} & 1.176 & \textbf{1.872} & 1.598 \\
              \cmidrule{2-14}
              & \textit{Improv.} & Loss ($\uparrow$) & -0.007 & -0.010 & \textbf{+0.031} & -0.012 & -0.013 & -0.007 & \textbf{+0.029} & \textbf{+0.012} & +0.002 & \textbf{+0.025} & +0.008 \\
            
            \bottomrule
        \end{tabular}
    }
\end{table*}

As shown in \cref{tab:exp:d803_data_mixture}, we observe a dynamic trade-off mechanism:
\begin{itemize}
    \item \textbf{In the sample-sparse regime (Math10):} Where math data is scarce, the baseline optimizer struggles to generalize on reasoning tasks. Nexus provides the most significant gains here (e.g., \textbf{+15.0} on GSM8k), effectively "mining" the rare training signals to build robust reasoning capabilities.
    \item \textbf{In the sample-dense regime (Math70):} As math data becomes abundant, the baseline catches up on math benchmarks. However, Nexus automatically shifts its advantage to the \textit{now-relative-minority} domains. It significantly boosts General Knowledge (MMLU: \textbf{+5.8}) and broad Reasoning (GPQA: \textbf{+3.1}) compared to the baseline, which begins to suffer from domain dominance.
    \item \textbf{Lower sensitivity to mixture shifts:} Nexus also demonstrates higher stability against drastic changes in data mixture. When shifting from a math-heavy (Math70) to a math-sparse (Math10) mixture, the performance variance of Nexus is significantly smaller than that of the baseline. For instance, while the baseline's GSM8k score drops precipitously by \textbf{33.0 points} (from 77.0 to 44.0), Nexus mitigates this degradation, dropping only \textbf{17.0 points} (from 76.0 to 59.0). Similarly, on MMLU, while the baseline fluctuates by \textbf{3.8 points}, Nexus remains highly stable with a variation of less than \textbf{1.0 point} (49.8 vs. 48.9), demonstrating its stability against data mixture changes.
\end{itemize}
This suggests that Nexus reduces sensitivity to manual data mixing ratios, acting as an automatic balancer that prioritizes representations for the most under-optimized tasks in the mixture.

\subsection{Experiments on a Public Dataset}
\label{sec:exp:nvidia_nemotron}

\textbf{Motivation.} While our primary analyses utilize strictly cleaned data to avoid confounding factors, many popular open-source pretraining datasets inevitably suffer from data contamination, inadvertently including benchmark training sets (e.g., GSM8k). We evaluate Nexus on a public dataset from \citet{basant2025nvidia_nemotron} to investigate whether its consensus-seeking mechanism remains robust and mitigates shortcut over-memorization in the presence of such noisy, contaminated signals.

\textbf{Settings.} We train the 1B and 3B models on a public dataset \citep{basant2025nvidia_nemotron}. All other training configurations, including model architectures and base optimizer hyperparameters, are kept strictly identical to the main experiments detailed in \cref{sec:exp:main_result}.

\begin{table*}[h!]
    \centering
    \caption{\textbf{Results on a public pretraining dataset~\cite{basant2025nvidia_nemotron}.} The Adam baseline exhibits artificial performance inflation on leaked benchmarks. In contrast, Nexus effectively resists shortcut over-memorization, successfully reallocating model capacity to uncontaminated tasks and achieving superior overall OOD generalization.}
    \label{tab:exp:nvidia_nemotroncc_main}
    
    \setlength{\tabcolsep}{0.35ex} 
    \resizebox{\textwidth}{!}{
        \begin{tabular}{lll cc c ccc cc cc c}
            \toprule
            \multirow{2}{*}{\textbf{Model}} & \multirow{2}{*}{\textbf{Optim.}} & \multirow{2}{*}{\textbf{Metric}} & 
            \multicolumn{2}{c}{\textbf{Loss Metrics} ($\downarrow$)} & 
            \multicolumn{1}{c}{\textbf{Gen.}} & 
            \multicolumn{3}{c}{\textbf{Reasoning}} & 
            \multicolumn{2}{c}{\textbf{Math}} & 
            \multicolumn{2}{c}{\textbf{Code}} &
            \multicolumn{1}{c}{\textbf{Avg.}} \\
            \cmidrule(lr){4-5} \cmidrule(lr){6-6} \cmidrule(lr){7-9} \cmidrule(lr){10-11} \cmidrule(lr){12-13} \cmidrule(lr){14-14}
             & & & Pretrain. & OOD & MMLU & GPQA & GPQA-D & BBH & GSM8k & MATH & HumanEval & MBPP & All \\
            \midrule
            
            \multirow{5}{*}{1B} 
              & \multirow{2}{*}{Adam} & Acc. ($\uparrow$) & \multirow{2}{*}{1.331} & \multirow{2}{*}{1.863} & \textbf{34.2} & 25.8 & 18.8 & 24.8 & 18.0 & 14.0 & 38.0 & 1.0 & 21.8 \\
              & & Loss ($\downarrow$) & & & 2.552 & 2.280 & 2.191 & 1.700 & \textbf{1.708} & 1.346 & 1.324 & 2.991 & 2.011 \\
              \cmidrule{2-14}
              & \multirow{2}{*}{Nexus} & Acc. ($\uparrow$) & \multirow{2}{*}{1.338} & \multirow{2}{*}{\textbf{1.835}} & 31.4 & 25.0 & 18.8 & 23.0 & \textbf{22.0} & 12.0 & \textbf{41.0} & \textbf{15.0} & 23.5 \\
              & & Loss ($\downarrow$) & & & \textbf{2.446} & \textbf{2.261} & \textbf{2.172} & \textbf{1.689} & 1.749 & \textbf{1.325} & 1.325 & \textbf{2.908} & \textbf{1.984} \\
              \cmidrule{2-14}
              & \textit{Improv.} & Loss ($\uparrow$) & -0.007 & \textbf{+0.028} & \textbf{+0.106} & \textbf{+0.019} & \textbf{+0.019} & \textbf{+0.011} & \textbf{-0.041} & \textbf{+0.021} & -0.001 & \textbf{+0.083} & \textbf{+0.027} \\
            \midrule
            
            \multirow{5}{*}{3B} 
              & \multirow{2}{*}{Adam} & Acc. ($\uparrow$) & \multirow{2}{*}{1.330} & \multirow{2}{*}{1.623} & 55.1 & 25.0 & \textbf{27.3} & \textbf{47.4} & 44.0 & 31.0 & 59.0 & 23.0 & 39.0 \\
              & & Loss ($\downarrow$) & & & \textbf{2.356} & 2.062 & 1.975 & \textbf{1.529} & \textbf{1.519} & 1.121 & 1.205 & 2.732 & 1.812 \\
              \cmidrule{2-14}
              & \multirow{2}{*}{Nexus} & Acc. ($\uparrow$) & \multirow{2}{*}{1.338} & \multirow{2}{*}{\textbf{1.606}} & 56.2 & 25.8 & 23.4 & 44.4 & \textbf{47.0} & 32.0 & \textbf{63.0} & \textbf{38.0} & \textbf{41.2} \\
              & & Loss ($\downarrow$) & & & 2.380 & \textbf{2.047} & \textbf{1.957} & 1.540 & 1.533 & \textbf{1.106} & 1.199 & \textbf{2.530} & \textbf{1.786} \\
              \cmidrule{2-14}
              & \textit{Improv.} & Loss ($\uparrow$) & -0.008 & \textbf{+0.017} & \textbf{-0.024} & \textbf{+0.015} & \textbf{+0.018} & \textbf{-0.011} & \textbf{-0.014} & \textbf{+0.015} & +0.006 & \textbf{+0.202} & \textbf{+0.026} \\
            
            \bottomrule
        \end{tabular}
    }
\end{table*}

\textbf{Results.} As shown in \cref{tab:exp:nvidia_nemotroncc_main}, the Adam baseline exhibits artificial performance inflation on potentially contaminated benchmarks like GSM8k. In contrast, Nexus resists overfitting to these leaked signals and effectively reallocates the model's capacity to uncontaminated, sparse domains. This dynamic balancing is evidenced by the striking improvements on coding tasks—such as MBPP accuracy increasing from 1.0\% to 15.0\% (1B) and 23.0\% to 38.0\% (3B)—ultimately leading to a consistently lower OOD loss across both scales.

\subsection{Detailed Results for Model Size Scaling}

This section provides the detailed experimental results corresponding to the model size scaling analysis discussed in \cref{sec:discussion:scaling:num_params}.

\begin{table*}[h!]
    \centering
   \caption{\textbf{Benchmark Performance across Model Scales.} We compare downstream capabilities for models ranging from 130M to 2.3B parameters. Notably, the relative gains of Nexus over the base optimizer amplify as model capacity increases, with the average benchmark accuracy improvement growing from +0.8\% on the 130M model to +3.2\% on the 2.3B model.}
    \label{tab:exp:d803_scaling}
    
    \setlength{\tabcolsep}{0.35ex} 
    \resizebox{\textwidth}{!}{
        \begin{tabular}{lll cc c ccc cc cc c}
            \toprule
            \multirow{2}{*}{\textbf{Size}} & \multirow{2}{*}{\textbf{Optim.}} & \multirow{2}{*}{\textbf{Metric}} & 
            \multicolumn{2}{c}{\textbf{Loss Metrics} ($\downarrow$)} & 
            \multicolumn{1}{c}{\textbf{Gen.}} & 
            \multicolumn{3}{c}{\textbf{Reasoning}} & 
            \multicolumn{2}{c}{\textbf{Math}} & 
            \multicolumn{2}{c}{\textbf{Code}} &
            \multicolumn{1}{c}{\textbf{Avg.}} \\
            \cmidrule(lr){4-5} \cmidrule(lr){6-6} \cmidrule(lr){7-9} \cmidrule(lr){10-11} \cmidrule(lr){12-13} \cmidrule(lr){14-14}
             & & & Pretrain. & OOD & MMLU & GPQA & GPQA-D & BBH & GSM8k & MATH & HumanEval & MBPP & All \\
            \midrule
            
            \multirow{6}{*}{130M} 
              & \multirow{2}{*}{AdamW} & Acc. ($\uparrow$) & \multirow{2}{*}{2.038} & \multirow{2}{*}{1.559} & 28.0 & 22.6 & \textbf{24.2} & 25.1 & 7.0 & 6.0 & 0.0 & 10.0 & 15.4 \\
              & & Loss ($\downarrow$) & & & 2.555 & 2.438 & 2.338 & 1.793 & 1.612 & 1.360 & 1.407 & 2.230 & 1.967 \\
              \cmidrule{2-14}
              & \multirow{2}{*}{Nexus} & Acc. ($\uparrow$) & \multirow{2}{*}{2.031} & \multirow{2}{*}{1.549} & 27.0 & \textbf{25.7} & 21.8 & \textbf{28.8} & 5.0 & \textbf{11.0} & 0.0 & 10.0 & 16.2 \\
              & & Loss ($\downarrow$) & & & \textbf{2.523} & \textbf{2.414} & \textbf{2.312} & 1.792 & \textbf{1.601} & \textbf{1.330} & \textbf{1.384} & \textbf{2.183} & \textbf{1.942} \\
              \cmidrule{2-14}
              & \multirow{2}{*}{\textit{Improv.}} & Acc. ($\uparrow$) & - & - & -1.0 & \textbf{+3.1} & -2.4 & \textbf{+3.7} & -2.0 & \textbf{+5.0} & 0.0 & 0.0 & +0.8 \\
              & & Loss ($\uparrow$) & +0.007 & +0.010 & \textbf{+0.032} & \textbf{+0.024} & \textbf{+0.026} & +0.001 & \textbf{+0.011} & \textbf{+0.030} & \textbf{+0.023} & \textbf{+0.047} & \textbf{+0.024} \\
            \midrule
            
            \multirow{6}{*}{300M} 
              & \multirow{2}{*}{AdamW} & Acc. ($\uparrow$) & \multirow{2}{*}{1.909} & \multirow{2}{*}{1.474} & \textbf{33.3} & 26.5 & 21.8 & 31.8 & 15.0 & 15.0 & 6.0 & 16.0 & 20.7 \\
              & & Loss ($\downarrow$) & & & 2.495 & 2.296 & 2.196 & 1.704 & 1.471 & 1.255 & 1.322 & 2.141 & 1.860 \\
              \cmidrule{2-14}
              & \multirow{2}{*}{Nexus} & Acc. ($\uparrow$) & \multirow{2}{*}{1.901} & \multirow{2}{*}{1.469} & 30.3 & 27.3 & \textbf{25.0} & 30.7 & 14.0 & 16.0 & \textbf{13.0} & \textbf{21.0} & 22.2 \\
              & & Loss ($\downarrow$) & & & \textbf{2.381} & \textbf{2.278} & \textbf{2.177} & 1.707 & 1.470 & \textbf{1.237} & \textbf{1.298} & \textbf{2.046} & \textbf{1.824} \\
              \cmidrule{2-14}
              & \multirow{2}{*}{\textit{Improv.}} & Acc. ($\uparrow$) & - & - & -3.0 & +0.8 & \textbf{+3.2} & -1.1 & -1.0 & +1.0 & \textbf{+7.0} & \textbf{+5.0} & +1.5 \\
              & & Loss ($\uparrow$) & +0.008 & +0.005 & \textbf{+0.114} & \textbf{+0.018} & \textbf{+0.019} & -0.003 & +0.001 & \textbf{+0.018} & \textbf{+0.024} & \textbf{+0.095} & \textbf{+0.036} \\
            \midrule
            
            \multirow{6}{*}{520M} 
              & \multirow{2}{*}{AdamW} & Acc. ($\uparrow$) & \multirow{2}{*}{1.826} & \multirow{2}{*}{1.433} & 32.1 & 25.0 & 21.8 & 29.6 & 18.0 & 13.0 & 19.0 & 17.0 & 21.9 \\
              & & Loss ($\downarrow$) & & & 2.363 & 2.221 & 2.124 & 1.640 & 1.429 & 1.204 & 1.270 & 2.035 & 1.786 \\
              \cmidrule{2-14}
              & \multirow{2}{*}{Nexus} & Acc. ($\uparrow$) & \multirow{2}{*}{1.826} & \multirow{2}{*}{1.428} & 33.5 & \textbf{30.4} & 21.8 & 29.3 & 20.0 & 13.0 & 19.0 & \textbf{22.0} & 23.6 \\
              & & Loss ($\downarrow$) & & & \textbf{2.316} & \textbf{2.201} & \textbf{2.102} & 1.638 & \textbf{1.396} & \textbf{1.176} & 1.261 & \textbf{1.977} & \textbf{1.758} \\
              \cmidrule{2-14}
              & \multirow{2}{*}{\textit{Improv.}} & Acc. ($\uparrow$) & - & - & +1.4 & \textbf{+5.4} & 0.0 & -0.3 & +2.0 & 0.0 & 0.0 & \textbf{+5.0} & +1.7 \\
              & & Loss ($\uparrow$) & 0.000 & +0.005 & \textbf{+0.047} & \textbf{+0.020} & \textbf{+0.022} & +0.002 & \textbf{+0.033} & \textbf{+0.028} & +0.009 & \textbf{+0.058} & \textbf{+0.027} \\
            \midrule
            
            \multirow{6}{*}{1.2B} 
              & \multirow{2}{*}{AdamW} & Acc. ($\uparrow$) & \multirow{2}{*}{1.714} & \multirow{2}{*}{1.364} & \textbf{44.4} & 22.6 & 24.2 & 27.4 & 30.0 & 23.0 & 30.0 & 31.0 & 29.1 \\
              & & Loss ($\downarrow$) & & & 2.626 & 2.410 & 2.000 & 1.799 & 1.338 & 1.112 & 1.199 & 2.023 & 1.813 \\
              \cmidrule{2-14}
              & \multirow{2}{*}{Nexus} & Acc. ($\uparrow$) & \multirow{2}{*}{1.707} & \multirow{2}{*}{1.358} & 41.7 & \textbf{25.7} & 23.4 & \textbf{32.9} & \textbf{37.0} & \textbf{28.0} & \textbf{35.0} & 30.0 & \textbf{31.7} \\
              & & Loss ($\downarrow$) & & & \textbf{2.466} & \textbf{2.373} & \textbf{1.984} & 1.792 & \textbf{1.325} & \textbf{1.109} & \textbf{1.179} & \textbf{1.987} & \textbf{1.777} \\
              \cmidrule{2-14}
              & \multirow{2}{*}{\textit{Improv.}} & Acc. ($\uparrow$) & - & - & -2.7 & \textbf{+3.1} & -0.8 & \textbf{+5.5} & \textbf{+7.0} & \textbf{+5.0} & \textbf{+5.0} & -1.0 & \textbf{+2.6} \\
              & & Loss ($\uparrow$) & +0.007 & +0.006 & \textbf{+0.160} & \textbf{+0.037} & \textbf{+0.016} & +0.007 & \textbf{+0.013} & +0.003 & \textbf{+0.020} & \textbf{+0.036} & \textbf{+0.036} \\
            \midrule
            
            \multirow{6}{*}{2.3B} 
              & \multirow{2}{*}{AdamW} & Acc. ($\uparrow$) & \multirow{2}{*}{1.606} & \multirow{2}{*}{1.302} & 47.8 & \textbf{32.8} & 22.6 & 36.6 & 44.0 & 32.0 & 43.0 & 38.0 & 37.1 \\
              & & Loss ($\downarrow$) & & & 2.265 & 2.005 & 1.910 & 1.534 & 1.259 & 1.054 & 1.116 & 1.922 & 1.633 \\
              \cmidrule{2-14}
              & \multirow{2}{*}{Nexus} & Acc. ($\uparrow$) & \multirow{2}{*}{1.602} & \multirow{2}{*}{\textbf{1.290}} & 48.9 & 29.6 & \textbf{23.4} & 36.6 & \textbf{59.0} & \textbf{40.0} & \textbf{47.0} & 38.0 & \textbf{40.3} \\
              & & Loss ($\downarrow$) & & & \textbf{2.179} & \textbf{1.981} & \textbf{1.881} & \textbf{1.504} & \textbf{1.227} & \textbf{1.026} & \textbf{1.086} & 1.921 & \textbf{1.601} \\
              \cmidrule{2-14}
              & \multirow{2}{*}{\textit{Improv.}} & Acc. ($\uparrow$) & - & - & +1.1 & -3.2 & +0.8 & 0.0 & \textbf{+15.0} & \textbf{+8.0} & \textbf{+4.0} & 0.0 & \textbf{+3.2} \\
              & & Loss ($\uparrow$) & +0.004 & \textbf{+0.012} & \textbf{+0.086} & \textbf{+0.024} & \textbf{+0.029} & \textbf{+0.030} & \textbf{+0.032} & \textbf{+0.028} & \textbf{+0.030} & +0.001 & \textbf{+0.032} \\
            
            \bottomrule
        \end{tabular}
    }
\end{table*}

As demonstrated above and analyzed in \cref{sec:discussion:scaling:num_params}, Nexus consistently outperforms the base optimizer across all evaluated model scales, with average benchmark accuracy improvements of +0.8\% (130M), +1.5\% (300M), +1.7\% (520M), +2.6\% (1.2B), and +3.2\% (2.3B).



\newpage
\subsection{Experiments on Muon Optimizers}

This section provides the detailed experimental results discussed in \cref{sec:exp:main_result,sec:discussion:implicit_bias_of_muon}.

\begin{table*}[h!]
    \centering
    \caption{\textbf{Comparison with Muon Optimizer on 3B Models.} As shown, Muon improves downstream performance by decreasing the pretraining loss. While Nexus achieves nearly the same pretraining loss as AdamW, it achieves comparable performance to Muon on downstream tasks.}
    \label{tab:exp:d803:muon}
    \setlength{\tabcolsep}{0.35ex} 
    \resizebox{\textwidth}{!}{
        \begin{tabular}{ll cc c ccc cc cc c}
            \toprule
            \multirow{2}{*}{\textbf{Optim.}} & \multirow{2}{*}{\textbf{Metric}} & 
            \multicolumn{2}{c}{\textbf{Loss Metrics} ($\downarrow$)} & 
            \multicolumn{1}{c}{\textbf{Gen.}} & 
            \multicolumn{3}{c}{\textbf{Reasoning}} & 
            \multicolumn{2}{c}{\textbf{Math}} & 
            \multicolumn{2}{c}{\textbf{Code}} &
            \multicolumn{1}{c}{\textbf{Avg.}} \\
            \cmidrule(lr){3-4} \cmidrule(lr){5-5} \cmidrule(lr){6-8} \cmidrule(lr){9-10} \cmidrule(lr){11-12} \cmidrule(lr){13-13}
             & & Pretrain. & OOD & MMLU & GPQA & GPQA-D & BBH & GSM8k & MATH & HumanEval & MBPP & All \\
            \midrule
            
            \multirow{2}{*}{AdamW} & Acc. ($\uparrow$) & \multirow{2}{*}{1.606} & \multirow{2}{*}{1.302} & 47.8 & 32.8 & 22.6 & 36.6 & 44.0 & 32.0 & 43.0 & 38.0 & 37.1 \\
             & Loss ($\downarrow$) & & & 2.265 & 2.005 & 1.910 & 1.534 & 1.259 & 1.054 & 1.116 & 1.922 & 1.633 \\
            \midrule
            
            \multirow{2}{*}{Adam+Nexus} & Acc. ($\uparrow$) & \multirow{2}{*}{1.602} & \multirow{2}{*}{\textbf{1.290}} & 48.9 & 29.6 & 23.4 & 36.6 & \textbf{59.0} & \textbf{40.0} & \textbf{47.0} & 38.0 & \textbf{40.3} \\
             & Loss ($\downarrow$) & & & \textbf{2.179} & \textbf{1.981} & \textbf{1.881} & \textbf{1.504} & \textbf{1.227} & \textbf{1.026} & \textbf{1.086} & 1.921 & \textbf{1.601} \\
            \cmidrule{2-13}
            \multirow{2}{*}{\textit{(- AdamW)}} & Acc. ($\uparrow$) & - & - & +1.1 & -3.2 & +0.8 & 0.0 & \textbf{+15.0} & \textbf{+8.0} & \textbf{+4.0} & 0.0 & \textbf{+3.2} \\
             & Loss ($\uparrow$) & +0.004 & \textbf{+0.012} & \textbf{+0.086} & \textbf{+0.024} & \textbf{+0.029} & \textbf{+0.030} & \textbf{+0.032} & \textbf{+0.028} & \textbf{+0.030} & +0.001 & \textbf{+0.032} \\
            \midrule
            
            \multirow{2}{*}{Muon} & Acc. ($\uparrow$) & \multirow{2}{*}{1.577} & \multirow{2}{*}{1.285} & 49.8 & 32.0 & 24.2 & 41.9 & 46.0 & 38.0 & 40.0 & 43.0 & 39.4 \\
             & Loss ($\downarrow$) & & & 2.188 & 1.968 & 1.874 & 1.502 & 1.236 & 1.035 & 1.091 & 1.951 & 1.606 \\
            \cmidrule{2-13}
            \multirow{2}{*}{\textit{(- AdamW)}} & Acc. ($\uparrow$) & - & - & +2.0 & -0.8 & +1.6 & +5.3 & +2.0 & +6.0 & -3.0 & +5.0 & +2.3 \\
             & Loss ($\uparrow$) & +0.029 & +0.017 & +0.077 & +0.037 & +0.036 & +0.032 & +0.023 & +0.019 & +0.025 & -0.029 & +0.027 \\
            
            
            \bottomrule
        \end{tabular}
    }
\end{table*}

As demonstrated above, Nexus achieves comparable downstream performance to Muon, despite maintaining a pretraining loss that is nearly identical to the AdamW baseline. These results explicitly indicate that while Muon improves downstream performance primarily by reaching a significantly lower pretraining loss, the gains from Nexus stem directly from its favorable implicit bias.

\subsection{Downstream SFT}

\textbf{Motivation and Settings.} To verify whether the performance gains of Nexus are merely a result of "pre-consuming" the potential improvements of the SFT phase in advance, we evaluate the supervised fine-tuning (SFT) performance of our checkpoints. We use an SFT dataset similar to \cite{seed2025seed-oss} and branch off from the 100,000-step checkpoints of the experiments in \cref{sec:exp:lr_schedule}. Training is conducted on the SFT data with a learning rate of $2 \times 10^{-5}$ and a global batch size of 256, which matches the pretraining learning rate and batch size at the 100,000-step mark. This setup can be viewed as continuing the learning rate decay on the SFT dataset, consistent with standard practices~\cite{qwen3technicalreport,yang2024qwen25,seed2025seed-oss}.

\begin{table*}[h!]
    \centering
    \caption{\textbf{Downstream SFT Results.} As shown, Nexus does not prematurely compromise the model's SFT capabilities; on the contrary, it continues to outperform AdamW after SFT.}
    \label{tab:exp:sft_results}
    
    \setlength{\tabcolsep}{0.35ex} 
    \resizebox{\textwidth}{!}{
        \begin{tabular}{lll cc c ccc cc cc c}
            \toprule
            \multirow{2}{*}{\textbf{Phase}} & \multirow{2}{*}{\textbf{Optim.}} & \multirow{2}{*}{\textbf{Metric}} & 
            \multicolumn{2}{c}{\textbf{Loss Metrics} ($\downarrow$)} & 
            \multicolumn{1}{c}{\textbf{Gen.}} & 
            \multicolumn{3}{c}{\textbf{Reasoning}} & 
            \multicolumn{2}{c}{\textbf{Math}} & 
            \multicolumn{2}{c}{\textbf{Code}} &
            \multicolumn{1}{c}{\textbf{Avg.}} \\
            \cmidrule(lr){4-5} \cmidrule(lr){6-6} \cmidrule(lr){7-9} \cmidrule(lr){10-11} \cmidrule(lr){12-13} \cmidrule(lr){14-14}
             & & & SFT & OOD & MMLU & GPQA & GPQA-D & BBH & GSM8k & MATH & HumanEval & MBPP & All \\
            \midrule
            
            \multirow{5}{*}{\textbf{Pre-SFT}} 
              & \multirow{2}{*}{AdamW} & Acc. ($\uparrow$) & \multirow{2}{*}{1.655} & \multirow{2}{*}{1.263} & 52.7 & \textbf{28.9} & 25.8 & \textbf{41.1} & 54.0 & 37.0 & \textbf{50.0} & 42.0 & \textbf{41.4} \\
              & & Loss ($\downarrow$) & & & 2.221 & 1.938 & 1.842 & 1.484 & 1.233 & 1.031 & 1.053 & 1.859 & 1.583 \\
              \cmidrule{2-14}
              & \multirow{2}{*}{Nexus} & Acc. ($\uparrow$) & \multirow{2}{*}{1.647} & \multirow{2}{*}{1.258} & 50.9 & 22.7 & 22.7 & 35.9 & \textbf{57.0} & \textbf{40.0} & 48.0 & 42.0 & 39.9 \\
              & & Loss ($\downarrow$) & & & \textbf{2.138} & 1.929 & 1.838 & 1.489 & \textbf{1.179} & \textbf{1.006} & \textbf{1.030} & \textbf{1.803} & \textbf{1.552} \\
              \cmidrule{2-14}
              & \multicolumn{2}{l}{\textit{Improv.} Loss ($\uparrow$)} & +0.008 & +0.005 & \textbf{+0.083} & +0.009 & +0.004 & -0.005 & \textbf{+0.054} & \textbf{+0.025} & \textbf{+0.023} & \textbf{+0.056} & \textbf{+0.031} \\
            \midrule
            
            \multirow{6}{*}{\textbf{Post-SFT}} 
              & \multirow{2}{*}{AdamW} & Acc. ($\uparrow$) & \multirow{2}{*}{1.035} & \multirow{2}{*}{1.278} & 51.4 & 28.9 & \textbf{33.6} & \textbf{45.6} & 58.0 & 28.0 & 40.0 & 40.0 & 40.7 \\
              & & Loss ($\downarrow$) & & & 2.244 & 2.006 & 1.915 & 1.575 & 1.377 & 1.111 & 1.077 & 1.957 & 1.658 \\
              \cmidrule{2-14}
              & \multirow{2}{*}{Nexus} & Acc. ($\uparrow$) & \multirow{2}{*}{1.028} & \multirow{2}{*}{1.274} & \textbf{54.7} & 28.9 & 29.7 & 39.3 & \textbf{62.0} & \textbf{35.0} & \textbf{46.0} & \textbf{46.0} & \textbf{42.7} \\
              & & Loss ($\downarrow$) & & & \textbf{2.220} & \textbf{1.990} & \textbf{1.901} & \textbf{1.551} & \textbf{1.299} & \textbf{1.076} & \textbf{1.060} & 1.952 & \textbf{1.631} \\
              \cmidrule{2-14}
              & \multirow{2}{*}{\textit{Improv.}} & Acc. ($\uparrow$) & - & - & \textbf{+3.3} & 0.0 & -3.9 & -6.3 & \textbf{+4.0} & \textbf{+7.0} & \textbf{+6.0} & \textbf{+6.0} & \textbf{+2.0} \\
              & & Loss ($\uparrow$) & +0.007 & +0.004 & \textbf{+0.024} & \textbf{+0.016} & \textbf{+0.014} & \textbf{+0.024} & \textbf{+0.078} & \textbf{+0.035} & \textbf{+0.017} & +0.005 & \textbf{+0.027} \\
            
            \bottomrule
        \end{tabular}
    }
\end{table*}

\textbf{Nexus still outperforms AdamW after SFT.} As shown in \cref{tab:exp:sft_results}, after supervised fine-tuning, Nexus achieves an average accuracy of 42.7\%, surpassing the AdamW baseline by 2.0\%. Specifically, Nexus outperforms AdamW by 7.0\% on MATH, 6.0\% on HumanEval, and 6.0\% on MBPP. These results indicate that Nexus does not prematurely compromise the model's capacity for downstream alignment.

\textbf{Nexus maintains lower SFT loss than AdamW throughout training.} We observe that for the pre-SFT checkpoints, Nexus already yields a lower loss on the SFT dataset compared to AdamW (1.647 vs. 1.655). This result demonstrates that the geometric properties optimized by Nexus during pretraining translate into better generalization even before any explicit fine-tuning. Furthermore, this lower SFT loss is consistently maintained throughout the entire training process, as evidenced by the post-SFT loss (1.028 for Nexus vs. 1.035 for AdamW). These observations indicate the potential of Nexus for continual training and extended optimization phases.

\subsection{Experiment using SGDM}
\label{appendix:more_exp:sgdm}

\begin{figure}[htbp]
    \centering
    \small
    \begin{tabular}{p{0.46\linewidth} | p{0.48\linewidth}}
    \toprule
    \textbf{Algorithm} Newton-Schulz Iteration & \textbf{Algorithm} RMS-Aligned Update \\
    \midrule
    \vspace{-2ex}
    \begin{verbatim}
def zeropower_via_ns(G, steps=5):
  a, b, c = 3.4445, -4.7750, 2.0315
  X = G / (torch.norm(G) + 1e-7)
  for _ in range(steps):
      A = X @ X.mT
      X = a*X + (b*A + c*A@A) @ X
  return X
    \end{verbatim}
    &
    \vspace{-2ex}
    \begin{verbatim}
def sgd_alignRMS_update(G):
  K = min(G.shape[-2:])
  # Normalize to match Muon scale
  norm = torch.norm(G) + 1e-7
  return G * (K**0.5) / norm
    \end{verbatim}
    \vspace{-3ex} \\
    \bottomrule
    \end{tabular}
    \vspace{1ex}
    \caption{Muon-inspired SGDM. \textbf{Left:} Quintic Newton-Schulz iterations for orthogonalization. \textbf{Right:} Per-tensor RMS normalization ensuring consistent update scale as Muon.}
    \label{fig:appendix:sgdm_pseudo_code}
\end{figure}

\textbf{Motivation.} We also conduct an experiment using SGD with Momentum (SGDM). However, SGDM typically converges slower than AdamW in LLM pretraining~\cite{zhang2024transformers}.

\textbf{Muon-Inspired SGDM.} To this end, we adapt the Muon optimizer \citep{jordan2024muon} to construct a faster SGDM baseline. Specifically, we control the RMS norm of SGDM updates to be the same as the orthogonalized updates in Muon, as shown in \cref{fig:appendix:sgdm_pseudo_code}. This design is inspired by the empirical findings in \citet{liu2025muon}, which suggest that maintaining a consistent per-tensor RMS norm aligns with the principles of Maximal Update Parameterization ($\mu$P). This may stabilize update dynamics and mitigate sensitivity to hyperparameter selection, and typically facilitate acceleration~\cite{qiu2025hyperparameter}. Surprisingly, in our LLM pretraining settings, this significantly accelerates SGDM, reaching speeds competitive with AdamW.

\textbf{Settings.} Due to computational constraints, we only conduct experiments on 1B models, strictly adhering to the experimental settings detailed in \cref{sec:exp:setting}.

\textbf{Results.} On this SGDM baseline, we observe the following results (\cref{appendix:fig:all_logs:sgdm}):

\begin{itemize}
    \item \textbf{Pretraining Acceleration.} Surprisingly, Nexus significantly accelerates pretraining loss minimization on SGDM. While the baseline SGDM is notably slower than AdamW, SGDM equipped with Nexus even surpasses the AdamW baseline. This acceleration may be attributed to the mechanism described in \cref{sec:appendix_convergence_proof}.
    \item \textbf{Robust Implicit Bias.} Since Nexus accelerates pretraining loss, direct comparisons of absolute downstream performance are confounded by the lower pretraining loss. Thus, we analyze the correlation between pretraining and downstream losses (\cref{appendix:fig:all_logs:sgdm}). As shown in \cref{appendix:fig:all_logs:sgdm}, the Nexus-SGDM curve lies consistently below the baseline SGDM, confirming that the favorable implicit bias persists even under different base optimizers.
\end{itemize}

\begin{figure}[t!]
    \centering
    \begin{subfigure}[b]{0.32\linewidth}
        \centering
        \includegraphics[width=\linewidth]{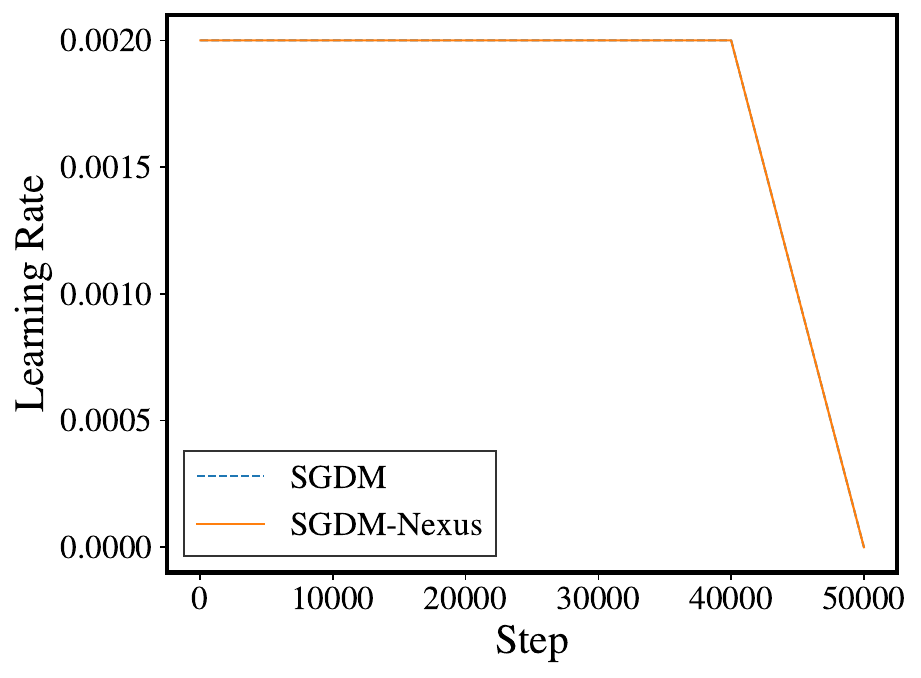}
        \caption{Learning Rate}
    \end{subfigure}\hfill
    \begin{subfigure}[b]{0.32\linewidth}
        \centering
        \includegraphics[width=\linewidth]{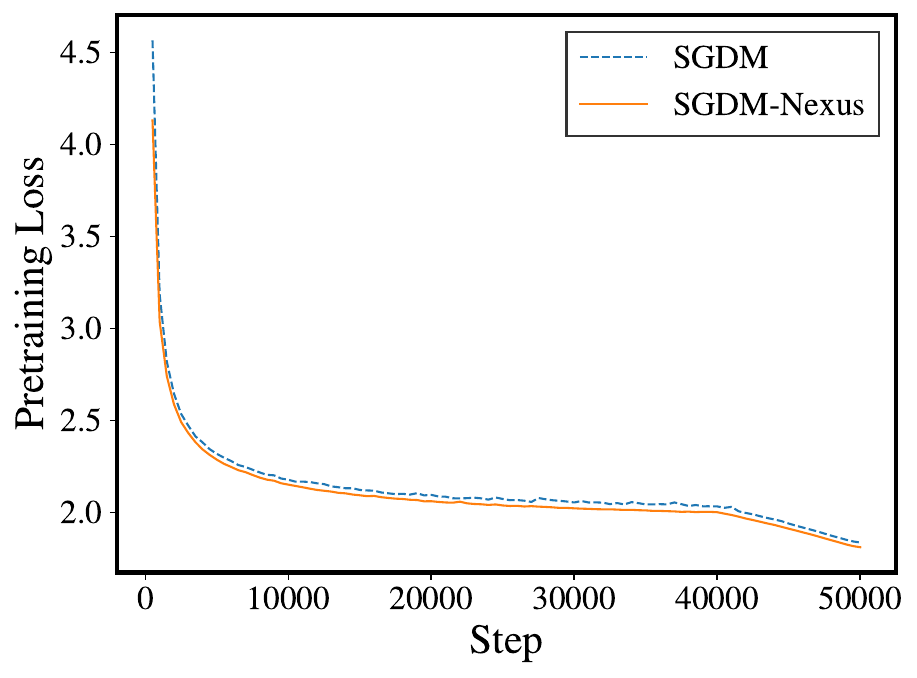}
        \caption{Pretraining Loss}
    \end{subfigure}\hfill
    \begin{subfigure}[b]{0.32\linewidth}
        \centering
        \includegraphics[width=\linewidth]{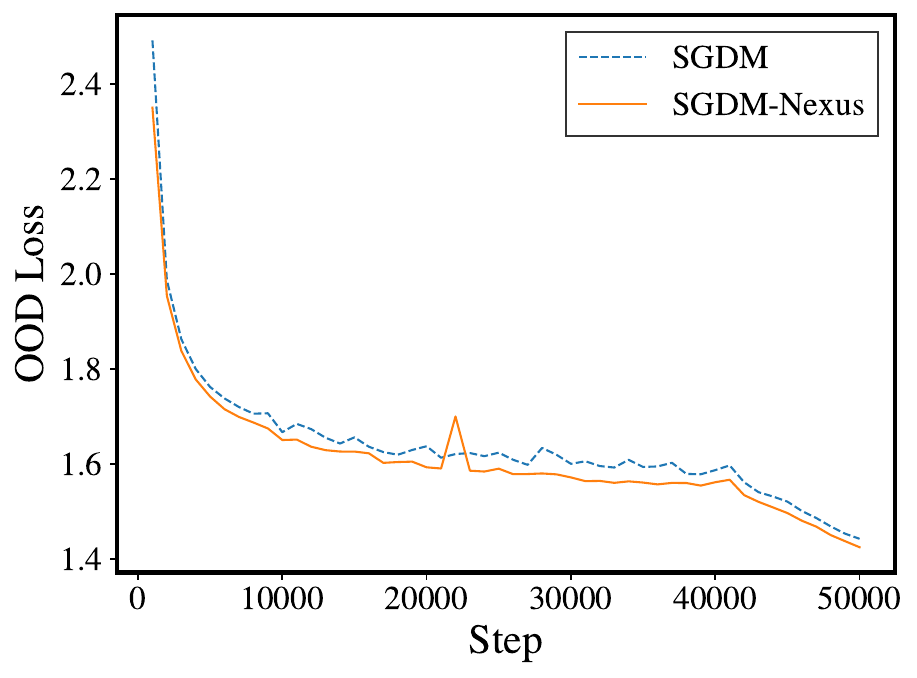}
        \caption{OOD Loss}
    \end{subfigure}
    
    \vspace{1.5ex}
    \begin{subfigure}[b]{0.32\linewidth}
        \centering
        \includegraphics[width=\linewidth]{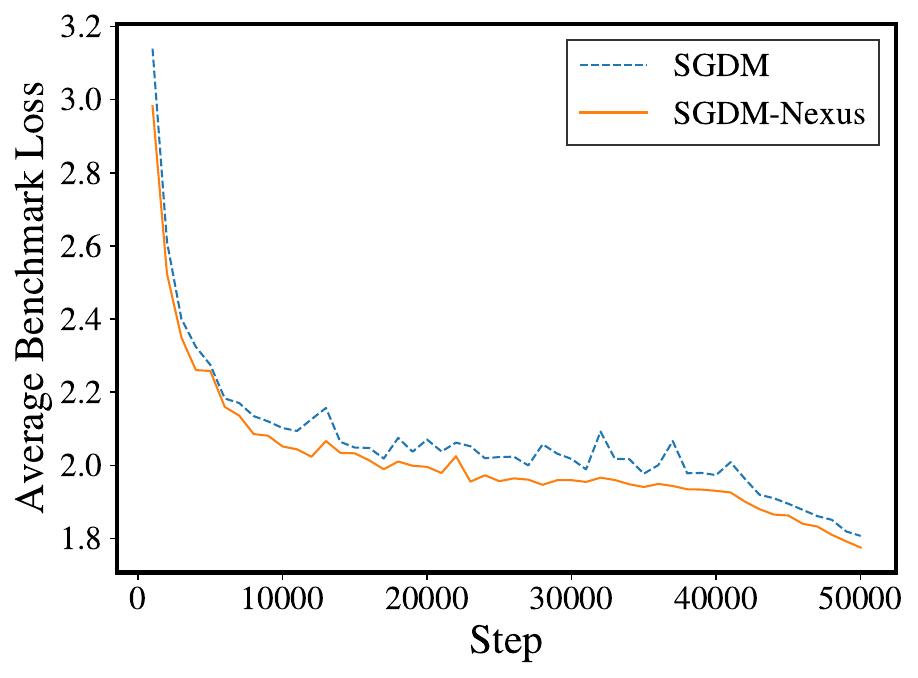}
        \caption{Avg. Benchmark Loss}
    \end{subfigure}\hfill
    \begin{subfigure}[b]{0.32\linewidth}
        \centering
        \includegraphics[width=\linewidth]{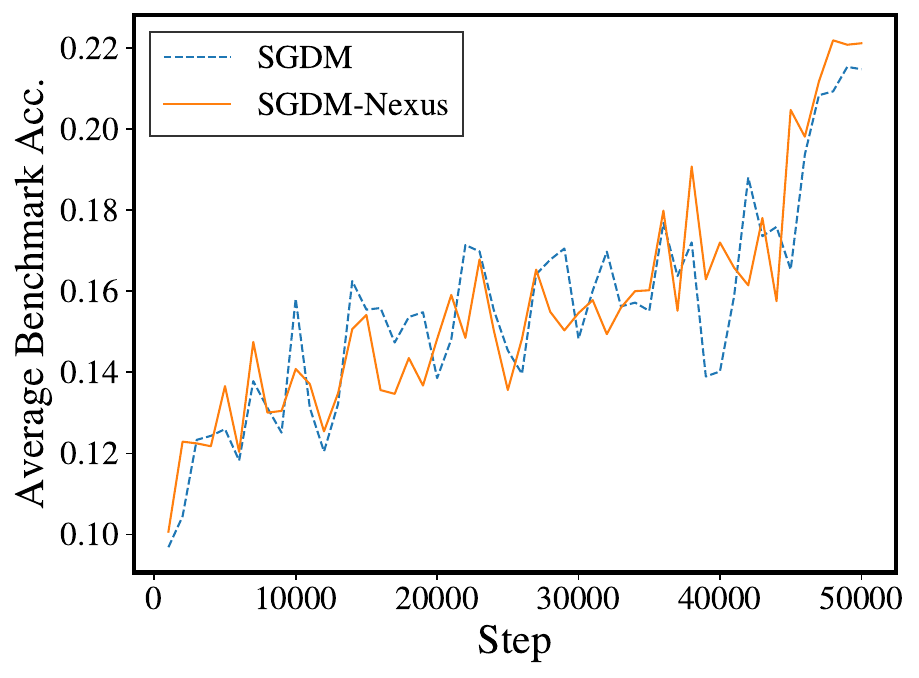}
        \caption{Avg. Benchmark Acc.}
    \end{subfigure}\hfill
    \begin{subfigure}[b]{0.32\linewidth}
        \centering
        \includegraphics[width=\linewidth]{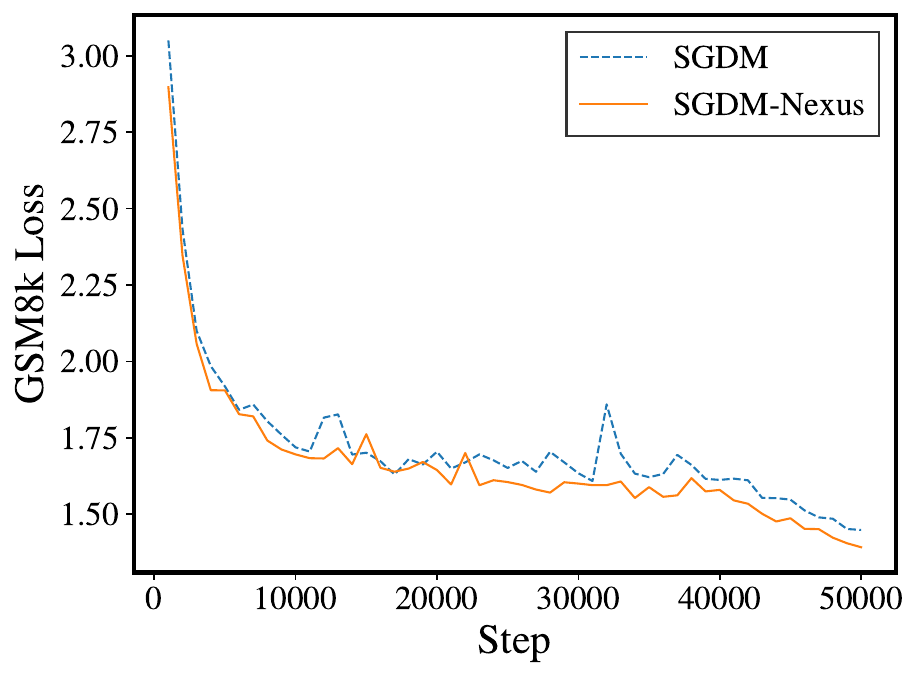}
        \caption{GSM8k Loss}
    \end{subfigure}
    
    \vspace{1.5ex}
    \begin{subfigure}[b]{0.32\linewidth}
        \centering
        \includegraphics[width=\linewidth]{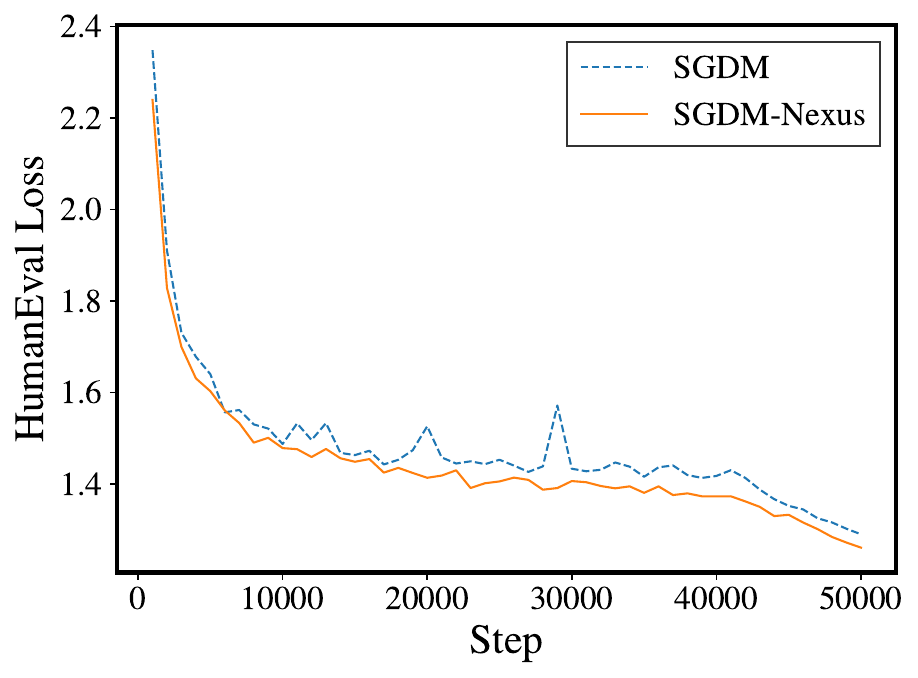}
        \caption{HumanEval Loss}
    \end{subfigure}\hfill
    \begin{subfigure}[b]{0.32\linewidth}
        \centering
        \includegraphics[width=\linewidth]{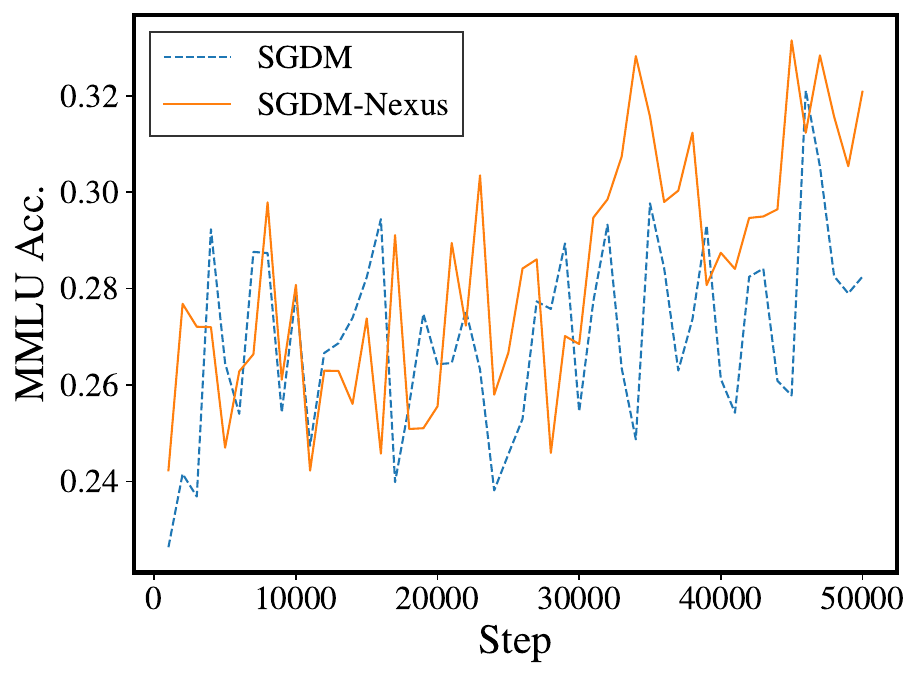}
        \caption{MMLU Acc.}
    \end{subfigure}\hfill
    \begin{subfigure}[b]{0.32\linewidth}
        \centering
        \includegraphics[width=\linewidth]{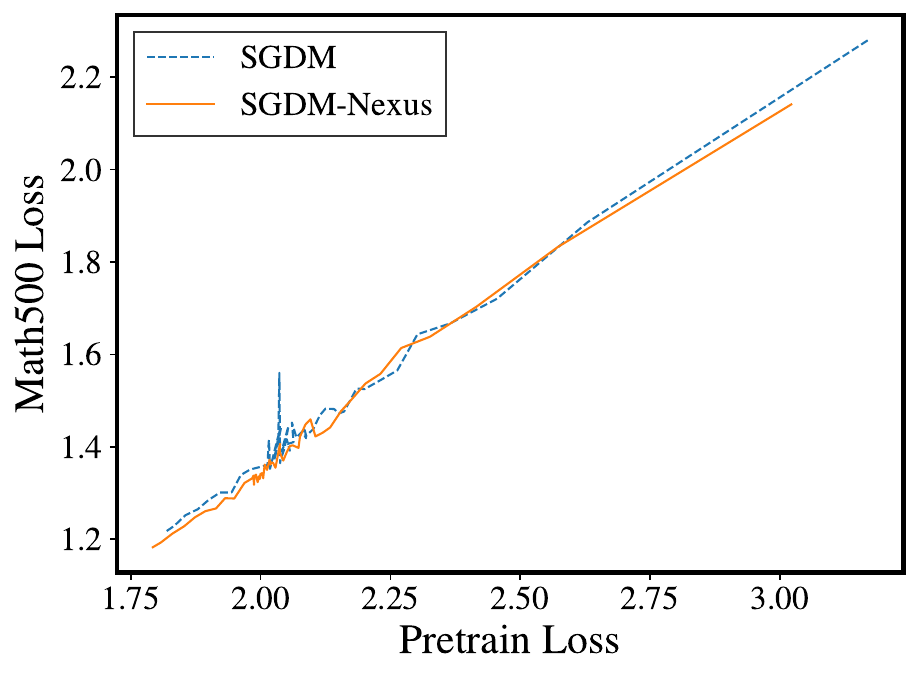}
        \caption{Math500 Correlation}
    \end{subfigure}
    
    \vspace{1.5ex}
    \begin{subfigure}[b]{0.32\linewidth}
        \centering
        \includegraphics[width=\linewidth]{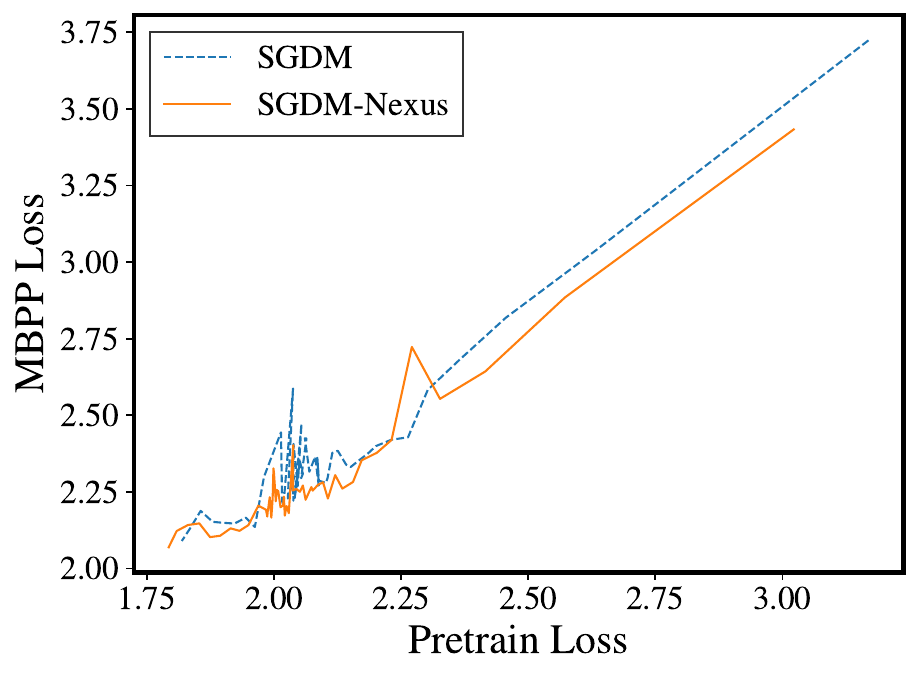}
        \caption{MBPP Correlation}
    \end{subfigure}\hfill
    \begin{subfigure}[b]{0.32\linewidth}
        \centering
        \includegraphics[width=\linewidth]{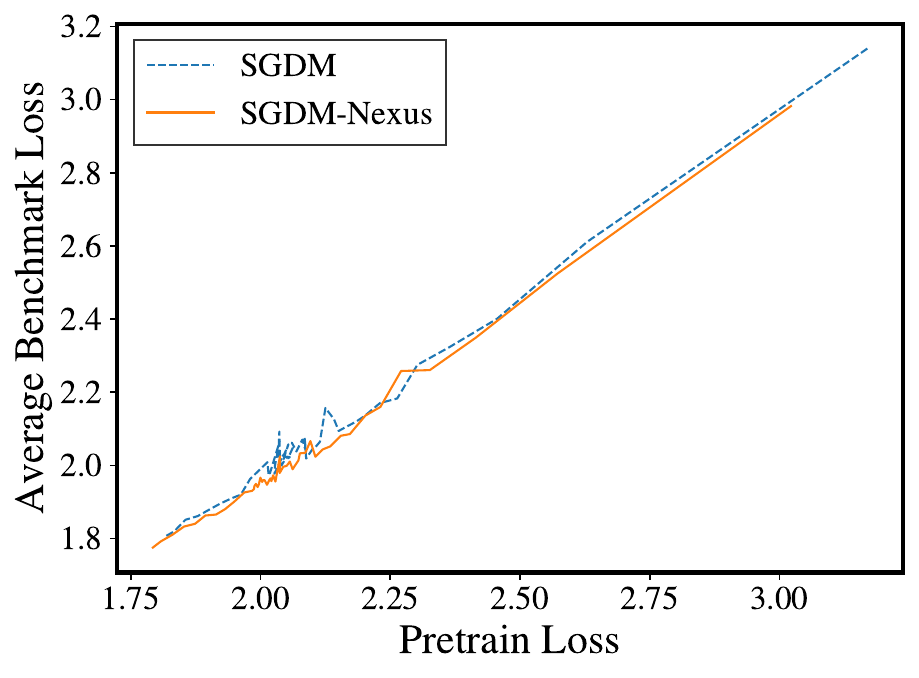}
        \caption{Avg. Loss Correlation}
    \end{subfigure}\hfill
    \begin{subfigure}[b]{0.32\linewidth}
        \centering
        \includegraphics[width=\linewidth]{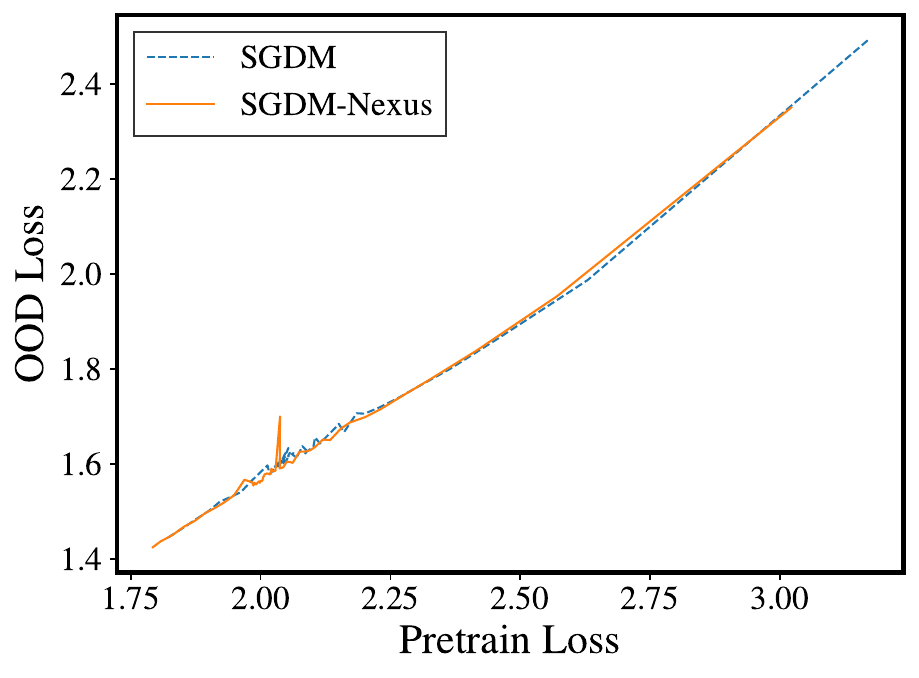}
        \caption{OOD Correlation}
    \end{subfigure}
    \caption{\textbf{Experimental Results using SGDM.} Nexus both gives acceleration and implicit bias.}
    \label{appendix:fig:all_logs:sgdm}
\end{figure}

\newpage
\subsection{Experiment using Smaller Batch Size}

\begin{figure}[t]
    \centering
    \begin{subfigure}[b]{0.32\linewidth}
        \centering
        \includegraphics[width=\linewidth]{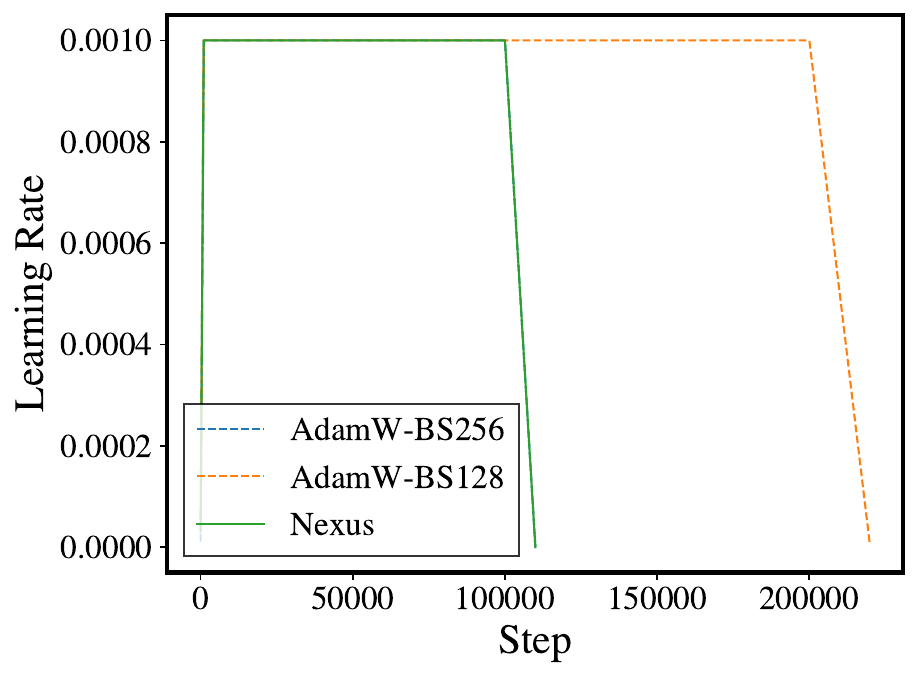}
        \caption{Learning Rate}
    \end{subfigure}\hfill
    \begin{subfigure}[b]{0.32\linewidth}
        \centering
        \includegraphics[width=\linewidth]{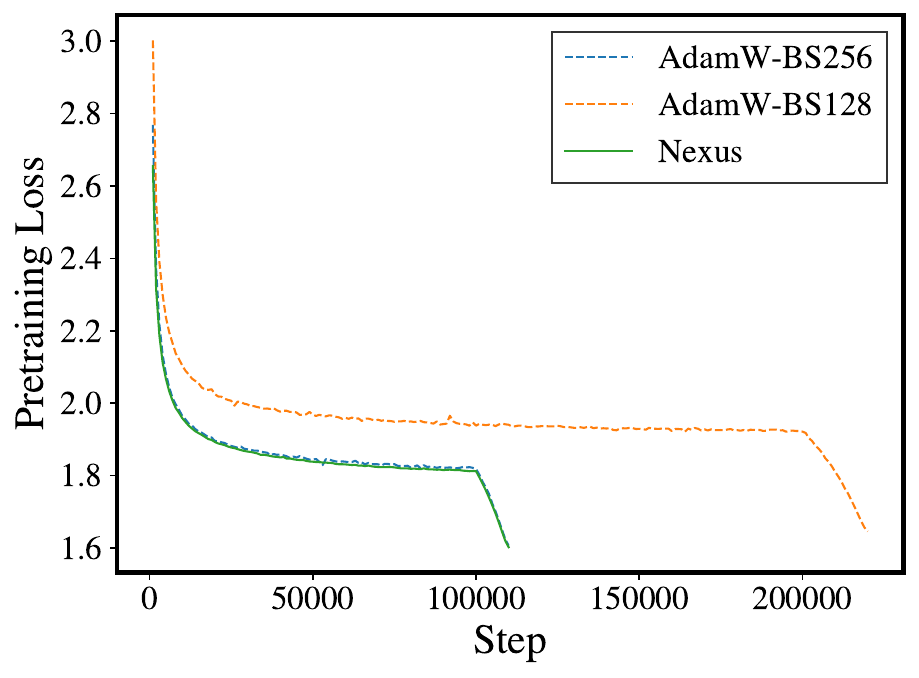}
        \caption{Pretraining Loss}
    \end{subfigure}\hfill
    \begin{subfigure}[b]{0.32\linewidth}
        \centering
        \includegraphics[width=\linewidth]{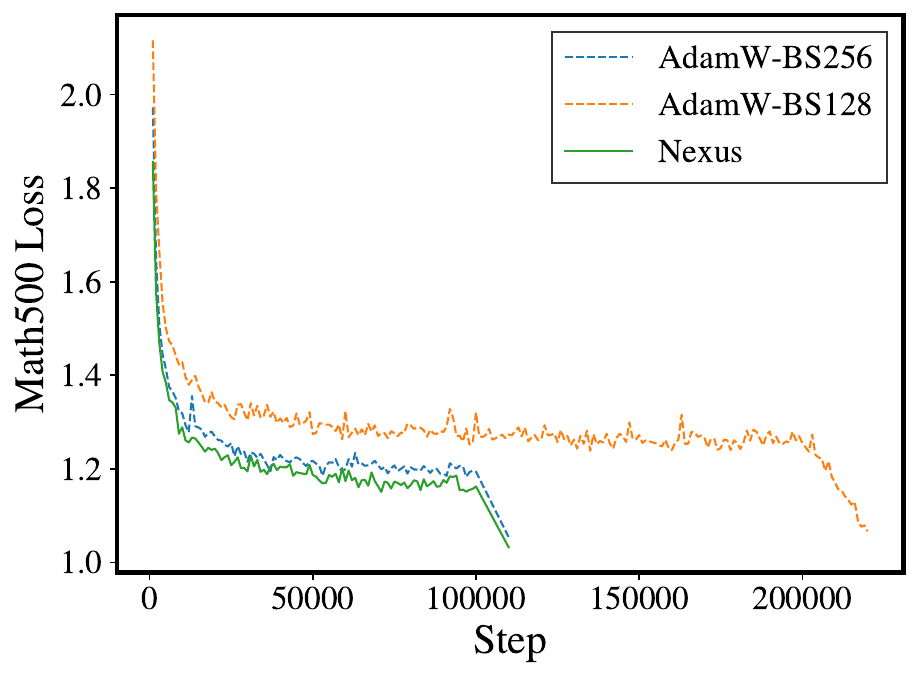}
        \caption{Math500 Loss}
    \end{subfigure}
    
    \vspace{1.5ex}
    \begin{subfigure}[b]{0.32\linewidth}
        \centering
        \includegraphics[width=\linewidth]{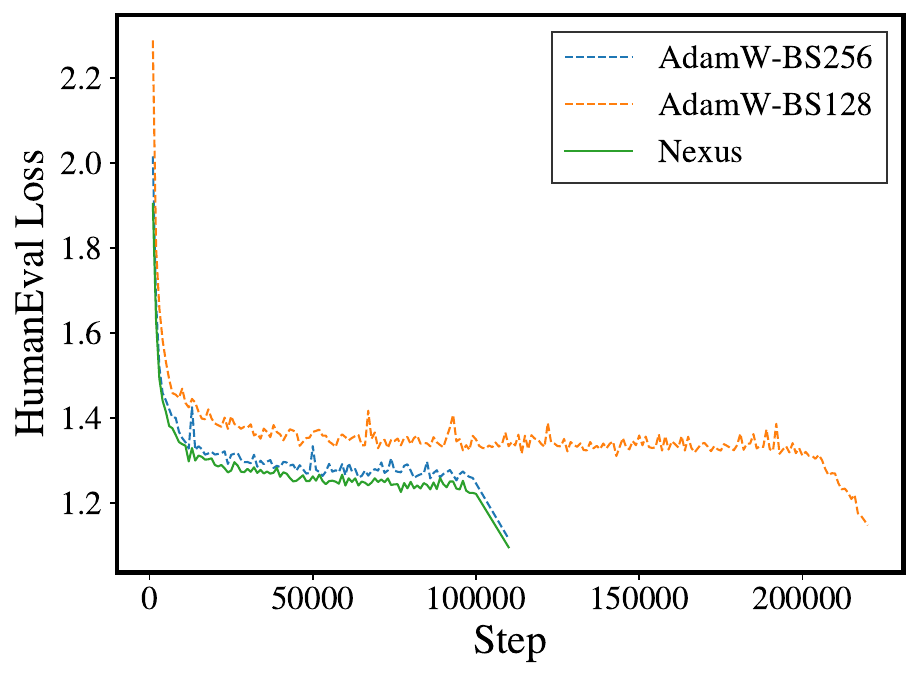}
        \caption{HumanEval Loss}
    \end{subfigure}\hfill
    \begin{subfigure}[b]{0.32\linewidth}
        \centering
        \includegraphics[width=\linewidth]{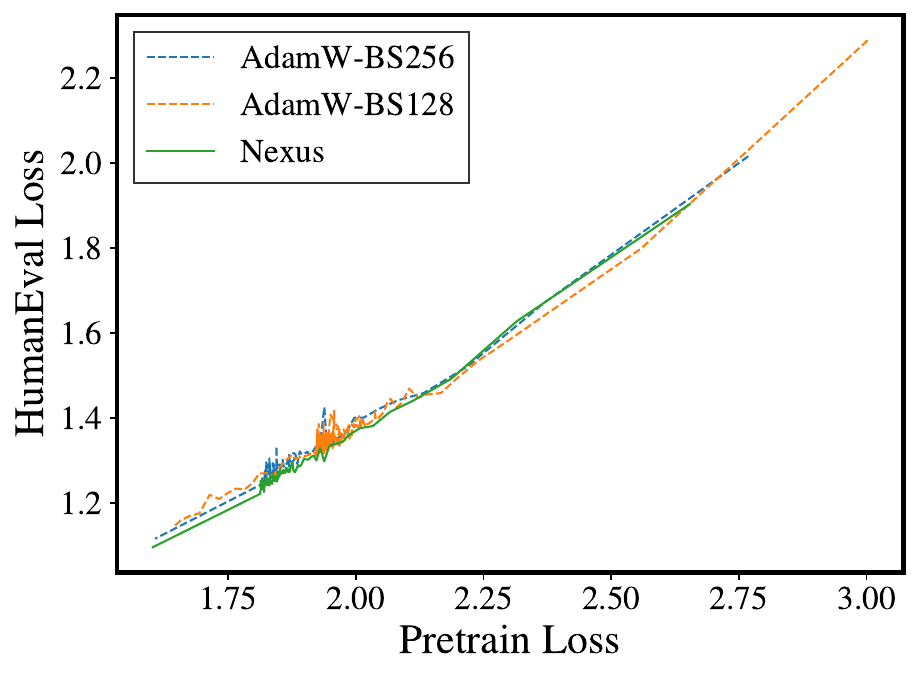}
        \caption{HumanEval Correlation}
    \end{subfigure}\hfill
    \begin{subfigure}[b]{0.32\linewidth}
        \centering
        \includegraphics[width=\linewidth]{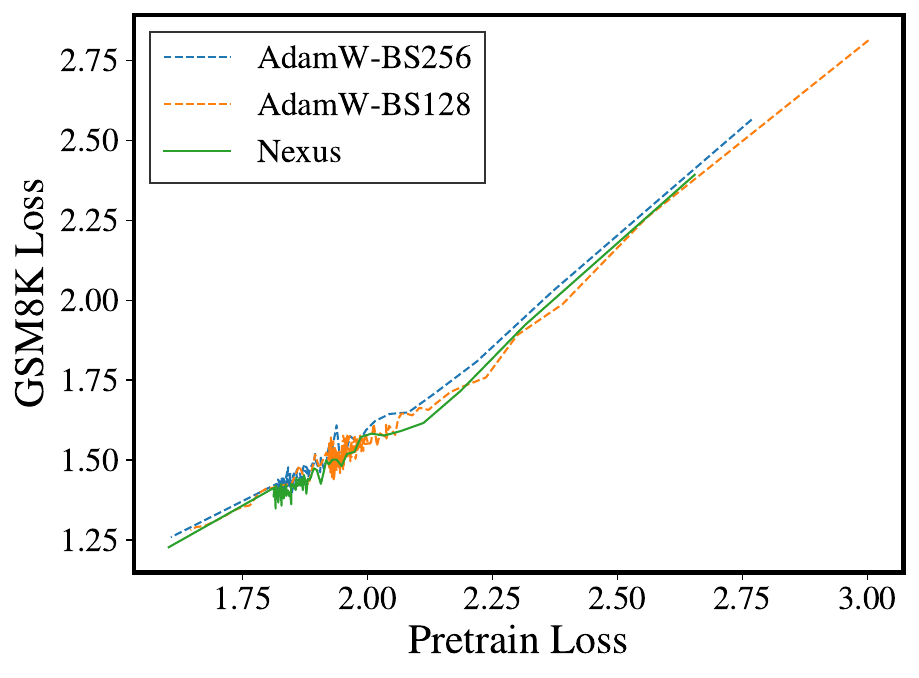}
        \caption{GSM8K Correlation}
    \end{subfigure}
    
    \caption{\textbf{Experimental Results including the BS-128 Baseline.} The standard AdamW and Nexus trajectories are retained from the main trials in \cref{sec:exp:main_result} for direct comparison.}
    \label{appendix:fig:all_logs:bs128}
\end{figure}

\textbf{Motivation.} It is well-established that smaller batch sizes can introduce an implicit bias that enhances generalization \cite{keskar2016large,jastrzkebski2017three,smith2018bayesian,masters2018revisiting}. To distinguish the geometric closeness bias of Nexus from this small-batch effect, we evaluate a baseline using AdamW with a reduced batch size of 128 on the 1B and 3B models. All hyperparameters remain identical to those in \cref{sec:exp:setting}, except for the learning rate schedule, where we double the duration of both the stable and decay phases, as illustrated in \cref{appendix:fig:all_logs:bs128}(a).

\begin{table*}[t]
    \centering
    \caption{\textbf{Results of Small Batch Size Baseline.} Validation losses and downstream capabilities for AdamW trained with a reduced batch size of 128. Compared to the optimal BS-256 baseline reported in the main text (\cref{tab:exp:d803_main_results}), reducing the batch size results in significant performance degradation.}
    \label{tab:appendix:bs128_results}
    \setlength{\tabcolsep}{0.35ex} 
    \resizebox{\textwidth}{!}{
        \begin{tabular}{lcl cc c ccc cc cc c}
            \toprule
            \multirow{2}{*}{\textbf{Model}} & \multirow{2}{*}{\textbf{Configuration}} & \multirow{2}{*}{\textbf{Metric}} & 
            \multicolumn{2}{c}{\textbf{Loss Metrics} ($\downarrow$)} & 
            \multicolumn{1}{c}{\textbf{Gen.}} & 
            \multicolumn{3}{c}{\textbf{Reasoning}} & 
            \multicolumn{2}{c}{\textbf{Math}} & 
            \multicolumn{2}{c}{\textbf{Code}} &
            \multicolumn{1}{c}{\textbf{Avg.}} \\
            \cmidrule(lr){4-5} \cmidrule(lr){6-6} \cmidrule(lr){7-9} \cmidrule(lr){10-11} \cmidrule(lr){12-13} \cmidrule(lr){14-14}
             & & & Pretrain. & OOD & MMLU & GPQA & GPQA-D & BBH & GSM8k & MATH & HumanEval & MBPP & All \\
            \midrule
            
            \multirow{2}{*}{1B} 
              & \multirow{2}{*}{AdamW (BS=128)} & Acc. ($\uparrow$) & \multirow{2}{*}{1.937} & \multirow{2}{*}{1.506} & 29.3 & 25.0 & 20.3 & 23.3 & 8.0 & 5.0 & 11.0 & 14.0 & 17.0 \\
              & & Loss ($\downarrow$) & & & 2.396 & 2.333 & 2.231 & 1.728 & 1.510 & 1.286 & 1.340 & 2.091 & 1.864 \\
            \midrule
            
            \multirow{2}{*}{3B} 
              & \multirow{2}{*}{AdamW (BS=128)} & Acc. ($\uparrow$) & \multirow{2}{*}{1.627} & \multirow{2}{*}{1.326} & 45.4 & 29.7 & 21.9 & 40.7 & 39.0 & 26.0 & 37.0 & 39.0 & 34.8 \\
              & & Loss ($\downarrow$) & & & 2.262 & 2.033 & 1.937 & 1.556 & 1.281 & 1.067 & 1.148 & 1.951 & 1.654 \\
            \bottomrule
        \end{tabular}
    }
\end{table*}

On this baseline, we observe the following:

\textbf{BS-128 is suboptimal compared to BS-256.} As detailed in \cref{sec:exp:setting}, our hyperparameter configurations are strictly aligned with the optimal settings established by \citet{wen2025fantastic}. Deviating from these settings compromises optimization efficiency. As demonstrated in \cref{tab:appendix:bs128_results}, reducing the global batch size from 256 to 128 causes the training speed to severely lag behind. Consequently, the AdamW-BS128 baseline substantially underperforms the standard AdamW-BS256 across both pretraining convergence and downstream benchmark evaluations by a loss margin of more than 0.02.

\textbf{BS-128 introduces implicit bias, but less significantly than Nexus.} As illustrated in the correlation plots of \cref{appendix:fig:all_logs:bs128}, AdamW-BS128 indeed exhibits a favorable implicit bias compared to the standard AdamW, particularly during the early stages of training where the pretraining loss is high. However, in the later stages of training, the correlation curve of BS-128 lies between the standard AdamW and Nexus, indicating that its implicit bias is not as pronounced as that of Nexus. This confirms that the strong implicit bias of Nexus genuinely originates from its explicit geometric regularization, rather than merely mimicking the gradient noise effect of small batch sizes. Furthermore, while Nexus successfully induces this bias while maintaining an identical pretraining loss to the baseline, BS-128 fails to do so (suffering a pretraining loss degradation of approximately 0.02).

\subsection{Ablation Studies on Regularization Term}

\begin{algorithm}[t]
\caption{Controllable Nexus Variant for Ablation Study}
\label{alg:nexus_ablation}
\begin{algorithmic}[1]
\REQUIRE Control coefficient $\lambda \geq 0$
\STATE $\hat{\bm{g}}_t \leftarrow \text{\cref{alg:cwa_standard,alg:cwa_adapted}}(\nabla \mathcal{L}(\bm{\theta}_t))$ \COMMENT{Standard Nexus pseudo gradient}
\STATE $\bm{u}_{\text{ablation}} \leftarrow \hat{\bm{g}}_t - \gamma \lambda \frac{\nabla \mathcal{L}(\bm{\theta}_t)}{\|\nabla \mathcal{L}(\bm{\theta}_t)\|_2 + \epsilon}$ \COMMENT{Decouple optimization and regularization}
\STATE \textbf{Return:} $\bm{u}_{\text{ablation}}$
\end{algorithmic}
\end{algorithm}

\textbf{Motivation.} The standard formulations of Nexus (\cref{alg:cwa_standard,alg:cwa_adapted}) intrinsically couple the optimization of the primary loss and the regularization term at a fixed, implicit ratio of $1 : \gamma \frac{K-1}{4K}$. To systematically decouple and investigate the specific regularization effect of Nexus, we design a controllable variant. As demonstrated in \cref{alg:nexus_ablation}, after obtaining the Nexus pseudo gradient $\hat{\bm{g}}_t$, we subtract a scaled, $L_2$-normalized loss gradient $\gamma \lambda \frac{\nabla \mathcal{L}(\bm{\theta}_t)}{\|\nabla \mathcal{L}(\bm{\theta}_t)\|_2}$. Consequently, the expectation of the actual modified update $\bm{u}_{\text{ablation}}$ applied to the parameters is given by:
\begin{equation}
    \mathbb{E}[\bm{u}_{\text{ablation}}] = \gamma (1-\lambda)\sum_{i=1}^K \frac{\nabla \mathcal{L}_i(\bm{\theta}_t)}{\|\nabla \mathcal{L}_i(\bm{\theta}_t)\|_2}  - \gamma^2 \frac{K-1}{4K} \Big( \nabla_{\bm{\theta}} \sum_{i \neq j} \text{CosSim}(\nabla \mathcal{L}_i, \nabla \mathcal{L}_j) + \bm{\mathcal{E}}_{\text{sym},i,j} \Big) + \bm{\mathcal{E}}_{\text{2nd}}.
\end{equation}
In this formulation, $\lambda \in [0, 1)$ serves as a direct dial to control the regularization strength. As $\lambda$ approaches $1$, the first-order loss-minimization component is explicitly suppressed, effectively amplifying the relative influence of the geometric regularization term.

However, this modified algorithm is intended exclusively for ablation purposes, as it incurs double the computational overhead per step (despite maintaining an identical token budget). Consequently, we evaluate the impact of the regularization-to-loss ratio using the 1B model configuration described in \cref{sec:exp:setting}, sweeping across $\lambda \in \{0, 0.1, 0.2, 0.4, 0.8\}$, where $\lambda=0$ corresponds to the original Nexus algorithm. The results are summarized as follows:

\begin{figure}[t]
    \centering
    \begin{subfigure}[b]{0.32\linewidth}
        \centering
        \includegraphics[width=\linewidth]{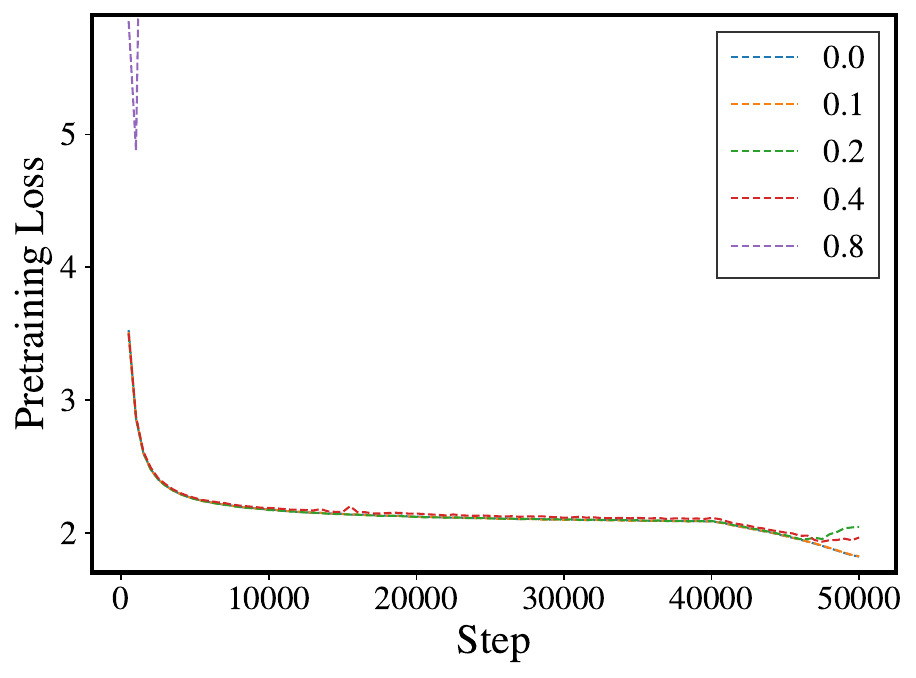}
        \caption{Pretraining Loss}
    \end{subfigure}\hfill
    \begin{subfigure}[b]{0.32\linewidth}
        \centering
        \includegraphics[width=\linewidth]{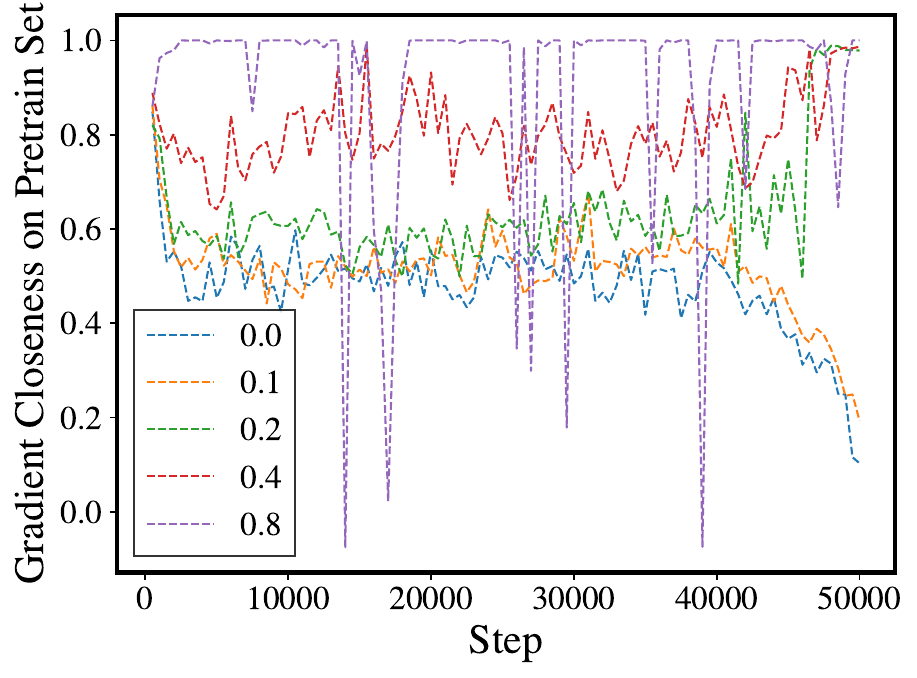}
        \caption{Gradient Similarity (Pretrain Set)}
    \end{subfigure}\hfill
    \begin{subfigure}[b]{0.32\linewidth}
        \centering
        \includegraphics[width=\linewidth]{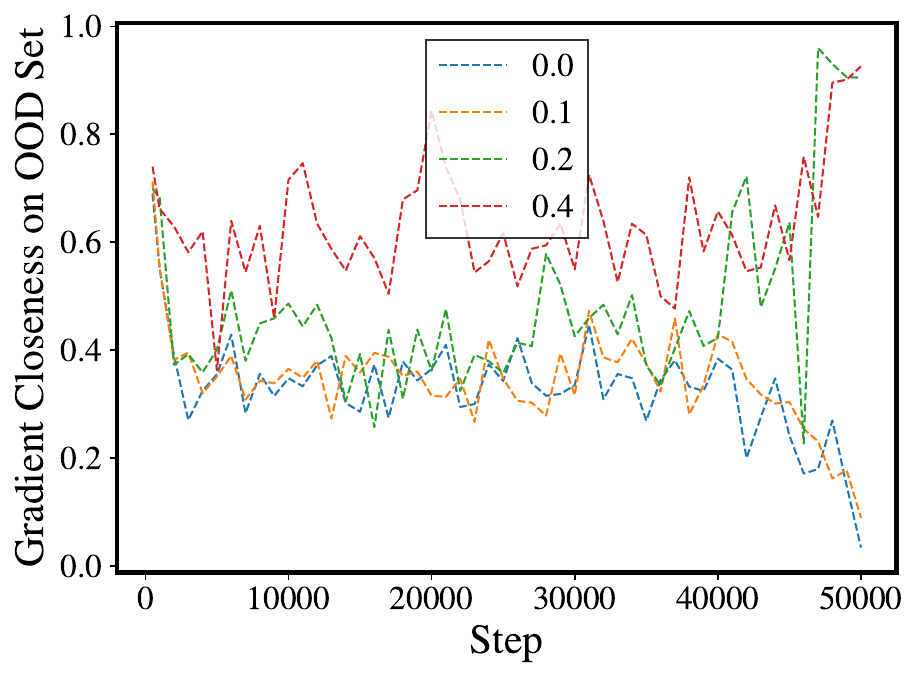}
        \caption{Gradient Similarity (OOD Set)}
    \end{subfigure}
    
    \caption{\textbf{Ablation Results on Regularization Strength $\lambda$.} As the coefficient $\lambda$ increases, there is a monotonic increase in gradient closeness across both training and OOD sets.}
    \label{appendix:fig:ablation_lambda}
\end{figure}

\textbf{Results.} As illustrated in \cref{appendix:fig:ablation_lambda}, we observe a clear monotonic relationship: the higher the $\lambda$, the higher the gradient closeness on both pretraining and OOD sets. This empirical evidence confirms that the Nexus mechanism indeed explicitly maximizes gradient closeness by inner-loop outer-loop mechanism.

\end{document}